\documentclass{article}

\usepackage{microtype}
\usepackage{graphicx}
\usepackage{subcaption}
\usepackage{booktabs} 

\usepackage{hyperref}

\usepackage[preprint]{icml2026}

\usepackage{amsmath,amsfonts,bm}

\def\1{\bm{1}}

\def\eps{{\epsilon}}

\def\rvc{{\mathbf{c}}}

\def\rve{{\mathbf{e}}}

\def\rvm{{\mathbf{m}}}

\def\rvx{{\mathbf{x}}}

\def\rvz{{\mathbf{z}}}

\DeclareMathAlphabet{\mathsfit}{\encodingdefault}{\sfdefault}{m}{sl}
\SetMathAlphabet{\mathsfit}{bold}{\encodingdefault}{\sfdefault}{bx}{n}

\newcommand{\R}{\mathbb{R}}

\usepackage{amsmath}
\usepackage{amssymb}
\usepackage{mathtools}
\usepackage{amsthm}
\usepackage{multirow}
\usepackage{xcolor}
\definecolor{algblue}{RGB}{210,230,255} 
\newcommand{\algbluebox}[1]{%
  \colorbox{algblue}{\parbox{\dimexpr\linewidth-2\fboxsep}{#1}}%
}
\newcommand{\alggreenbox}[1]{%
  \colorbox{green!15}{\parbox{\dimexpr\linewidth-2\fboxsep}{#1}}%
}

\usepackage[capitalize,noabbrev]{cleveref}

\theoremstyle{plain}
\newtheorem{theorem}{Theorem}[section]

\theoremstyle{definition}

\theoremstyle{remark}

\usepackage[textsize=tiny]{todonotes}

\begin{document}

\twocolumn[
  \icmltitle{
  CRoCoDiL: Continuous and Robust Conditioned Diffusion for Language
  }



  \icmlsetsymbol{equal}{*}

  \begin{icmlauthorlist}
    \icmlauthor{Roy Uziel}{equal}
    \icmlauthor{Omer Belhasin}{equal}
    \icmlauthor{Itay Levy}{}
    \icmlauthor{Akhiad Bercovich}{}
    \icmlauthor{Ran El-Yaniv}{}
    \icmlauthor{Ran Zilberstein}{}
    \icmlauthor{Michael Elad}{}
    \\
    \vspace{0.4em}
    \icmlauthor{\mdseries NVIDIA}{}
  \end{icmlauthorlist}


  \icmlcorrespondingauthor{}{\{rouziel, obelhasin, itlevy, abercovich, relyaniv, rzilberstein, melad\} @nvidia.com}

  \icmlkeywords{Machine Learning, ICML}
  \vskip 0.3in]



\printAffiliationsAndNotice{\icmlEqualContribution}

\begin{abstract}

Masked Diffusion Models (MDMs) provide an efficient non-causal alternative to autoregressive generation but often struggle with token dependencies and semantic incoherence due to their reliance on discrete marginal distributions. We address these limitations by shifting the diffusion process into a continuous sentence-level semantic space. We propose \emph{CRoCoDiL} -- Continuous and Robust Conditioned Diffusion for Language --  a unified fine-tuning approach that jointly trains an encoder–demasker architecture, grounding the MDM demasking in continuous latent representations. This leads to the formation of a novel autoencoder in which the decoding is obtained by an MDM algorithm. 
Relying on the same framework, we proceed by introducing two \emph{unconditional} text synthesis algorithms: Continuous-Then-Discrete (\emph{ConThenDisc}), a hybrid-diffusion approach that first generates latent representations in continuous space and then decodes these to tokens via an MDM, and Continuous-Within-Discrete (\emph{ConWithinDisc}), a multi-diffusion strategy that refines latent representations throughout the discrete sampling process.
Experiments using LLaDA show that our methods 
achieve superior generation quality and more than $\times10$ faster sampling speeds in an unconditional
setting.

\end{abstract}

\section{Introduction}


Diffusion-based alternatives to autoregressive large language models have been drawing much attention recently~\cite{li2022diffusion,yi2024diffusion}.
Such methods encompass an appealing potential to break the causal, one-token-at-a-time, paradigm of autoregressive machines, with the general hope to lead to faster and improved quality text synthesis.
The main challenge in bringing diffusion models to text is the evident gap between the continuous formulation of classical diffusion algorithms and the discrete nature of language~\cite{lou2024discrete}.

Earlier work addressed the discrete-continuum gap in a wide variety of ways; Among these, the commonly used ones are based on \emph{masked} diffusion models~\cite{sahoo2024simple,nie2025large,ye2025dream}. This breadth of algorithms relies on a forward degradation process that masks tokens gradually until the whole sequence is masked-out. Text generation is based on a reversed process, in which a demasker iteratively revives tokens, constituting the \emph{Masked Diffusion  Models} (MDMs), such as MDLM~\cite{sahoo2024simple}, LLaDA \cite{nie2025large}, Dream \cite{ye2025dream}, and their many followups, e.g.~\cite{arriola2025block,wu2025fast,liu2025think,liu2025dllm}.



MDMs rely on a demasking model that is trained on partially masked sequences to estimate discrete logits for the missing tokens, representing one-dimensional marginal distributions that lack information on statistical cross-dependencies between tokens. When sampling from these logits, revealing multiple tokens in parallel necessarily produces flawed samples that degrade generation quality~\cite{liu2025discrete}.  Nevertheless, as synthesis speed depends on parallel token sampling, existing algorithms compromise  speed for quality.

Another, related yet different, weakness of MDM algorithms has to do with their core \emph{modus-operandi} of constructing the generated text by sampling individual tokens (separately or jointly) sequentially, and such that they are committed to be part of the final sequence. While appealing due to its resemblance to the autoregressive strategy, 
having no global guidance to drive this overall synthesis, MDM necessarily struggle in forming coherent eventual sentences.

In this paper we propose a novel extension to MDMs that addresses these limitations. Our approach operates in the continuum, using a continuous diffusion model to generate sentence-level semantic representations, while the MDM algorithm serves as a decoder translating these latent vectors into token sequences.
This way, the burden of capturing long-range, cross-token structure is shifted to a lightweight classical diffusion in the latent space. 
This representation is then used to guide the MDM for token decoding, enabling effective multi-token sampling per step by yielding better efficiency-quality tradeoffs in text synthesis. We name this methodology \emph{CRoCoDiL}: Continuous and Robust Conditioned Diffusion for Language. 

\begin{figure}[t]
    \centering
    \includegraphics[width=\columnwidth]{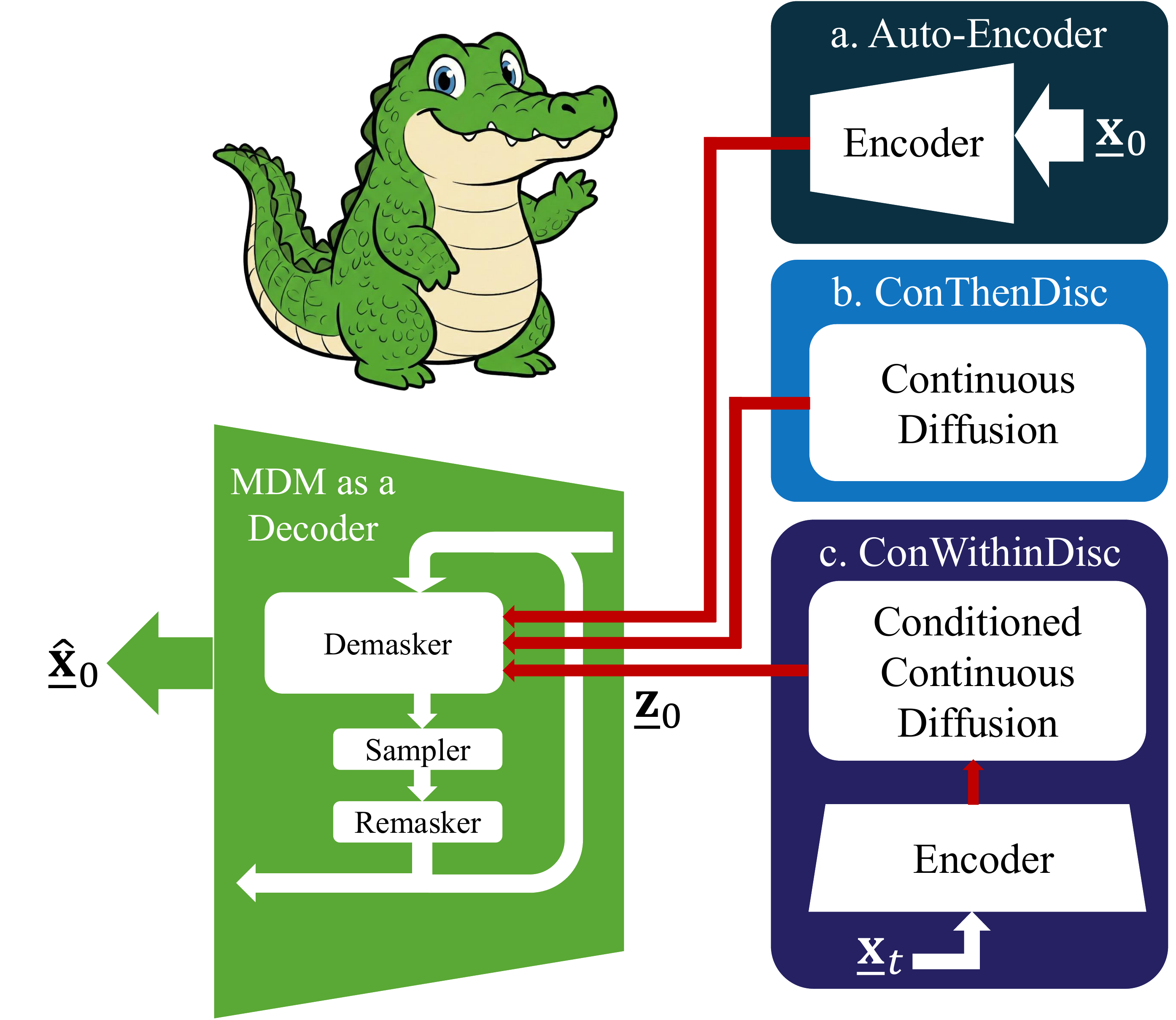}
    \caption{The \emph{CRoCoDiL} framework: Building on a learned encoder of text sequences and a demasker guided by this continuous representation, we introduce (a) an autoencoder and (b,c) two text generation algorithms, \emph{ConThenDisc} and \emph{ConWithinDisc}. A regular MDM serves in all cases as a decoder that converts the latent $\rvz_0$ into a sequence ${\hat \rvx}_0$. The text generation algorithms rely on learned diffusion models that operate in the representation domain. }
    \label{fig:CRoCoDiL}
\end{figure}

Building on this framework, we introduce a unified encoder-demasker training scheme that encodes sequences into latent representations for effective token decoding. We then present two text synthesis algorithms: (1) Continuous-Then-Discrete (\emph{ConThenDisc}) that generates embeddings via continuous diffusion and uses MDM to decode the latent vector into tokens; (2) Continuous-Within-Discrete (\emph{ConWithinDisc}), that updates the guidance vector during the demasking steps using a continuous diffusion trained to recover valid latent vectors from partially masked sequences. We emphasize that the proposed algorithms
are focused on unconditional text generation, leaving conditional synthesis across benchmarks for future work.

We conduct an extensive experimental study using LLaDA-8B~\cite{nie2025large} 
as the base MDM and Qwen-embedding-0.6B~\cite{ren2025qwen3} as an initial encoder, all jointly retrained with our decoder-demasker framework. We first validate the effectiveness of the continuous guidance for MDM by autoencoding, demonstrating faithful reconstruction. We then evaluate our two proposed algorithms for unconditional code synthesis, showing that our methods achieve much faster sampling without quality loss.  

To summarize, the following are the main contributions of this work, as depicted in Figure~\ref{fig:CRoCoDiL}: 
\begin{itemize}
    \item We propose \emph{CRoCoDiL}, a framework that guides discrete MDMs using a continuous, sentence-level semantic guidance, bridging the gap between global coherence and local token dependencies, thus enabling faithful parallel token sampling.
    \item We introduce a general purpose autoencoder that maps accurately sequences to the continuum and back, leaning on MDM as a decoder. 
    \item Consequently, two text synthesis algorithms are proposed: \emph{ConThenDisc} and \emph{ConWithinDisc}, both shift the core generative process into a continuous sentence-level semantic space that serves as a global sketched guide for an MDM. 
    \item We demonstrate superior generation quality and sampling speed with LLaDA 
    with significant gains in unconditional text generation setting.
\end{itemize}

\section{Related Work}




In Appendix~\ref{app:background} we 
provide a broad overview on the field of diffusion models for text generation. In this section we dive into specific recent work that has a direct relevance to this paper's  contributions. 

The work reported in \cite{meshchaninov2025compressed} presents COSMOS, a language generation algorithm that relies on a continuous latent space diffusion. While similar to the main theme of our work, COSMOS differs  from it substantially. 
In particular, the decoder that converts the  embedding to tokens in COSMOS has no generative capabilities, which implies that the latent representation must be fully informative in order to enable proper text synthesis. In contrast, our latent representation serves as a sketch guide that conditions an iterative MDM-based decoding process, and thus even partially informative representations can lead to valid and high quality generated text, as MDM compliments and refines the synthesis process. 

Indeed, in the spirit of the main contrast between COSMOS and our paradigm, the work of \cite{morris2023text} argues that when using embedding representations, decoding must be performed iteratively rather than in a single step, which supports our proposed fusion of continuum and MDM. That said, \cite{morris2023text} is distinct from our work as it focuses on text correction tasks rather than their generation.


Another related work is reported in \cite{arriola2025encoder}, presenting an autoencoding framework, referred to as E2D2. Under a conditional synthesis setup in which the model receives a prompt and is required to provide an answer, E2D2 encodes the prompt to a continuous vector and uses it to guide a fully discrete MDM decoder that constructs the response. As the synthesis of the answer relies on a plain MDM, the statistical cross-token dependencies are not taken into account -- a problem that we tackle in this work. 

The algorithms reported in~\cite{liu2025discrete,xu2025energy,xie2025variational} tackle the problem of joint token sampling in MDM, as in our work. The first handles the missing dependencies by incorporating a copula model, the second augments the demasker with a learned energy model, and the third introduces a Gaussian-distributed latent variable for accounting for the token dependencies. All concentrate on small scale base models for offering improvements in text synthesis speed or quality.  
A related yet different line of reasoning towards the very same goal appears in~\cite{azangulov2025parallel,luxembourg2025plan}, presenting inference-only strategies for prioritizing the order of unmasked tokens so as to avoid too-dependent ones to be sampled jointly. These methods are inherently limited, as they seek weakly correlated tokens, which do not necessarily exist. In addition, 
these inference algorithms are tightly coupled with their base models, operating semi auto-regressively with small block-sizes, thus limiting their achievable gain. In contrast to the above, our work aims to fully harness the potential of diffusion models for language, aiming to override the speed and text-quality barriers of MDM. This is achieved by injecting informative guidance to MDM such that it can both handle cross-token dependencies, while also providing a synthesized sketch for the text to be generated.

\section{Problem Formulation and Background}
\label{sec:background}

Let $\rvx = (x^1, x^2, \dots, x^n)$ be a discrete random vector of $n$ tokens, where each $x^i$ belongs to a vocabulary $\mathcal{V}$.
We assume text sequences are sampled from an unknown joint data distribution $q_{\text{data}}$, and our objective is to learn a generative model capable of synthesizing samples from $q_{\text{data}}$. 

Following recent work on discrete diffusion methods, we adopt the masked diffusion modeling (MDM)~\cite{sahoo2024simple} framework.
We augment the vocabulary with a special mask token \texttt{[M]} and define the fully masked vector as $\rvm = (m^1, m^2, \dots, m^n)$, where $m^i := \texttt{[M]}$ for all $i$.
The generative algorithm begins with a forward diffusion process that gradually degrades a clean sequence. In MDM, this occurs via progressive masking, factorized across tokens as
\begin{equation}
    q(\rvx_t | \rvx_0) = \prod_{i=1}^n q(x_t^i | x_0^i),
\end{equation}
where each $q(x_t^i | x_0^i)$ defines an independent categorical corruption process interpolating between a clean sample $\rvx_0 \sim q_{\text{data}}$ and the masked vector $\rvm$:
\begin{equation}
    q(x_t^i | x_0^i) := \alpha_t \mathbf{e}_{x_0^i} + (1 - \alpha_t) \mathbf{e}_{\texttt{[M]}}.
\end{equation}
Here, $\alpha_t \in [0,1]$ is a strictly decreasing noise schedule over time $t \in [0,1]$, with $\alpha_0 \approx 1$ and $\alpha_1 \approx 0$. The notation $\mathbf{e}_j$ denotes the one-hot encoding of the $j$-th vocabulary index.

Generative sampling is achieved by reversing the above-described forward process. For any pair of time-steps $0 \leq s < t \leq 1$, knowledge of the posterior distribution $q(\rvx_s | \rvx_t)$ would have enabled synthesis.
However, this reverse conditional is intractable. Following prior work~\cite{ho2020denoising,sahoo2024simple}, we consider the conditional reverse transition $q(\rvx_s | \rvx_t, \rvx_0)$, assuming $\rvx_0$ is known. With the knowledge of $\rvx_0$, this reverse conditional admits a factorized form without loss of generality, 
\begin{equation}
\label{eq:fac_reverse}
    q(\rvx_s | \rvx_t, \rvx_0) := \prod_{i=1}^n q(x_s^i | x_t^i, x_0^i),
\end{equation}
and $q(x_s^i | x_t^i, x_0^i)$ has a closed-form solution. For example, for MDLM~\cite{sahoo2024simple}, it is given via
\begin{equation}\nonumber
\label{eq:factor_i}
    q(x_s^i | x_t^i, x_0^i) := 
    \begin{cases} 
        \mathbf{e}_{x_t^i} & \text{if } x_t^i \neq \texttt{[M]}, \\ 
        \frac{1 - \alpha_s}{1 - \alpha_t} \mathbf{e}_{\texttt{[M]}} + \frac{\alpha_s - \alpha_t}{1 - \alpha_t} \mathbf{e}_{x_0^i} & \text{if } x_t^i = \texttt{[M]}.
    \end{cases}
\end{equation}
In practice, $\rvx_0$ is unknown and must be estimated from $\rvx_t$, and the way to do so necessarily passes through the approximation of the joint distribution $q(\rvx_0 | \rvx_t)$. However, directly modeling this distribution is intractable as well, due to the combinatorial explosion of token options, covering $\mathcal{O}(|\mathcal{V}|^{n})$ possible combinations.

To address this, MDM employs a demasking model $f_\theta: (\mathcal{V} \cup \{\texttt{[M]}\})^n \times \mathbb{R} \rightarrow \mathbb{R}^{n \times |\mathcal{V}|}$, that estimates marginal distributions for masked tokens. Formally, $f_\theta^i(\rvx_t, t) := p_\theta(x_0^i | \rvx_t)$ approximates $q(x_0^i | \rvx_t)$ when $x_t^i = \texttt{[M]}$, and returns $\mathbf{e}_{x_t^i}$ otherwise. 
Given these estimated marginals, we obtain a clean sequence prediction by sampling the tokens independently through $\hat{x}_0^i \sim f_\theta^i(\rvx_t, t)$.
These predicted tokens are then substituted into Equation~\eqref{eq:fac_reverse}, 
yielding the effective reverse posterior:
\begin{equation}
\label{eq:effective_reverse}
    p_\theta(\rvx_s | \rvx_t) := \prod_{i=1}^n q(x_s^i | x_t^i, x_0^i = \hat{x}_0^i).
\end{equation}
The factorized approximation in Equation~\eqref{eq:effective_reverse} has a well-known fundamental limitation~\cite{liu2025discrete}: As the demasking model only estimates marginal distributions $q(x_0^i | \rvx_t)$, independent sampling from these marginals fails to capture cross-token dependencies and semantic correlations across multiple masked positions.
In Appendix~\ref{app: theoretical analysis} we  characterize these limitations in recovering the true joint distribution $q(\rvx_0|\rvx_t)$ even under optimal demasking model.

\section{Continuously Guided MDM}
\label{sec:method1}

We now turn to introduce the \emph{CRoCoDiL} framework that aims to bridge the modeling gap between the desired reversed in Equation~\eqref{eq:fac_reverse} and the practical approximation posed in Equation \eqref{eq:effective_reverse}.
Another benefit of our strategy is the introduction of a sketched guidance to the discrete generation process, which further boosts synthesis quality. 

Our solution relies on a guidance mechanism for the demasking model, derived from a continuous latent representation of the clean sample $\rvx_0$. This approach enables the model to incorporate global context and cross-token statistical dependencies, bypassing the dimensionality barrier of modeling the joint conditional $q(\rvx_0|\rvx_t)$ explicitly.
We start by introducing a training framework that constructs an embedding model for turning a token sequence $\rvx_0$ into a corresponding continuum latent vector, and allowing the MDM's demasker to be effectively guided by it. Equipped with this machinery, we present three guided and fast MDM algorithms:

(i) A general-purpose autoencoder scheme that converts sequences of tokens to the continuum and back, where the encoder is the one described above and the decoder is an MDM algorithm; 

(ii) A novel text synthesis algorithm, \emph{ConThenDisc} (Continuous-Then-Discrete), that generates a valid embedding vector via a continuous diffusion and decodes it by a fast MDM, as in the above autoencoder scheme; and 

(iii) A refined \emph{ConThenDisc} in which the guidance vector is updated within the MDM steps by a conditional diffusion that operates in the embedding domain. We term this method \emph{ConWithinDisc} (Continuous-Within-Discrete). 

Common to all three methods is the fact that the MDM may operate in a multi-token sampling regime per step, enabled due to the continuum guidance, and thus becoming much faster than the vanilla MDM alternative. The continuous diffusion and the encoding within \emph{ConThenDisc} and \emph{ConWithinDisc} are relatively lightweight, baring small impact on the overall generation complexity.  

\subsection{Continuously Guided  Demasking}
\label{sec:guidance-learning}

Consider an encoder model that maps a discrete sequence $\rvx_0$ into a continuous latent representation $\rvz_0\in \R^d$. Assume further that this continuous latent vector is learned so as to serve as an informative guidance for the demasking process in MDM, enabling the recovery of cross-token  dependencies among multiple masked positions. Herein we offer a training framework for this constellation. 

{\bf Encoder.} 
We define an encoder $h_\phi: \mathcal{V}^n \rightarrow \mathbb{R}^d$, parameterized by $\phi$, that maps a clean sequence $\rvx_0 \in \mathcal{V}^n$ to a continuous representation $\rvz_0 \in \mathbb{R}^d$, denoted as $\rvz_0 = h_\phi(\rvx_0)$.
This encoder is trained to capture the essential information of the input sequence in continuous form, so as to enable (even a partial\footnote{In our work $\rvz_0$ does not have to be a one-to-one representation of $\rvx_0$, as in COSMOS~\cite{meshchaninov2025compressed}.}) reconstruction back to its discrete form.

{\bf Guided Demasker.} 
We propose a conditional demasking model $f_\theta: (\mathcal{V} \cup \{\texttt{[M]}\})^n \times \mathbb{R}\times \mathbb{R}^d \rightarrow \mathcal{R}^{n\times |\cal{V}|}$, parametrized by $\theta$, that predicts a clean data sample $\rvx_0$ from a partially masked sequence $\rvx_t$, conditioned on a latent representation $\rvz_0 \in\R^d$ of the very same clean sequence $\rvx_0$. 
The decoder outputs the distributions\footnote{We allow for a slightly abused notations by using $f_\theta$ to refer to both the original demasker and the guided one. The difference between the two is whether the guidance $\rvz_0$ is an additional input. } $f_\theta^i(\rvx_t, t, \rvz_0) $ 
that approximate the true marginals $q(x_0^i | \rvx_t, \rvz_0)$ when $x_t^i = \texttt{[M]}$, and $f_\theta^i(\rvx_t, t, \rvz_0) := \mathbf{e}_{x_t^i}$ otherwise. 

{\bf Training Objective.} 
The encoder and demasker are jointly trained to minimize the following loss:
\begin{align}
\label{eq:loss-trainingdemasker}
    \mathcal{L}(\theta,\phi) = \mathbb{E}_{t, \rvx_0, \rvx_t} \left[ -w_{\rvx_t} \cdot \right. \hspace{1.3in} \\
    \left. \nonumber
    \frac{1}{n} \sum_{i=1}^n  \log 
    \left[ f_\theta^i(\rvx_t ,t, h_\phi(\rvx_0))\right]_{x^i_0}
    \right],
\end{align}
where $w_{\rvx_t}$ denotes the weight that prioritizes clean sequences over corrupted ones. For example, in LLaDA \cite{nie2025large}, the weights are defined as $w_{\rvx_t}^i := 1/\alpha_t$. 

The proposed loss in Equation~\ref{eq:loss-trainingdemasker} is constructed by the following chain of steps: We start by sampling a clean sequence $\rvx_0$ from the training set, then choose $t$ at random from the uniform $[0,1]$ distribution, and generating $\rvx_t$ by randomly masking appropriate portion from $\rvx_0$. The demasker operates on $\rvx_t$, $t$, and $h_\phi (\rvx_0)$ (the embedding of the original sequence $\rvx_0$). 
Its output at each location $i$ is a logit, in which its $x_0^i$ location should be as high as possible to indicate a preference to the true token. Optimizing over both the parameters of the embedder and the demasker, we drive the output of the demasker to be as close as possible to the original $\rvx_0$. Note that the  same expression as in Equation~\ref{eq:loss-trainingdemasker} applies to regular MDM with one difference: there is no conditioning on $h_\phi (\rvx_0)$. As such, the guided machine provides better predictions of the true tokens, implicitly accounting for their statistical cross-dependencies.

Figure~\ref{fig:TrainingMethod} illustrates the training framework for the embedding and the guided demasker. Algorithm~\ref{alg:training} provides a summary of one training step. Observe a slight change in lines 5-6 of this algorithm: A random Gaussian noise is added to $\rvz_0$, setting its variance via a target value for the cosine-similarity,  $CS(\rvz_0,\rvz_0+\rve)=0.8$. 
As shown in Appendix~\ref{app:RobustnessAblation}, this step is critical to the success of the training results, as it introduces a robustification of the embedding obtained and its influence on the demasker predictions. 
\begin{algorithm}[htb]
  \caption{ Robust Encoder-Demasker Training Step}
  \label{alg:training}
  \begin{algorithmic}[1]
    \STATE {\bfseries Input:} Clean sequence $\rvx_0 \sim p_{\text{data}}$
    \STATE Sample timestep $t \sim \mathcal{U}([0,1])$
    \STATE Generate $\rvx_t$ by masking $\rvx_0$ according to noise $\alpha_t$
    \STATE Encode latent representation $\rvz_0 = h_\phi(\rvx_0)$
    \STATE Draw a random white Gaussian noise $\rve$
    \STATE Compute $\hat{\mathcal{L}}(\theta,\phi) = \frac{w_{\rvx_t}}{n} \sum_{i=1}^n  \log f_\theta^i(\rvx_t, t, \rvz_0+\rve)$
    \STATE Update parameters via $\nabla_{\theta,\phi} \hat{\mathcal{L}}(\theta,\phi)$ using optimizer
  \end{algorithmic}
\end{algorithm}

\begin{figure}[t]
    \centering
    \includegraphics[width=\columnwidth]{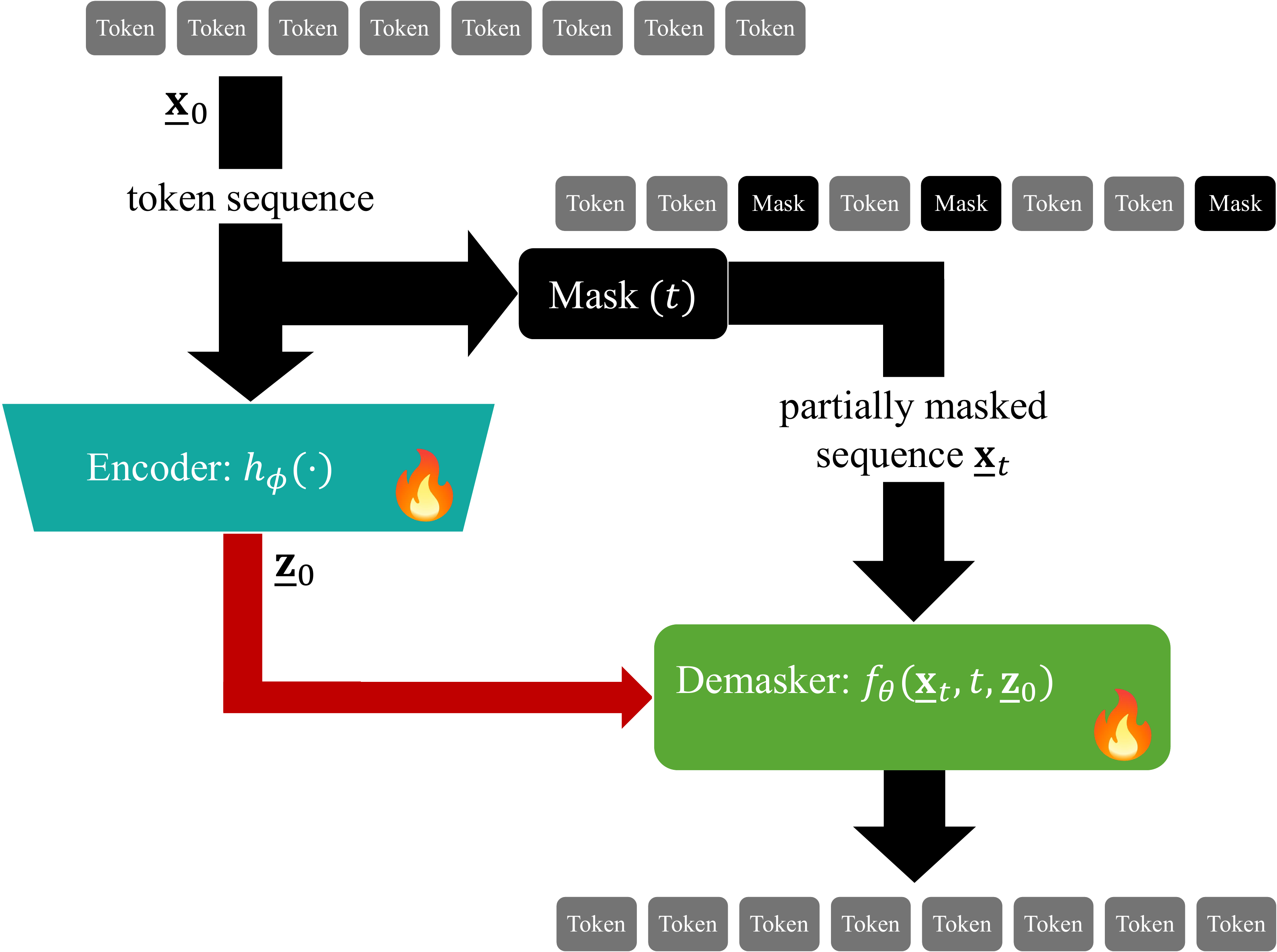}
    \caption{The training framework for the embedding network $\rvz_0 = h_\phi(\rvx_0)$ and the guided demasker $f_\theta(\rvx_t,t,\rvz_0)$. Flame  indicates a trained network.}
    \label{fig:TrainingMethod}
\end{figure}

This encoder-demasker framework 
recovers the essential cross-token dependencies during unmasking, enabling the effective factorized reverse transition in Equation~\eqref{eq:effective_reverse}.
Theorem~\ref{theo:Factorizability} in Appendix~\ref{app: theoretical analysis} formally justifies this claim. 

\subsection{MDM-Based Autoencoder}
\label{sec:auto-encoder-gen}

We now utilize the trained encoder-demasker framework
to bridge between discrete and continuous representations.
Specifically, we aim to transform a clean discrete sequence $\rvx_0$ into a continuous representation and reconstruct it back as accurately as possible to its original discrete form.

Given a clean input sequence $\rvx_0 \sim p_{\text{data}}$, we first encode it into the continuous latent via $\rvz_0 = h_\phi(\rvx_0)$.
To reconstruct it, we deploy a full yet fast (using fewer demasker activations) MDM process, in which the demasker $f_\theta$ iteratively refines the sequence, initialized by the fully masked vector $\rvm$, and applying $T$ discrete diffusion steps, all conditioned on the representation $\rvz_0$. As in regular MDM, 
at each timestep $t$, the demasker predicts the clean tokens independently, $\hat{\rvx}_0^i \sim f_\theta^i(\rvx_t,t,\rvz_0)$, which are then masked according to the forward diffusion process for the next iteration.

This general purpose autoencoder enables flexible processing of sequences in the continuous domain for generation, interpolation, or other downstream tasks, while maintaining the capability to recover faithful discrete outputs. In the context of the discussion in this paper, this autoencoder serves as a stepping stone towards the following hybrid text synthesis algorithms, which build on it. 

\subsection{Hybrid Text Generation Strategies}
\label{sec:generation}

We now turn to the main contribution of this section: introducing two hybrid (fusion of continuous and discrete) text generative strategies that take advantage of core MDM while also leaning on the availability of the newly formed continuously guided demasker. We start with \emph{ConThenDisc}, and proceed with it's improvement, the \emph{ConWithinDisc} method, both targeting unconditional text synthesis.  

\subsubsection{Continuous-Then-Discrete}
\label{sec:continuous-then-discrete}

In Algorithm~\ref{alg:auto-encoding}, which  describes the proposed autoencoder, a given sentence $\rvx_0$ is converted to a continuous latent representation $\rvz_0$, followed by a decoding stage that relies on a full MDM, aiming to recover the original $\rvx_0$. 
Building on this very configuration, the \emph{ConThenDisc} algorithm suggests producing $\rvz_0$ differently; Rather than leaning on a given sentence $\rvx_0$, we suggest to generate it by randomly drawing from it's  corresponding distribution $\rvz_0 \sim P(\rvz)$. In practice, $\rvz_0$ is synthesized via a pre-trained continuous diffusion generator. 

Algorithm~\ref{alg:ConThenDisc-Inference} describes the \emph{ConThenDisc} text synthesis method. Activating the continuous diffusion algorithm\footnote{
Appendix ~\ref{app: Continuous Diffusion} describes $G_\psi (\eps)$ in more details.}, $G_\psi (\eps)$, we obtain a valid latent sample $\rvz_0 \sim P(\rvz)$. 
We proceed by decoding it to a sequence of tokens by applying a complete MDM algorithm -- represented by the green lines in Algorithm~\ref{alg:ConThenDisc-Inference}. Note that in generating the latent $\rvz_0$, a knowledge of the desired sequence length can be injected. This can be done by a slightly different design of the continuous diffusion that includes a conditioning on this length. In this work we do not implement this option, and allow any generation length, dictated by the MDM. 

\begin{algorithm}[htb]
  \caption{Continuous-Then-Discrete Text Synthesis}
  \label{alg:ConThenDisc-Inference}
  \begin{algorithmic}[1]
    \STATE {\bfseries Input:} \\
    \algbluebox{%
        \STATE $\epsilon \sim \mathcal{N}(0,I)$
        \STATE $\rvz_0 \leftarrow G_\psi(\epsilon)$%
      }
    \alggreenbox{
    \STATE $t = 1$
    \STATE $\rvx_t := \rvm$
    \WHILE{$t>0$}
      \STATE $\hat{\rvx}_0 \sim f_\theta(\rvx_t,t,\rvz_0)$
      \STATE $t \leftarrow t - 1/T$
      \STATE $\rvx_{t} = \operatorname{Forward}(\hat{\rvx}_0, t)$ 
    \ENDWHILE
    \STATE \textbf{return} $\hat{\rvx}_0$
    }
  \end{algorithmic}
\end{algorithm}

\subsubsection{Continuous-Within-Discrete}
\label{sec:continuous-within-discrete}

A delicate weakness (and thus an unexploited opportunity) in Algorithm~\ref{alg:ConThenDisc-Inference} is the fact that the guidance is kept fixed throughout the $T$ iterations, even though the sequence $\rvx_t$ is available, giving additional yet partial information about the text to be created. The \emph{ConWithinDisc} algorithm aims to leverage this opportunity, by updating the guidance vector within the MDM steps. More specifically, within each demasking step, the guidance vector can be updated by drawing from the conditional distribution $\rvz_0\sim P(\rvz| h_\phi (\rvx_t))$. In words, the guidance vector is sharpened to take into account the currently held temporal sequence $\rvx_t$. Algorithm~\ref{alg:Continuous-Within-Discrete} provides a description of this variant, and 
Figure~\ref{fig:Generators} presents \emph{ConThenDisc} and \emph{ConWithinDisc}, highlighting their difference. 

Few comments are in order: (i) The update of $\rvz_0$ can be done in a pre-selected subset of the overall $T$ steps, in order to benefit from the improved guidance while reducing the overall complexity of the generative algorithm; (ii) In drawing the guidance vector, the conditioning we present leans on the \emph{embedding} of the partially masked sequence $\rvx_t$, i.e. $\rvz_0\sim P(\rvz| h_\phi (\rvx_t))$. Rather, we could have  conditioned the distribution directly on $\rvx_t$; 
(iii) In training the conditional diffusion (Algorithm~\ref{alg:Continuous-Within-Discrete Training}), we use $h_\phi(\rvx_t)$ for embedding the partially masked sentence. However, this encoder was not trained for such masked content. An improved strategy would be to define a second encoder $h_\mu(\rvx_t)$ to be used in Line 7, defining the loss as $\hat{\mathcal{L}}(\mu,\psi) = \|g_\psi(\rvz_s,h_\mu(\rvx_t)- \rvz_0\|^2_2$, and optimizing it w.r.t. both $\mu$ and $\psi$; and (iv)  Drawing samples from the conditional distribution $\rvz_0\sim P(\rvz| h_\phi (\rvx_t))$ can be interpreted as a solution of an inverse problem. Given a prior  $P(\rvz)$, and given measurements $h_\phi (\rvx_t)$, our goal is to produce posterior samples that recover the $\rvz_0$ that led to the given measurements. 

\begin{figure}[htb]
    \centering
    \includegraphics[width=\columnwidth]{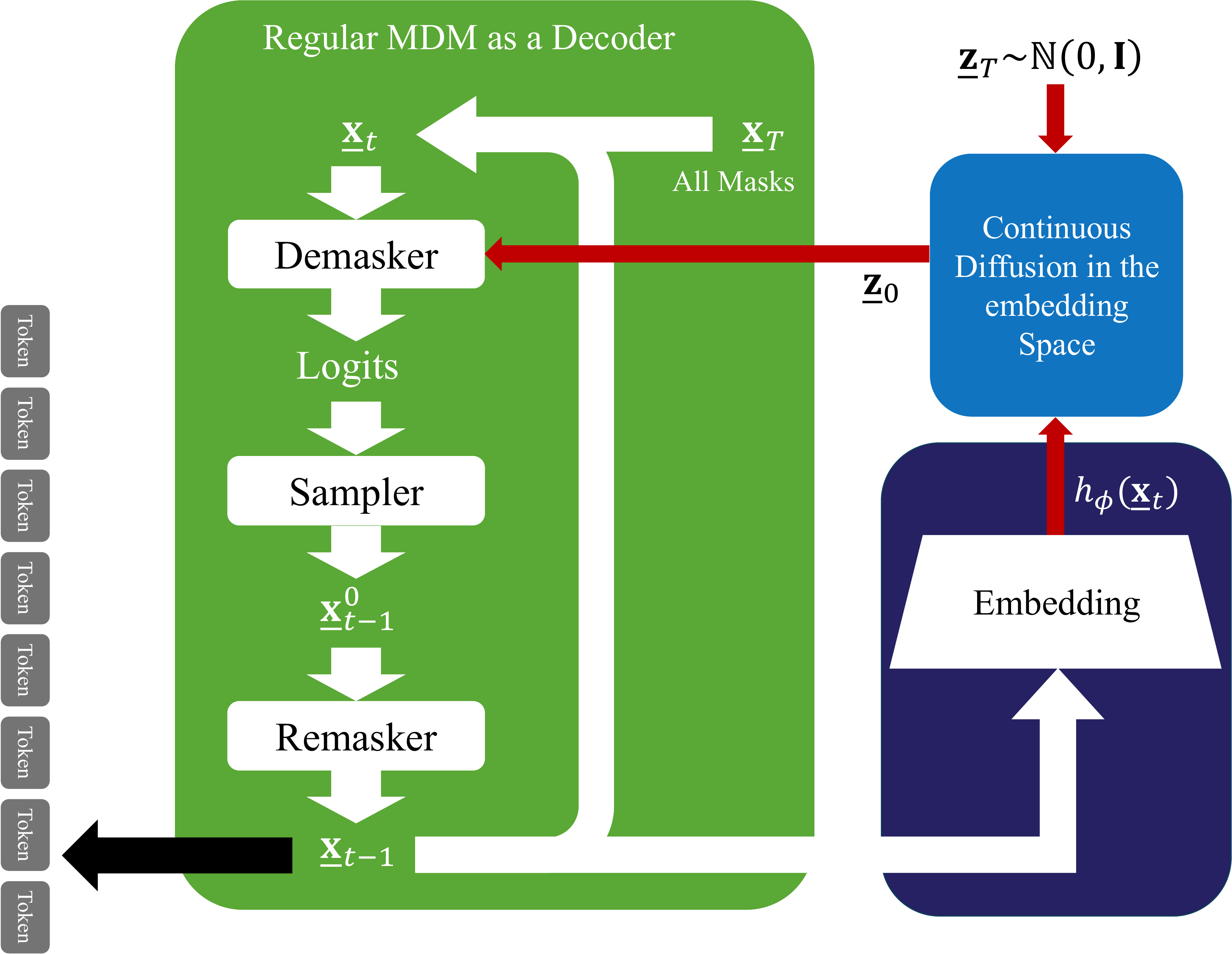}
    \caption{The \emph{ConThenDisc} and \emph{ConWithinDisc} text generation algorithms. In both, a continuous diffusion model generates a starting latent $\rvz_0$, which is decoded to tokens by a regular MDM. \emph{ConWithinDisc} includes a refinement of the latent (the purple part) based on the partially synthesized text.}
    \label{fig:Generators}
\end{figure}

\begin{algorithm}[htb]
  \caption{Continuous-Within-Discrete}
  \label{alg:Continuous-Within-Discrete}
  \begin{algorithmic}[1]
    \STATE {\bfseries Input:}  \\
     \alggreenbox{
    \STATE $t = 1$
    \STATE $\rvx_t := \rvm$
    \WHILE{$t>0$}
        \algbluebox{
            \STATE $\epsilon \sim \mathcal{N}(0,I)$
            \STATE $\rvz_0 \leftarrow G_\psi(\epsilon,h_\phi(\rvx_t))$
        }
        \STATE $\hat{\rvx}_0 \sim f_\theta(\rvx_t,t,\rvz_0)$
        \STATE $t \leftarrow t - 1/T$
        \STATE $\rvx_{t} = \operatorname{Forward}(\hat{\rvx}_0, t)$ 
    \ENDWHILE
    }
    \STATE {\bfseries return} $\hat{\rvx}_0$
  \end{algorithmic}
\end{algorithm}

\begin{algorithm}[tb]
  \caption{Continuous-Within-Discrete Training}
  \label{alg:Continuous-Within-Discrete Training}
  \begin{algorithmic}[1]
    \STATE {\bfseries Input:} data $\rvx_0 \sim P(\rvx)$, 
    \STATE $\rvz_0 = h_\phi(\rvx_0)$
    \STATE Sample $s \sim \mathcal{U}([0,1])$
    \STATE $\rvz_s = \operatorname{Forward}(\rvz_0, s)$
    \STATE Sample $t \sim \mathcal{U}([0,1])$
    \STATE $\rvx_t \leftarrow$ mask each token with probability $t$
    \STATE $\hat{\mathcal{L}}(\phi,\psi) = \|g_\psi(\rvz_s,h_\phi(\rvx_t)- \rvz_0\|^2_2$
    \STATE Backpropagate on $\nabla_{\psi} \hat{\mathcal{L}} (\phi,\psi)$ and run optimizer
  \end{algorithmic}
\end{algorithm}
\section{Experimental Results}

{\bf Guided-Demasker}

In all the reported experiments hereafter we worked with LLaDA, tuned to generate Python programs. 
Pre-training their demasker $f_\theta(\rvx_t,t)$ was done with $12$ million Python programs of varying lengths in the range $[0,4096]$ tokens taken from Python subset of the StarCoder Dataset \cite{li2023starcoder}, initializing the training with the open-source base version. This model thus generates similar programs of varying lengths, with BOS and EOS tokens to indicate their beginning and ending, correspondingly. 

Building on the above as a baseline, we trained $f_\theta(\rvx_t,t,\rvz_0)$ and $\rvz_0 = h_\phi(\rvx_0)$
as described in Section \ref{sec:guidance-learning}. 
The embedding $h_\phi(\cdot)$ was initialized with a Qwen embedding~\cite{ren2025qwen3}, and refined via the training. The latent output of this model is of size $d = 1024\times K$, where $1\le k \le 128$ represent a number of representation registers. A dropout training strategy was applied with preference to the first registers in order to enable varying size latent representations. More details on this process are found in Appendix \ref{app:Encoder-Demasker-Training}.

Figure~\ref{fig:demasker-performance} presents the obtained Cross-Entropy and the probability of recovering the true tokens, evaluated on a validation set of $1000$ programs of lengths $[0,4096]$ tokens, covering the base LLaDA model demasker and three conditioned versions of it fed with $K=8$, $64$ and $128$ embedding registers. The horizontal axis shows the mask probability, were $0$ stands for no masking and $1$ for fully masked sentences. As expected, the Cross-Entropy deteriorates for all models with more aggressive masks. The conditioning improves the overall performance, with a gap that grows with more latent registers. 
Very similar conclusions can be drawn for the bottom graph: conditioning improves the obtained accuracies, and more latent registers are beneficial. 

\begin{figure}[t]
    \centering
    \includegraphics[width=0.49\textwidth]{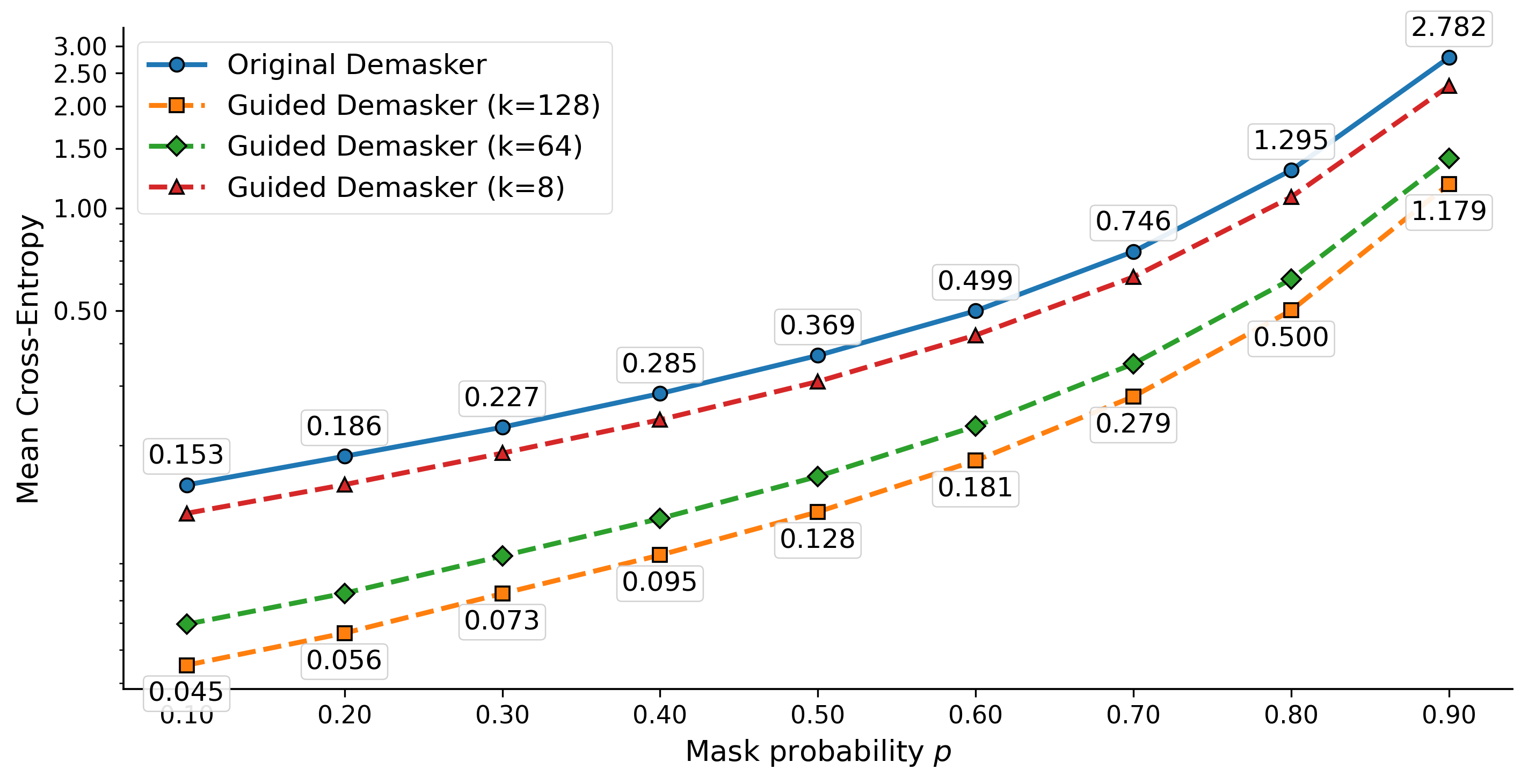}\\
    \includegraphics[width=0.49\textwidth]{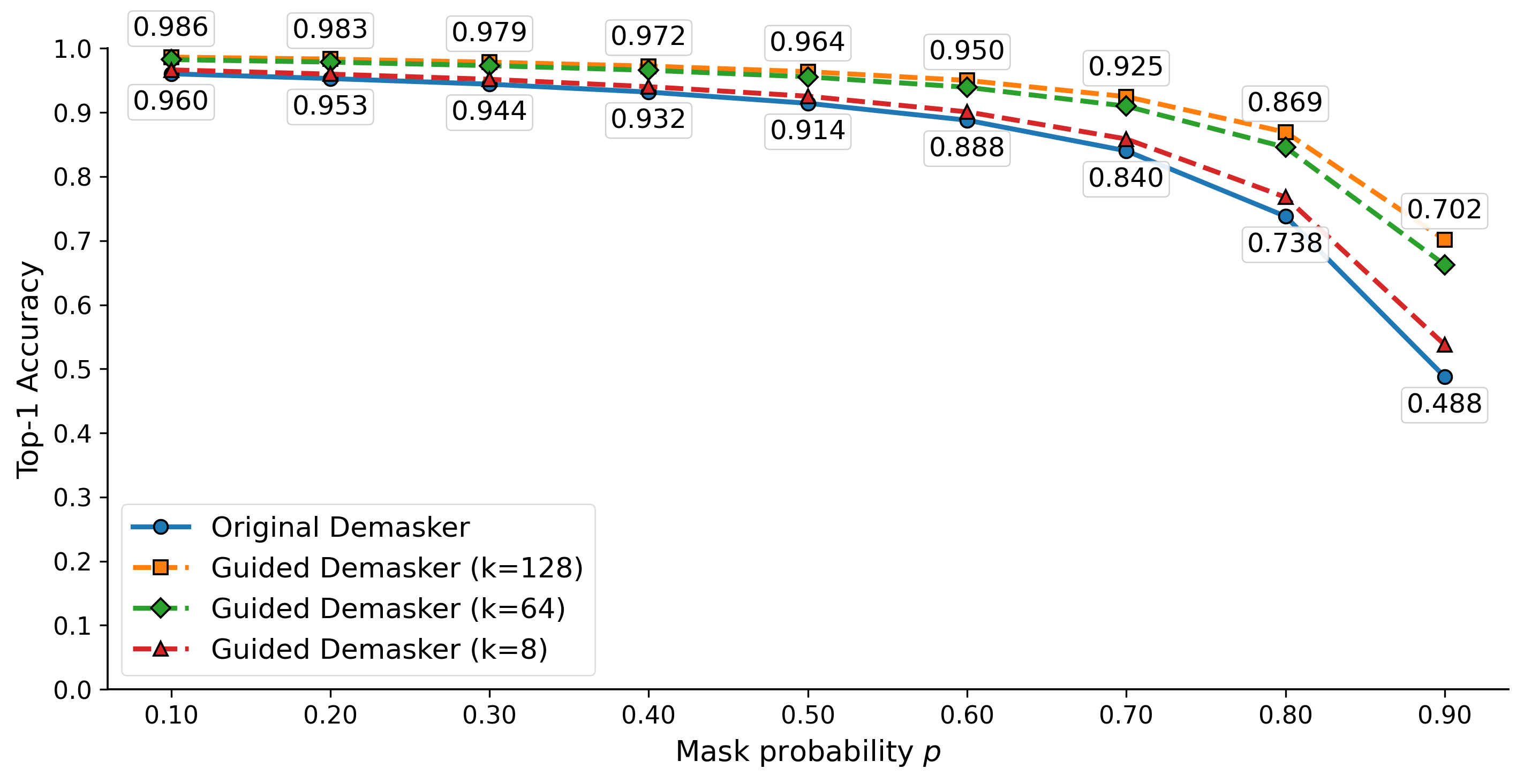}
    \caption{\textbf{Guided-Demasker:} Cross-Entropy (top) and top-1 token prediction performance versus masking probability ($0$ for no-mask), measured on a validation set of $1000$ sequences.}
    \label{fig:demasker-performance}
\end{figure}


{\bf MDM-Based Autoencoder}

Equipped with a trained conditioned demasker $f_\theta (\rvx_t,t,\rvz_0)$ and an embedding model $\rvz_0 = h_\phi(\rvx_0)$, we now turn to evaluate the autoencoder scheme, in which a Python program $\rvx_0$ is encoded to a continuum latent, and an MDM algorithm decodes it back to tokens. The hyper-parameters governing the MDM are the sequence length, the block-size, and the NFE (Neural Function Evaluations) - the overall number of demasker activations, which is governed by number of unmasked tokens per step. For example, for a generated length of $256$ tokens, block size$=32$ implies that there are $8$ blocks, and NFE=$32$ means that we apply $4$ demasking steps within each block, thus reviving $8$ tokens in each step. Figure~\ref{fig:ProgramIn-ProgramOut-More} show an example source program and its encoded-decoded result, which is a nearly perfectly reconstructed, even if the MDM is applied with few NFE. 

\begin{figure*}[htb]
    \centering
    \includegraphics[width=1.0\textwidth]{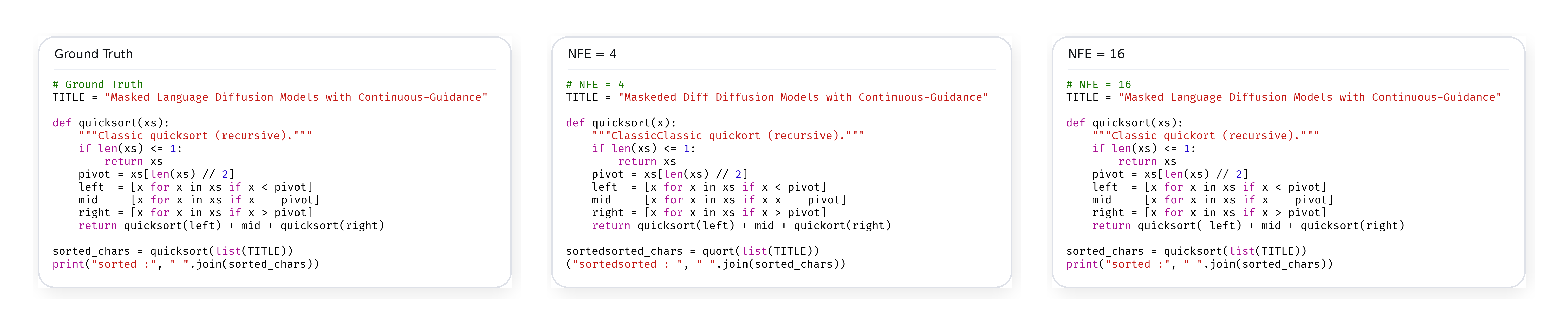}
    \caption{\textbf{Autoencoder:} A source program and its encoded-decoded outcome (Gen-Length=$256$ tokens as one block). The two outcomes correspond to NFE=$4$ (CER=$0.12$) and NFE=$16$ (CER=$0.03$).} 
    \label{fig:ProgramIn-ProgramOut-More}
\end{figure*}

Table~\ref{tab:gen_metrics256} brings representative results for the performance of this autoencoder, operating on sequences of length $256$ tokens using a latent of size $1024\times 128$, and varying the MDM's hyper-parameters.  The table reports Generative Perplexity (Gen-PPL) that measures how coherent or ``likely'' the generated text is. Also reported are Bert scores and Character Error Rate (CER). The first evaluates semantic code similarity using CodeBERTScore~\cite{zhou2023codebertscore}, an embedding-based metric that aligns contextual token representations from a pretrained model via cosine similarity. 
Character Error Rate (CER)~\cite{jurafsky2009speech} is defined as the normalized Levenshtein edit distance between the generated code string and the reference code string at the character level. CER counts the minimum number of single-character insertions, deletions, and substitutions required to transform the prediction into the reference, divided by the number of characters in the reference.

As can be see in in this table, varying LLaDA's block-size and overall number of NFE, nearly perfect synthesized text is obtained, even for very low NFE, and surprisingly, this behavior strengthens with larger block-size. Increasing the number of demasking steps steadily improves reconstruction, reaching CER around 0.10 and CodeBERTScore F1 around 0.96 to 0.97. The combination of low CER and high ``CodeBERTScore'' suggests that the remaining differences are concentrated on whitespace, formatting, or identifier naming, rather than major semantic changes. 

More details and results referring to this autoencoder are brought in Appendix~\ref{app: autoencoder}. 

\begin{table}[h]
\centering
    \caption{\textbf{Autoencoder:} Performance measured via Generative Perplexity, and recovery error evaluated via Bert-Score and Character Error Rate (CER). The Table explores varying hyper-parameters of the MDM decoder, for a generative length of $256$ tokens. }
    \label{tab:gen_metrics256}
    \scriptsize
\begin{tabular}{lrrrrr}
\hline
\textbf{Block} & \textbf{NFE} & \textbf{Gen-PPL} & \textbf{Bert-Score} & \textbf{CER} \\ \hline
32  & 8   & 59.538 & 0.901 & 0.422 \\
32  & 16  & 20.263 & 0.936 & 0.210 \\
32  & 32  & 13.525 & 0.957 & 0.150 \\
32  & 64  & 11.301 & 0.968 & 0.130 \\
32  & 128 & 10.401 & 0.970 & 0.123 \\
32  & 256 & 10.085 & 0.972 & 0.118 \\ \hline
64  & 8   & 28.172 & 0.925 & 0.265 \\
64  & 16  & 15.553 & 0.951 & 0.171 \\
64  & 32  & 12.463 & 0.963 & 0.140 \\
64  & 64  & 10.971 & 0.970 & 0.124 \\ \hline
128 & 8   & 18.027 & 0.946 & 0.190 \\
128 & 16  & 13.244 & 0.960 & 0.149 \\
128 & 32  & 11.566 & 0.968 & 0.130 \\
128 & 64  & 10.639 & 0.971 & 0.122 \\ \hline
256 & 4   & 19.221 & 0.939 & 0.205 \\
256 & 8   & 13.937 & 0.957 & 0.155 \\
256 & 16  & 11.981 & 0.965 & 0.132 \\
256 & 32  & 10.962 & 0.970 & 0.123 \\
256 & 64  & 10.458 & 0.973 & 0.118 \\ \hline
\end{tabular}
\end{table}

{\bf Unconditional Text Generation}

Given the trained embedding network, $h_\phi(\rvx_0)$, we used the $2$ million text sequences of varying lengths, and converted them to latent matrices of size $1024\times 128$. These were used for training a denoiser for the continuous diffusion. More details on this training, the diffusion algorithm used, and its overall performance evaluation are brought in Appendix \ref{app: Continuous Diffusion}. 

\emph{ConThenDisc} generates text sequences by synthesizing an embedding vector, and then decoding it to tokens via an MDM. \emph{ConWithinDisc} generalizes the above by updating the guidance latent vector within the MDM iterations. This involves an additional training of the continuous diffusion, conditioning its denoiser on the embedding of the temporally available sequence. More details on this training are brought in Appendix~\ref{app: Continuous Diffusion}. The main hyper-parameters governing these processes are the dimension of the latent vector generated, and the MDM parameters (sequence length, block-size, number of unmasked tokens in each step, NFE). 

\begin{figure}[!h]
    \centering
    \includegraphics[width=0.49\textwidth]{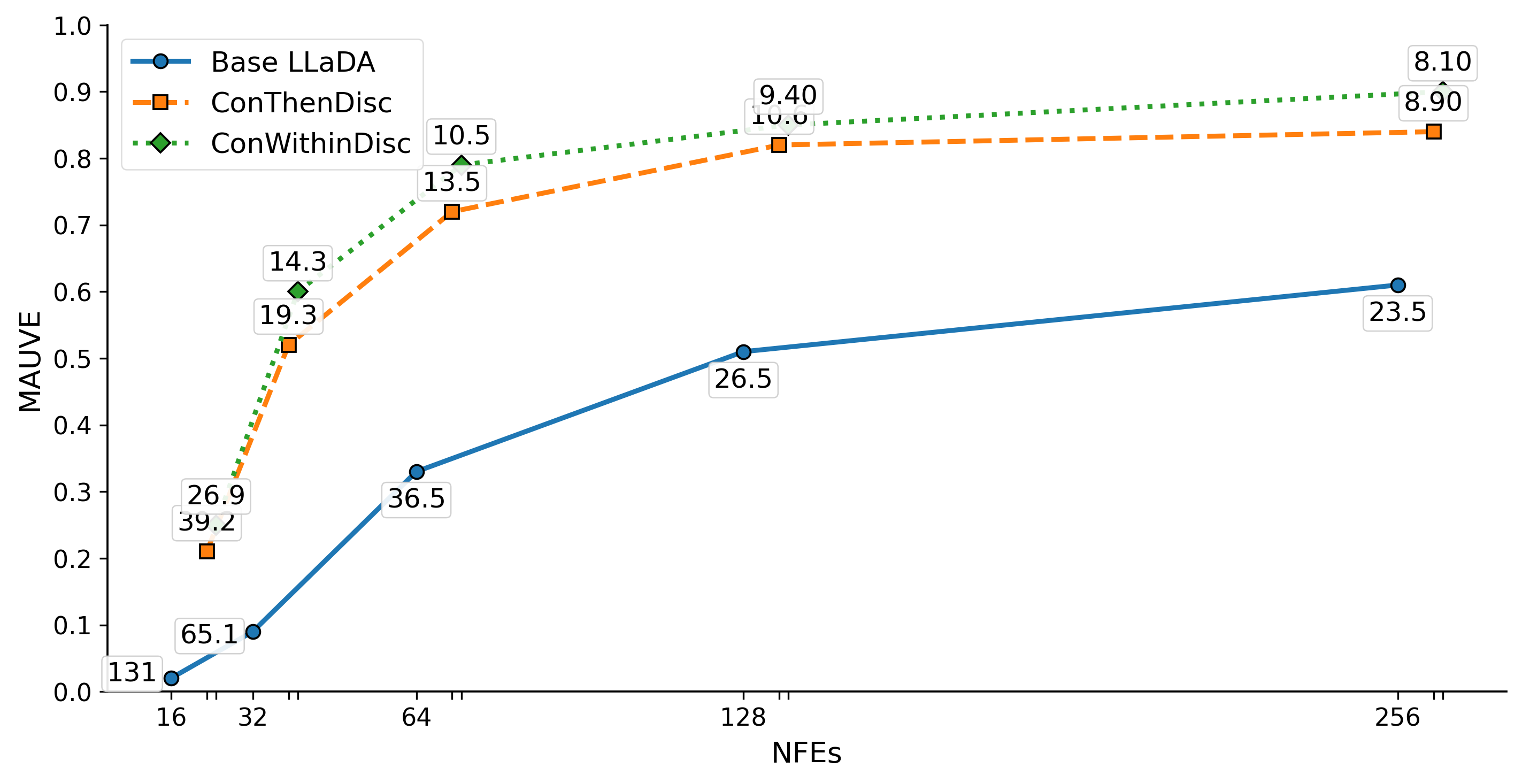}\\
    \includegraphics[width=0.49\textwidth]{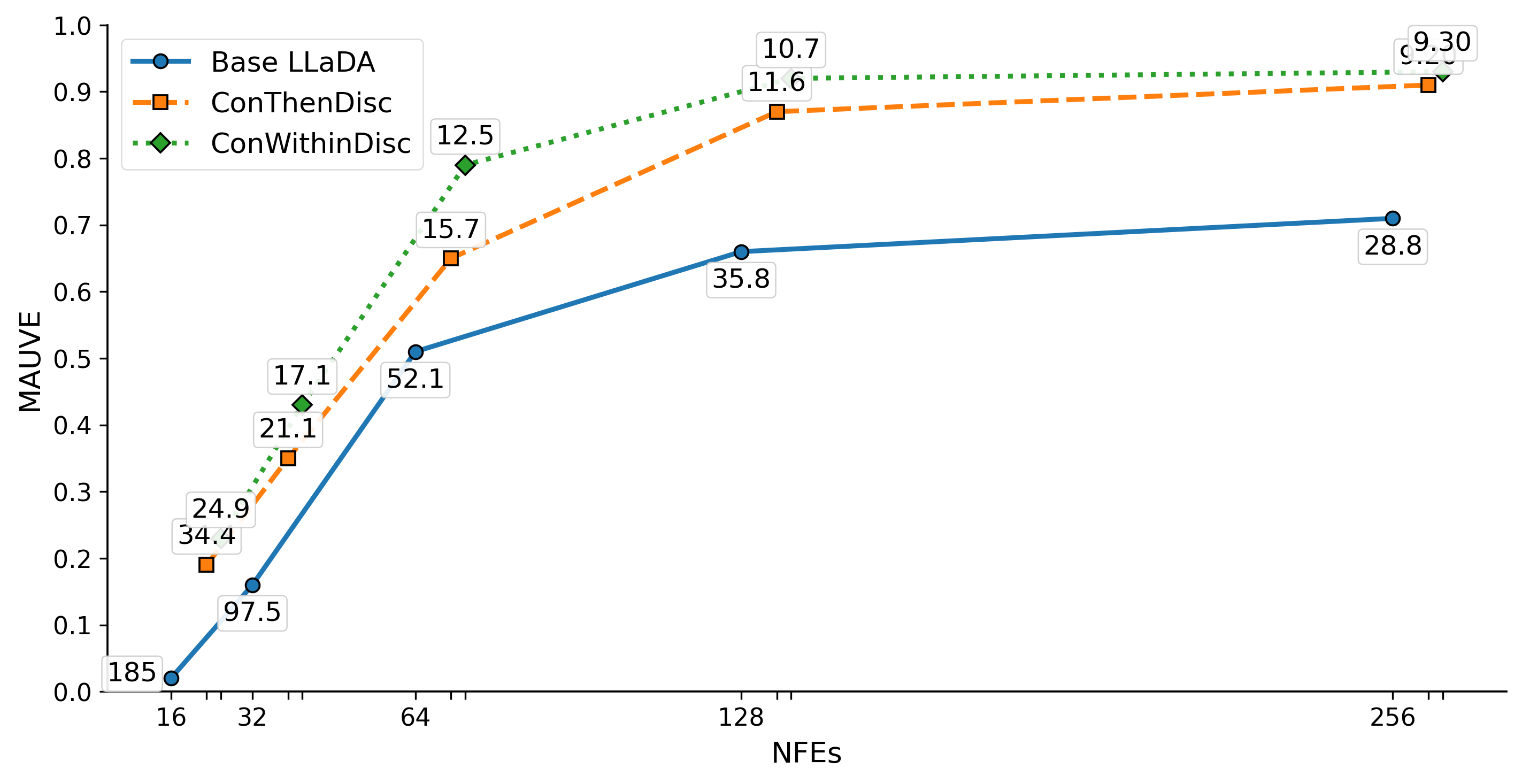}
    \caption{\textbf{Text Generation:} MAUVE and Gen. Perplexity for base LLaDA, \emph{ConThenDisc} and \emph{ConWithinDisc}, vs. complexity (NFE). 
    $n=512, K=8$ (top) and $n=1024, K=16$ (bottom). In all cases, the whole sequence is treated as one block. }
    \label{fig:ConXDisc}
\end{figure}

Figure~\ref{fig:ConXDisc} presents the results obtained for varying generation lengths ($512$ and $1024$), while sweeping through NFE\footnote{NFE accounts for the continuous diffusion: The denoiser ($400M$ parameters) is activated $128$ times, leading to an overall load of $\approx 6$ demasker (8B) activations. The conditioned diffusion uses $32$ time steps, leading to $<2$ NFE.} and referring to all the text as one block, which was found best for the base model. As can be seen, \emph{ConThenDisc} and \emph{ConWithinDisc} perform much better than the LLaDA baseline across a wide range of NFE and generation lengths, reflected by both the MAUVE and the Gen-PPL measures. 

For example, generating sequences of length $512$ with a base-LLaDA model that uses NFE$=512$ (MAUVE=$0.62$, Gen-PPL=$19.4$) parallels \emph{ConWithinDisc} with NFE$=40$ (MAUVE=$0.6$, Gen-PPL=$14.3$), implying a speedup of $\times 13$. Similarly, for sequences of length $1024$, 
base-LLaDA with NFE$=1024$ (MAUVE=$0.76$, Gen-PPL=$23.5$)
is $14$ times slower than a comparable and even better \emph{ConWithinDisc} (NFE=$72$, MAUVE=$0.8$, Gen-PPL=$12.5$). 

In these graphs, \emph{ConWithinDisc} has been implemented such that it uses only one additional continuous diffusion in the middle of the MDM steps in order to update the latent guidance, offering an evident improvement over \emph{ConThenDisc}, at the cost of adding less than $2$ to the NFE.
More results on these experiments with a wider coverage the hyper-parameters are brought in Appendix~\ref{app: More on ConThenDisc and ConWithinDisc}.

\section{Conclusion}

While Masked Diffusion Models (MDMs) offer a compelling approach to text generation, they face inherent performance hurdles. We introduce \emph{CRoCoDiL}, a framework that overcomes these bottlenecks to provide a substantial boost in synthesis speed and text quality. By first generating a 
``sketched'' latent representation in a continuous space and then converting it into tokens via a guided MDM, \emph{CRoCoDiL} achieves superior results, as demonstrated in our experiments with LLaDA~\cite{nie2025large}. Our future research will focus on extending the proposed algorithms to handle conditional (prompt-based) text synthesis (see Appendix~\ref{app: Handling Prompts}), optimizing the continuous diffusion processes via distillation, 
and exploring more efficient latent designs. Additionally, we aim to improve conditional generation by training prompt embeddings and testing our framework against a wider variety of baseline models.



\newpage
\section*{Impact Statement}
This paper presents work whose goal is to advance the fields of Machine
Learning and Generative-AI. There are many potential societal consequences of our work, none of which we feel must be specifically highlighted here.

\bibliography{main}

@article{ho2020denoising,
  title={Denoising diffusion probabilistic models},
  author={Ho, Jonathan and Jain, Ajay and Abbeel, Pieter},
  journal={Advances in neural information processing systems},
  volume={33},
  pages={6840--6851},
  year={2020}
}

@article{nie2025large,
  title={Large language diffusion models},
  author={Nie, Shen and Zhu, Fengqi and You, Zebin and Zhang, Xiaolu and Ou, Jingyang and Hu, Jun and Zhou, Jun and Lin, Yankai and Wen, Ji-Rong and Li, Chongxuan},
  journal={arXiv preprint arXiv:2502.09992},
  year={2025}
}

@article{ye2025dream,
  title={Dream 7b: Diffusion large language models},
  author={Ye, Jiacheng and Xie, Zhihui and Zheng, Lin and Gao, Jiahui and Wu, Zirui and Jiang, Xin and Li, Zhenguo and Kong, Lingpeng},
  journal={arXiv preprint arXiv:2508.15487},
  year={2025}
}

@article{meshchaninov2025compressed,
  title={Compressed and Smooth Latent Space for Text Diffusion Modeling},
  author={Meshchaninov, Viacheslav and Chimbulatov, Egor and Shabalin, Alexander and Abramov, Aleksandr and Vetrov, Dmitry},
  journal={arXiv preprint arXiv:2506.21170},
  year={2025}
}

@inproceedings{morris2023text,
  title={Text embeddings reveal (almost) as much as text},
  author={Morris, John and Kuleshov, Volodymyr and Shmatikov, Vitaly and Rush, Alexander M},
  booktitle={Proceedings of the 2023 Conference on Empirical Methods in Natural Language Processing},
  pages={12448--12460},
  year={2023}
}

@article{arriola2025encoder,
  title={Encoder-Decoder Diffusion Language Models for Efficient Training and Inference},
  author={Arriola, Marianne and Schiff, Yair and Phung, Hao and Gokaslan, Aaron and Kuleshov, Volodymyr},
  journal={arXiv preprint arXiv:2510.22852},
  year={2025}
}

@article{sahoo2024simple,
  title={Simple and effective masked diffusion language models},
  author={Sahoo, Subham and Arriola, Marianne and Schiff, Yair and Gokaslan, Aaron and Marroquin, Edgar and Chiu, Justin and Rush, Alexander and Kuleshov, Volodymyr},
  journal={Advances in Neural Information Processing Systems},
  volume={37},
  pages={130136--130184},
  year={2024}
}

@inproceedings{xu2025energy,
  title={Energy-Based Diffusion Language Models for Text Generation},
  author={Xu, Minkai and Geffner, Tomas and Kreis, Karsten and Nie, Weili and Xu, Yilun and Leskovec, Jure and Ermon, Stefano and Vahdat, Arash},
  booktitle={The Thirteenth International Conference on Learning Representations},
  year={2025}
}

@article{xie2025variational,
  title={Variational Autoencoding Discrete Diffusion with Enhanced Dimensional Correlations Modeling},
  author={Xie, Tianyu and Xue, Shuchen and Feng, Zijin and Hu, Tianyang and Sun, Jiacheng and Li, Zhenguo and Zhang, Cheng},
  journal={arXiv preprint arXiv:2505.17384},
  year={2025}
}

@article{azangulov2025parallel,
  title={Parallel Sampling from Masked Diffusion Models via Conditional Independence Testing},
  author={Azangulov, Iskander and Pandeva, Teodora and Prasad, Niranjani and Zazo, Javier and Karmalkar, Sushrut},
  journal={arXiv preprint arXiv:2510.21961},
  year={2025}
}

@article{yi2024diffusion,
  title={Diffusion models in text generation: a survey},
  author={Yi, Qiuhua and Chen, Xiangfan and Zhang, Chenwei and Zhou, Zehai and Zhu, Linan and Kong, Xiangjie},
  journal={PeerJ Computer Science},
  volume={10},
  pages={e1905},
  year={2024},
  publisher={PeerJ Inc.}
}

@inproceedings{li2022diffusion,
  title={Diffusion-LM: Improves Controllable Text Generation},
  author={Li, Xiang Lisa and Thickstun, John and Gulrajani, Ishaan and Liang, Percy and Hashimoto, Tatsunori B},
  booktitle={NeurIPS},
  year={2022}
}

@inproceedings{lou2024discrete,
  title={Discrete Diffusion Modeling by Estimating the Ratios of the Data Distribution},
  author={Lou, Aaron and Meng, Chenlin and Ermon, Stefano},
  booktitle={Proceedings of the 41st International Conference on Machine Learning (ICML)},
  year={2024}
}

@article{ren2025qwen3,
  title={Qwen3 Embedding: Advancing Text Embedding and Reranking Through Foundation Models},
  author={Ren, Xuancheng and et al.},
  journal={arXiv preprint arXiv:2506.05176},
  year={2025}
}

@inproceedings{austin2021structured,
  title={Structured Denoising Diffusion Models in Discrete State-Spaces},
  author={Austin, Jacob and Johnson, Daniel D and Ho, Jonathan and Tarlow, Marc and van den Berg, Rianne},
  booktitle={NeurIPS},
  year={2021}
}

@inproceedings{hoogeboom2021argmax,
  title={Argmax Flows and Multinomial Diffusion: Towards Non-Autoregressive Language Models},
  author={Hoogeboom, Emiel and Nielsen, Didrik and Jaini, Priyank and Alaa, Ahmed and Welling, Max},
  booktitle={NeurIPS},
  year={2021}
}

@inproceedings{shi2024simplified,
  title={Simplified and Generalized Masked Diffusion for Discrete Data},
  author={Shi, Jiaxin and Han, Ke and Wang, Zhijian and Doucet, Arnaud and Titsias, Michalis K},
  booktitle={NeurIPS},
  year={2024}
}

@article{shi2025non,
  title={Non-Markovian Discrete Diffusion with Causal Language Models},
  author={Shi, Jiaxin and others},
  journal={arXiv preprint arXiv:2502.xxxx}, 
  year={2025}
}

@article{dieleman2022continuous,
  title={Continuous diffusion for categorical data},
  author={Dieleman, Sander and Sartran, Laurent and Roshannai, Arman and Savinov, Nikolay and Ganin, Yaroslav and others},
  journal={arXiv preprint arXiv:2211.15089},
  year={2022}
}

@inproceedings{gulrajani2024plaid,
  title={Plaid: Likelihood-based Diffusion Language Models},
  author={Gulrajani, Ishaan and Hashimoto, Tatsunori B},
  booktitle={NeurIPS},
  year={2024}
}

@inproceedings{gao2024empowering,
  title={Empowering Diffusion Models on the Embedding Space for Text Generation},
  author={Gao, Yifan and others},
  booktitle={ICLR},
  year={2024}
}

@inproceedings{zhang2022concrete,
  title={Concrete Score Matching: Generalized Score Matching for Discrete Data},
  author={Zhang, Chenlin and Zhang, Jiaming and Zhang, Xian and others},
  booktitle={NeurIPS},
  year={2022}
}

@inproceedings{han2023ssd,
  title={{SSD-LM}: Semi-autoregressive Simplex-based Diffusion Language Model},
  author={Han, Xiaochuang and Kumar, Sachin and Tsvetkov, Yulia},
  booktitle={ACL},
  year={2023}
}

@inproceedings{mahabadi2024tess,
  title={{TESS}: Text-to-Text Self-Conditioned Simplex Diffusion},
  author={Karimi Mahabadi, Rabeeh and others},
  booktitle={EACL},
  year={2024}
}

@inproceedings{arriola2025block,
  title={Block Diffusion: Interpolating Between Autoregressive and Diffusion Language Models},
  author={Arriola, Marianne and Gokaslan, Aaron and Chiu, Justin T and Yang, Zhihan and Qi, Zhixuan and Han, Jiaqi and Sahoo, Subham Sekhar and Kuleshov, Volodymyr},
  booktitle={The Thirteenth International Conference on Learning Representations},
  year={2025}
}

@article{wu2025fast,
  title={Fast-dllm v2: Efficient block-diffusion llm},
  author={Wu, Chengyue and Zhang, Hao and Xue, Shuchen and Diao, Shizhe and Fu, Yonggan and Liu, Zhijian and Molchanov, Pavlo and Luo, Ping and Han, Song and Xie, Enze},
  journal={arXiv preprint arXiv:2509.26328},
  year={2025}
}

@inproceedings{liu2025think,
  title={Think while You Generate: Discrete Diffusion with Planned Denoising},
  author={Liu, Sulin and Nam, Juno and Campbell, Andrew and Stark, Hannes and Xu, Yilun and Jaakkola, Tommi and Gomez-Bombarelli, Rafael},
  booktitle={The Thirteenth International Conference on Learning Representations},
  year={2025}
}

@article{liu2025dllm,
  title={dllm-cache: Accelerating diffusion large language models with adaptive caching},
  author={Liu, Zhiyuan and Yang, Yicun and Zhang, Yaojie and Chen, Junjie and Zou, Chang and Wei, Qingyuan and Wang, Shaobo and Zhang, Linfeng},
  journal={arXiv preprint arXiv:2506.06295},
  year={2025}
}

@article{luxembourg2025plan,
  title={Plan for Speed--Dilated Scheduling for Masked Diffusion Language Models},
  author={Luxembourg, Omer and Permuter, Haim and Nachmani, Eliya},
  journal={arXiv preprint arXiv:2506.19037},
  year={2025}
}

@inproceedings{dhariwal2021diffusion,
  title={Diffusion models beat gans on image synthesis},
  author={Dhariwal, Prafulla and Nichol, Alexander},
  booktitle={Advances in Neural Information Processing Systems},
  volume={34},
  pages={8780--8794},
  year={2021}
}

@inproceedings{rombach2022high,
  title={High-resolution image synthesis with latent diffusion models},
  author={Rombach, Robin and Blattmann, Andreas and Lorenz, Dominik and Esser, Patrick and Ommer, Bj{\"o}rn},
  booktitle={Proceedings of the IEEE/CVF conference on computer vision and pattern recognition},
  pages={10684--10695},
  year={2022}
}

@inproceedings{ho2022video,
  title={Video diffusion models},
  author={Ho, Jonathan and Salimans, Tim and Gritsenko, Alexey and Chan, William and Norouzi, Mohammad and Fleet, David J},
  booktitle={Advances in Neural Information Processing Systems},
  year={2022}
}

@inproceedings{kong2021diffwave,
  title={Diffwave: A versatile diffusion probabilistic model for audio synthesis},
  author={Kong, Zhifeng and Ping, Wei and Huang, Jiaji and Zhao, Kexin and Catanzaro, Bryan},
  booktitle={International Conference on Learning Representations},
  year={2021}
}

@article{yang2023diffusion,
  title={Diffusion Models: A Comprehensive Survey of Methods and Applications},
  author={Yang, Ling and Zhang, Zhilong and Song, Yang and Hong, Shenda and Xu, Runsheng and Zhao, Yue and Shao, Wentao and Zhang, Wentao and Cui, Bin and Yang, Ming-Hsuan},
  journal={ACM Computing Surveys},
  volume={56},
  number={4},
  pages={1--39},
  year={2023},
  publisher={ACM New York, NY}
}

@article{croitoru2023diffusion,
  title={Diffusion Models in Vision: A Survey},
  author={Croitoru, Florinel-Alin and Hondru, Vlad and Ionescu, Radu Tudor and Shah, Mubarak},
  journal={IEEE Transactions on Pattern Analysis and Machine Intelligence},
  volume={45},
  number={9},
  pages={10850--10869},
  year={2023},
  publisher={IEEE}
}

@article{li2023starcoder,
  title={Starcoder: may the source be with you!},
  author={Li, Raymond and Allal, Loubna Ben and Zi, Yangtian and Muennighoff, Niklas and Kocetkov, Denis and Mou, Chenghao and Marone, Marc and Akiki, Christopher and Li, Jia and Chim, Jenny and others},
  journal={arXiv preprint arXiv:2305.06161},
  year={2023}
}

@book{rudin1987real,
  title={Real and Complex Analysis},
  author={Rudin, Walter},
  year={1987},
  publisher={McGraw-Hill},
  edition={3rd},
  address={New York}
}

@article{uziel2025clustering,
  title={Clustering via Self-Supervised Diffusion},
  author={Uziel, Roy and Chelly, Irit and Freifeld, Oren and Pakman, Ari},
  journal={arXiv preprint arXiv:2507.04283},
  year={2025}
}

@article{flextok,
  title={{FlexTok}: Resampling Images into 1D Token Sequences of Flexible Length},
  author={Roman Bachmann and Jesse Allardice and David Mizrahi and Enrico Fini and O{\u{g}}uzhan Fatih Kar and Elmira Amirloo and Alaaeldin El-Nouby and Amir Zamir and Afshin Dehghan},
  journal={arXiv 2025},
  year={2025},
}

@article{wang2024qwen2,
  title={Qwen2-vl: Enhancing vision-language model's perception of the world at any resolution},
  author={Wang, Peng and Bai, Shuai and Tan, Sinan and Wang, Shijie and Fan, Zhihao and Bai, Jinze and Chen, Keqin and Liu, Xuejing and Wang, Jialin and Ge, Wenbin and others},
  journal={arXiv preprint arXiv:2409.12191},
  year={2024}
}

@misc{chen2023pixartalpha,
      title={PixArt-$\alpha$: Fast Training of Diffusion Transformer for Photorealistic Text-to-Image Synthesis}, 
      author={Junsong Chen and Jincheng Yu and Chongjian Ge and Lewei Yao and Enze Xie and Yue Wu and Zhongdao Wang and James Kwok and Ping Luo and Huchuan Lu and Zhenguo Li},
      year={2023},
      eprint={2310.00426},
      archivePrefix={arXiv},
      primaryClass={cs.CV}
}

@article{ferjad2020icml,
  title = {Reliable Fidelity and Diversity Metrics for Generative Models},
  author = {Naeem, Muhammad Ferjad and Oh, Seong Joon and Uh, Youngjung and Choi, Yunjey and Yoo, Jaejun},
  year = {2020},
  booktitle = {International Conference on Machine Learning},
}

@inproceedings{song2021denoising,
  title={Denoising Diffusion Implicit Models},
  author={Song, Jiaming and Meng, Chenlin and Ermon, Stefano},
  booktitle={International Conference on Learning Representations},
  year={2021}
}

@inproceedings{liu2025discrete,
  title={Discrete Copula Diffusion},
  author={Liu, Anji and Broadrick, Oliver and Niepert, Mathias and Van den Broeck, Guy},
  booktitle={The Thirteenth International Conference on Learning Representations},
  year={2025}
}

@inproceedings{zhou2023codebertscore,
  title={CodeBERTScore: Evaluating Code Generation with Pretrained Models of Code},
  author={Zhou, Shuyan and Alon, Uri and Agarwal, Sumit and Neubig, Graham},
  booktitle={2023 Conference on Empirical Methods in Natural Language Processing, EMNLP 2023},
  pages={13921--13937},
  year={2023},
  organization={Association for Computational Linguistics (ACL)}
}

@article{jurafsky2009speech,
  title={Speech and language processing: an introduction to natural language processing, computational linguistics, and speech recognition with language models (p. 5)},
  author={Jurafsky, D and Martin, J},
  journal={ed.), Ne w Jersey: Pearson Education, Inc},
  year={2009}
}
\bibliographystyle{icml2026}

\newpage
\appendix
\onecolumn
\appendix
\onecolumn

\section{Diffusion Models for Language -- Background}
\label{app:background}

In the past five years, diffusion models have taken the lead in image, video and audio synthesis tasks~\cite{ho2020denoising,dhariwal2021diffusion,rombach2022high,ho2022video,kong2021diffwave,yang2023diffusion,croitoru2023diffusion}. These algorithms all rely on a continuous formulation of diffusion algorithms, mostly assuming a Gaussian noise contamination and a corresponding denoising network, serving as foundational elements. Bringing these methods to handle language is far from trivial, as text is discrete and unordered. This gap poses a challenge that many recent papers have attempted to resolve. Roughly speaking, there are three general strategies in constructing a bridge between continuum-based diffusion models and language: 

\begin{itemize}
 \item {\bf Go Discrete:} The rationale of diffusion algorithms can be brought to the discrete domain, by intuitively replacing the Gaussian noise contamination by a masking operation or via random replacements of tokens. This line of work has been drawing much attention recently, exhibiting a tendency to appealing results. MDM methods, such as MDLM~\cite{sahoo2024simple},  LLaDA~\cite{nie2025large} and Dream~\cite{ye2025dream}, belong to this thread of work. The following papers are additional representatives of this group: ~\cite{austin2021structured,hoogeboom2021argmax,shi2024simplified,shi2025non}. 

\item {\bf Go Continuum:} Text synthesis can be performed with classical (continuous) diffusion models, assuming that text could be embedded to the continuum, and decoding it back to tokens is within reach. This approach has been explored in a series of papers, with partial success -- the following are few representatives of this group:~\cite{li2022diffusion,dieleman2022continuous,gulrajani2024plaid, gao2024empowering}. 

\item {\bf Go Midway:} Diffusion models can be reformulated thoroughly and rigorously while focusing on discrete data. This, for example, is the approach taken by the work on the Concrete-Score, which leans on ratios of probabilities. Another related option operates in the logits domain, forming simplex-based diffusion alternatives. Examples of this approach are the work reported in~\cite{zhang2022concrete,han2023ssd,lou2024discrete, mahabadi2024tess}. 

\end{itemize} 

For newcomers to this domain, the impression is likely to be that Masked-based Diffusion  Models (MDM) are the preferred algorithms, as they have taken the lead in bringing diffusion models to language. We argue that ``the jury is still out'' on this question, as MDM still faces critical challenges, while the above-described alternatives are underexplored to a large extent. Our work offers a novel fusion of continuum and discrete, which preserves the core essence of MDM algorithms, while boosting them via a continuum embedding. This brings us to describe several closely related  papers in Section \ref{sec:background} that take a similar, yet different, path towards addressing similar goals.


\section{Theoretical Analysis}\label{app: theoretical analysis}

\subsection{Limitations of the Factorized Approximation of Joint Token Distributions}
\label{app:limitations_of_standard}

In this section, we formally characterize the gap between the true joint distribution $q(\rvx_0 | \rvx_t)$ and the factorized approximation $p_\theta(\rvx_0 | \rvx_t) := \prod_{i=1}^n p_\theta(x_0^i | \rvx_t)$ typically employed in MDMs.
Here, $p_\theta(x_0^i | \rvx_t)$ represents the demasking model whose task is to recover masked tokens within the corrupted sequence $\rvx_t$.

{\bf Independence Gap: } We demonstrate that even if one is equipped with an optimal demasking model, $p_{\theta^*}(x_0^i | \rvx_t) = q(x_0^i | \rvx_t)$, the assumption of conditional independence inherently results in a loss of structural information. 

Let $p_{\theta^*}(\rvx_0 | \rvx_t) = \prod_{i=1}^n q(x_0^i | \rvx_t)$ denote the optimal factorized approximation of the true joint distribution $q(\rvx_0 | \rvx_t)$.
The \emph{Independence Gap}, which corresponds to the conditional total correlation of the tokens $\rvx_0$ given the partially masked sequence $\rvx_t$, is defined as:
\begin{align}
    C(\mathbf{x}_0 | \mathbf{x}_t) := D_{\text{KL}}\left( q(\mathbf{x}_0 | \mathbf{x}_t) \parallel \prod_{i=1}^n q(x_0^i | \mathbf{x}_t) \right)
\end{align}
Since natural language is governed by high-order dependencies (e.g., long-range syntactic constraints and local semantic coherence), the term $C(\mathbf{x}_0 | \mathbf{x}_t)$ is strictly positive.
This implies that the factorized model $p_\theta$ is theoretically incapable of perfectly recovering the true data distribution $q$ unless the tokens are truly conditionally independent. 

{\bf Semantic Incoherence: } Consider a simple case where the data distribution consists of two equally likely sequences: $\mathbf{s}_1 = (\text{cat}, \text{meows})$ and $\mathbf{s}_2 = (\text{dog}, \text{barks})$. If at time $t$ both tokens are masked ($\rvx_t = (\texttt{[M]}, \texttt{[M]})$), the true joint is:
\begin{equation}
    q(\rvx_0 | \rvx_t) = 0.5 \delta(\rvx_0 - \mathbf{s}_1) + 0.5 \delta(\rvx_0 - \mathbf{s}_2).
\end{equation}
The optimal marginals are $q(x_0^1 = \text{cat} | \rvx_t) = 0.5$ and $q(x_0^2 = \text{meows} | \rvx_t) = 0.5$, and similarly for the other pair. However, the factorized model $p_{\theta^*}$ yields:
\begin{equation}
    p_{\theta^*}(\rvx_0 | \rvx_t) = (0.5 \mathbf{e}_{\text{cat}} + 0.5 \mathbf{e}_{\text{dog}}) \otimes (0.5 \mathbf{e}_{\text{meows}} + 0.5 \mathbf{e}_{\text{barks}}).
\end{equation}
This assigns a $25\%$ probability to $(\text{cat}, \text{barks})$ and $(\text{dog}, \text{meows})$, which are out-of-distribution (OOD) sequences. This "marginal drift" forces the generative process to navigate through regions of the token space that do not correspond to valid language, often leading to a loss of global coherence in long-form synthesis.

{\bf Propagation of Error in Reverse Sampling: } In the iterative reverse process, we sample $\hat{\rvx}_0 \sim p_\theta(\rvx_0 | \rvx_t)$ and use it to compute the next step $\rvx_s$. Because $p_\theta$ ignores the token dependencies as described above, $\hat{\rvx}_0$ is likely to be  incoherent.
When this incoherent $\hat{\rvx}_0$ is plugged into the effective reverse transition $p_\theta(\rvx_s | \rvx_t)$, defined in Equation~\eqref{eq:effective_reverse}, it is guided towards an inconsistent clean sequence, accumulating the error across the sampling trajectory.

\subsection{Formal Justification for Guided Factorization}
\label{app:justification_for_guidance}

In this section, we provide the formal justification for the factorized reverse transition used in the \emph{CRoCoDiL} framework.
We demonstrate that by conditioning on an appropriate  continuous latent representation $\rvz_0 \in \R^d$, we can sample multiple tokens independently without losing the structural dependencies of the sequence.

\begin{theorem}
\label{theo:Factorizability}
Let $\rvx_0 \in \mathcal{V}^n$ be a discrete sequence and $\rvz_0 \in \mathbb{R}^d$ be its continuous representation that contains the information on the cross-token dependencies in $\rvx_0$, such that the clean tokens are conditionally independent given $\rvz_0$:
\begin{equation}
    p_\theta(\rvx_0 | \rvz_0) := \prod_{i=1}^n p_\theta(x^i_0 | \rvz_0).
\end{equation}
Then, the reverse transition conditioned on $\rvz_0$ factorizes over the token positions:
\begin{equation}
    p_\theta(\rvx_s | \rvx_t, \rvz_0) = \prod_{i=1}^n p_\theta(x^i_s | \rvx_t, \rvz_0).
\end{equation}
\end{theorem}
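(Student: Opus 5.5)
The plan is to write the guided reverse transition as a marginalization over the clean sequence, and then push the factorization through. Under the guided model, $\rvx_0$ plays the same role as in Equation~\eqref{eq:fac_reverse}: it is a latent variable that is summed out. So we write
\begin{equation}
p_\theta(\rvx_s | \rvx_t, \rvz_0) = \sum_{\rvx_0 \in \mathcal{V}^n} q(\rvx_s | \rvx_t, \rvx_0)\, p_\theta(\rvx_0 | \rvx_t, \rvz_0).
\end{equation}
Here we use that the forward masking process does not depend on $\rvz_0$ once $\rvx_0$ is given. In other words, $\rvz_0$ is a function of (or at least is generated from) $\rvx_0$, so $\rvx_s,\rvx_t$ are conditionally independent of $\rvz_0$ given $\rvx_0$. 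By Equation~\eqref{eq:fac_reverse}, the first factor is $\prod_i q(x_s^i | x_t^i, x_0^i)$.

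\textbf{Step 1: factorize the posterior over $\rvx_0$.} We show that the posterior $p_\theta(\rvx_0 | \rvx_t, \rvz_0)$ also factorizes over positions. Apply Bayes' rule with the hypothesis $p_\theta(\rvx_0|\rvz_0)=\prod_i p_\theta(x_0^i|\rvz_0)$ and the factorized forward kernel $q(\rvx_t|\rvx_0)=\prod_i q(x_t^i|x_0^i)$:
\begin{equation}
p_\theta(\rvx_0 | \rvx_t, \rvz_0) \propto \prod_{i=1}^n q(x_t^i | x_0^i)\, p_\theta(x_0^i | \rvz_0).
\end{equation}
The normalizer $\sum_{\rvx_0}\prod_i(\cdot)$ itself splits into a product of per-position sums. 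Hence
\begin{equation}
p_\theta(\rvx_0|\rvx_t,\rvz_0)=\prod_i p_\theta(x_0^i|x_t^i,\rvz_0),
\end{equation}
where each factor depends only on $x_t^i$ and $\rvz_0$. This is consistent with $p_\theta(x_0^i|\rvx_t,\rvz_0)$ in the statement, since the other coordinates of $\rvx_t$ carry no extra information once $\rvz_0$ is known.

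\textbf{Step 2: interchange sum and product.} Substituting both product forms into the marginalization gives a sum over $\rvx_0=(x_0^1,\dots,x_0^n)$ of a product of terms, each depending on a single $x_0^i$. The sum therefore distributes into $\prod_i \sum_{x_0^i} q(x_s^i|x_t^i,x_0^i)\,p_\theta(x_0^i|x_t^i,\rvz_0)$. Each factor is, by the same marginalization applied coordinatewise, exactly $p_\theta(x_s^i|\rvx_t,\rvz_0)$, which completes the argument.

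\textbf{Main obstacle.} The delicate point is Step 1, specifically justifying that the forward corruption is conditionally independent of $\rvz_0$ given $\rvx_0$, and that the hypothesis is meant as a genuine model assumption rather than a definition. I would state this explicitly as part of the setup: $\rvz_0=h_\phi(\rvx_0)$ (possibly plus independent noise), and the masks are drawn independently of everything else. With that Markov structure $\rvz_0 \leftarrow \rvx_0 \rightarrow \rvx_t \rightarrow \rvx_s$ made explicit, Bayes' rule and the sum–product interchange are routine.
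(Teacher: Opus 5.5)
Your proof is correct, but it uses a different decomposition than the paper's. The paper never looks at the posterior over clean tokens. It first shows that the time marginal $p_\theta(\rvx_s|\rvz_0)=\sum_{\rvx_0}q(\rvx_s|\rvx_0)\,p_\theta(\rvx_0|\rvz_0)$ factorizes (and likewise at time $t$). It then applies Bayes' rule \emph{between $\rvx_s$ and $\rvx_t$}, using the one-step kernel $q(\rvx_t|\rvx_s)=\prod_i q(x_t^i|x_s^i)$ and the Markov property $q(\rvx_t|\rvx_s,\rvz_0)=q(\rvx_t|\rvx_s)$. Each factor then comes out as $q(x_t^i|x_s^i)\,p_\theta(x_s^i|\rvz_0)/p_\theta(x_t^i|\rvz_0)$.

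You instead apply Bayes' rule to $\rvx_0$, showing that the guided posterior $p_\theta(\rvx_0|\rvx_t,\rvz_0)$ is the product of its per-position marginals. You then push it through the reverse kernel of Equation~\eqref{eq:fac_reverse}, so each factor comes out in the $\hat{x}_0$-prediction form $\sum_{x_0^i}q(x_s^i|x_t^i,x_0^i)\,p_\theta(x_0^i|x_t^i,\rvz_0)$.

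What each route buys:
\begin{itemize}
\item The paper's route yields, as a by-product, that every time marginal $p_\theta(\rvx_s|\rvz_0)$ factorizes.
\item Your route yields the fact that matters most for the algorithm. The $\rvz_0$-conditioned analogue of the independence gap $C(\rvx_0|\rvx_t)$ of Appendix~\ref{app:limitations_of_standard} is zero. Hence independent sampling $\hat{x}_0^i\sim f_\theta^i(\rvx_t,t,\rvz_0)$ followed by per-token re-masking, as in Equation~\eqref{eq:effective_reverse}, is exact.
\item Your route also shows how strong the hypothesis is. Under masking, $q(x_t^i=\texttt{[M]}|x_0^i)$ does not depend on $x_0^i$, so at a masked position your factor reduces to $p_\theta(x_0^i|\rvz_0)$ and ignores the visible tokens altogether.
\end{itemize}
Both proofs close with the same small step: identifying a normalized factor that depends only on $(x_t^i,\rvz_0)$ with the marginal $p_\theta(x_s^i|\rvx_t,\rvz_0)$. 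Your finite sums suit discrete tokens better than the paper's integrals and appeal to Fubini.

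One small fix to your last paragraph: the chain $\rvz_0 \leftarrow \rvx_0 \rightarrow \rvx_t \rightarrow \rvx_s$ is in the wrong order. For $s<t$ the forward process is $\rvx_0 \rightarrow \rvx_s \rightarrow \rvx_t$. The chain as you wrote it would make $\rvx_s$ conditionally independent of $\rvx_0$ given $\rvx_t$, i.e., a reverse kernel $q(\rvx_s|\rvx_t,\rvx_0)$ that does not depend on $\rvx_0$, which is false for masking. Your computation only uses that $(\rvx_s,\rvx_t)$ is conditionally independent of $\rvz_0$ given $\rvx_0$. That holds for $\rvz_0 \leftarrow \rvx_0 \rightarrow \rvx_s \rightarrow \rvx_t$ (or with the first arrow reversed), so the proof itself is unaffected.
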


\begin{proof}
Let $s<t$ be arbitrary diffusion timestamps. In the following proof we rely on the standard assumption in diffusion models that the forward noise process factorizes over dimensions (tokens) and is Markovian. Specifically:
\begin{enumerate}
    \item {Forward Independence:} $q(\rvx_t | \rvx_s) = \prod_{i=1}^n q(x^i_t | x^i_s)$.
    \item {Markov Property:} $q(\rvx_t | \rvx_s, \rvz_0) = q(\rvx_t | \rvx_s)$.
\end{enumerate}

We start by a factorization of the marginal probability $p_\theta(\rvx_s | \rvz_0)$, by integrating out $\rvx_0$:
\begin{align}
\label{eq:11a}
    p_\theta(\rvx_s | \rvz_0) &= \int q(\rvx_s | \rvz_0, \rvx_0) p_\theta(\rvx_0 | \rvz_0) \, d\rvx_0 \\ \nonumber & = \int q(\rvx_s | \rvx_0) p_\theta(\rvx_0 | \rvz_0) \, d\rvx_0 \\ \nonumber 
    &= \int \left[ \prod_{i=1}^n q(x^i_s | x^i_0) \right] \left[ \prod_{j=1}^n p_\theta(x^j_0 | \rvz_0) \right] \, d\rvx_0 \\ \nonumber 
    &= \prod_{i=1}^n \left( \int q(x^i_s | x^i_0) p_\theta(x^i_0 | \rvz_0) \, dx^i_0 \right) \\ \nonumber 
    &= \prod_{i=1}^n p_\theta(x^i_s | \rvz_0).
\end{align}
In the first transition in the above derivation, we omitted the appearance of $\rvz_0$ from $q(\rvx_s | \rvz_0, \rvx_0)$ due to the Markov property. In the fourth transition that interchanges the order between the integral and the multiplication, we rely on the fact that the integrand is separable (Fubini's Theorem~\cite{rudin1987real}). 

We proceed by factorizing the Reverse Transition, via Bayes' rule:
\begin{equation*}
    p_\theta(\rvx_s | \rvx_t, \rvz_0) = \frac{q(\rvx_t | \rvx_s, \rvz_0) p_\theta(\rvx_s | \rvz_0)}{p_\theta(\rvx_t | \rvz_0)}.
\end{equation*}
Using the forward independence assumption and the result from Equation \ref{eq:11a} (applied to both $s$ and $t$), we substitute the terms:
\begin{align*}
    p_\theta(\rvx_s | \rvx_t, \rvz_0) &= \frac{\left( \prod_{i=1}^n q(x^i_t | x^i_s) \right) \left( \prod_{i=1}^n p_\theta(x^i_s | \rvz_0) \right)}{\prod_{i=1}^n p_\theta(x^i_t | \rvz_0)} \\
    &= \prod_{i=1}^n \left( \frac{q(x^i_t | x^i_s) p_\theta(x^i_s | \rvz_0)}{p_\theta(x^i_t | \rvz_0)} \right).
\end{align*}
The term inside the product corresponds exactly to the single-token posterior $p_\theta(x^i_s | x^i_t, \rvz_0)$ derived via Bayes' rule for the $i$-th dimension. Thus,
\begin{equation*}
    p_\theta(\rvx_s | \rvx_t, \rvz_0) = \prod_{i=1}^n p_\theta(x^i_s | \rvx_t, \rvz_0).
\end{equation*}
and this concludes the proof. 
\end{proof}

\section{Additional Details on the Guided-Demasker Training}
\label{app:Encoder-Demasker-Training}

In training the conditioned demasker, we jointly learn the parameters of an encoder $h_\phi$ and a demasking decoder $f_\theta$ to obtain a compact continuous representation that preserves the global constraints needed to reconstruct the discrete sequence, while remaining well-structured for the continuous generative model in the next stage. Given a clean sequence $\rvx_0$, the encoder produces a bank of continuous \emph{registers} $\rvz_0 = h_\phi(\rvx_0)$, and the demasker is trained to reconstruct $\rvx_0$ from a corrupted state $\rvx_t$ while conditioning on these registers.

We instantiate $h_\phi$ with a Qwen embedding backbone, but depart from standard single-vector pooling by introducing a small bank of learned suffix registers. Specifically, we append $K-1$ learned token embedding to the input, and additionally retain the single summary vector corresponding to the final valid token position as usually used in Qwen, resulting in $K$ embedding layers
\[
\rvz_0 \;=\; \big(\rvz_0^{(1)},\ldots,\rvz_0^{(K)}\big), \qquad \rvz_0^{(j)} \in \mathbb{R}^d.
\]
Note that in the original Qwen embedder only one vector is used, but this may not be enough for our needs: 
A multi-vector bottleneck distributes capacity across coarse and fine-grained attributes of the sequence, whereas a single vector often forces an overly lossy compression. To better align these embedding with the later-applied continuous diffusion, we normalize each register to unit length and rescale by $\sqrt{d}$:
\[
\rvz_0^{(j)} \leftarrow \sqrt{d}\cdot \frac{\rvz_0^{(j)}}{\|\rvz_0^{(j)}\|_2}\,.
\]
as done in previous work~\cite{uziel2025clustering}. 
While alternative normalizations are possible -- e.g., applying LayerNorm with learned affine parameters~\cite{gao2024empowering} -- this changes the scale of the register distribution and, in turn, the effective noise magnitude in the continuous diffusion stage. In practice, adopting such learned scaling typically requires additional calibration, either via a dedicated hyperparameter search over the noise/rescaling schedule or by estimating the appropriate normalization statistics (mean and variance) on a held-out set.

A key requirement for the subsequent latent diffusion model is that $\rvz_0$ is \emph{progressively organized}: earlier registers should remain informative even when later registers are absent. We enforce this property using nested dropout over registers \cite{flextok}.  For each example we sample $k \in \{1,\ldots,K\}$ and expose only the first $k$ registers to the demasker, withholding the rest. This induces an ordered decomposition in which global ``core'' information concentrates in early registers while residual details are delegated to later ones. Empirically, removing nested dropout typically has limited effect on reconstruction quality, but substantially harms unconditional generation after fitting a continuous diffusion prior over $\rvz_0$ (e.g., higher perplexity), indicating that the structured bottleneck primarily regularizes latent geometry for the continuous generative stage rather than improving raw auto-encoding fidelity.

We use LLaDA's demasker $f_\theta$ to predict masked tokens. Conditioning is implemented by prepending the available register prefix as continuous embeddings. Given a corrupted state $\rvx_t$, we form the demasker input as
\[
\big[\;\langle\mathrm{START}\rangle,\; \rvz_0^{(1)},\ldots,\rvz_0^{(k)},\; \langle\mathrm{END}\rangle,\; E(\rvx_t)\;\big],
\]
where $\langle\mathrm{START}\rangle$ and $\langle\mathrm{END}\rangle$ are learned boundary embeddings that delimit the conditioning channel, and $E(\cdot)$ is the token embedding lookup. A common failure mode in conditional demasking is that the demasker under-utilizes the conditioning signal whenever sufficient local context is available. To explicitly prevent this collapse, we employ an \emph{all-mask utilization schedule}: with probability of $0.25$ we set $\rvx_t$ to the fully-masked sequence (all tokens replaced by $[M]$) while still providing $\rvz_0$, forcing $f_\theta$ to reconstruct from the register prefix rather than from token-to-token correlations.

Prepending continuous registers changes the positional layout seen by the Transformer. We therefore keep RoPE as the base positional encoding and introduce a lightweight two-axis variant inspired by MRoPE~\cite{wang2024qwen2}. The model marks each input position as belonging either to the \emph{register prefix} (including the boundary tokens) or to the \emph{text stream}, and applies rotary embeddings accordingly. Text tokens use the standard RoPE rotation with a global absolute position index. For prefix tokens, we additionally define a prefix-local index that resets at the beginning of the prefix block (i.e., counts positions within the register segment). We then allocate a small subset of rotary frequency pairs to use the additional axis inside the prefix, while the remaining channels continue to use the global axis. This cleanly separates ``where'' within the register block from absolute positions in the text, stabilizing prefix--text attention and reducing positional shift artifacts under continuous-prefix conditioning.

Finally, we train $h_\phi$ and $f_\theta$ jointly under the standard masked-diffusion corruption process: we sample a masking level $t$, construct $\rvx_t$ by masking a subset of tokens, compute $\rvz_0=h_\phi(\rvx_0)$, and optimize cross-entropy on masked positions, i.e., maximize the conditional likelihood $p_\theta(\rvx_0 \mid \rvx_t, \rvz_0)$.

Referring to the technical side of this training process, we optimize using AdamW with learning rate $2.5\times 10^{-5}$ and weight decay $0.1$, for a total of $3$ epochs with global batch size $512$. 
Training is staged for stability: we first warm up the encoder by updating only $h_\phi$ for the first $4000$ optimization steps while keeping the decoder $f_\theta$ frozen, and then unfreeze $f_\theta$ and continue training both components jointly for the remainder of training.


\section{The Role of Robustness in Training the Encoder-Demasker}
\label{app:RobustnessAblation}

Our guided demasker is conditioned on a continuous register bank $\rvz_0$. Since this conditioning channel is later driven by \emph{synthesized} latents in our hybrid generators, it is important that decoding does not hinge on an overly precise register configuration. We therefore test how performance changes when the registers are perturbed at inference. Concretely, for each held-out sequence $\rvx_0$ we compute $\rvz_0=h_\phi(\rvx_0)$ and add i.i.d.\ Gaussian noise $\rve\sim\mathcal{N}(0,\sigma^2 I)$ to the registers before feeding them to the demasker. We sweep $\sigma$ and report masked-token prediction quality (Top-1 accuracy and mean cross-entropy) for two training variants: (i) our default Guided Demasker, trained with register-noise augmentation (Alg.~\ref{alg:training}), and (ii) an otherwise identical ablation trained without this augmentation.
Figure~\ref{fig:latent_noise_ablation_acc_ce} shows that noise-augmented training yields a markedly smoother degradation as $\sigma$ increases, while the no-augmentation variant deteriorates much earlier and more sharply. Overall, these results indicate that latent-noise augmentation makes the conditioning interface more stable to perturbations in register space.

We also examine how the learned register space behaves along continuous paths between real samples. We sample $1000$ random program pairs (each $\approx256$ tokens), encode each program into registers $\rvz^{(a)}_0$ and $\rvz^{(b)}_0$, and form linear interpolations
$\rvz_\alpha=(1-\alpha)\rvz^{(a)}_0+\alpha \rvz^{(b)}_0$ for $\alpha\in\{0,0.25,0.5,0.75,1\}$, followed by the same per-register normalization used throughout. Each $\rvz_\alpha$ is decoded with the MDM decoder, and we compute MAUVE and generation perplexity, aggregating over the $1000$ decoded samples per $\alpha$.
Figure~\ref{fig:interp_alpha} summarizes the results. Quality remains high across the entire interpolation path: even at the midpoint between two \emph{random} programs ($\alpha=0.5$), the decoded samples retain strong MAUVE and only a moderate increase in perplexity (numbers). The endpoints are, unsurprisingly, the easiest points to decode, but the overall curve indicates that linear mixing of two unrelated register banks still lands in a region that the decoder can map to fluent, diverse code. This supports the view that the register representation is well-behaved under continuous changes and that decoding is not fragile to such latent perturbations.

\begin{figure}[htb]
    \centering
    \includegraphics[width=0.7\columnwidth]{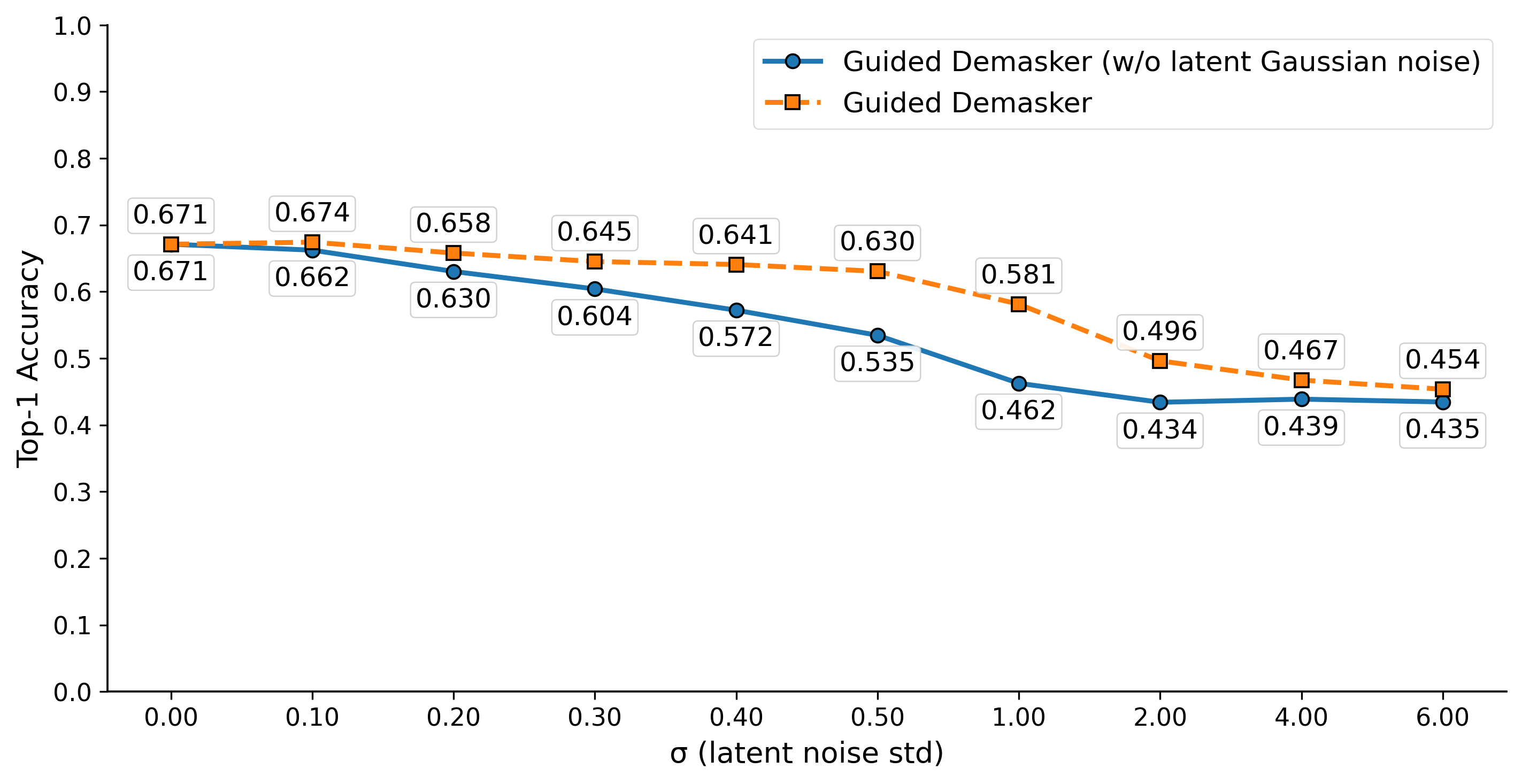}
    \vspace{0.15cm}
    \includegraphics[width=0.7\columnwidth]{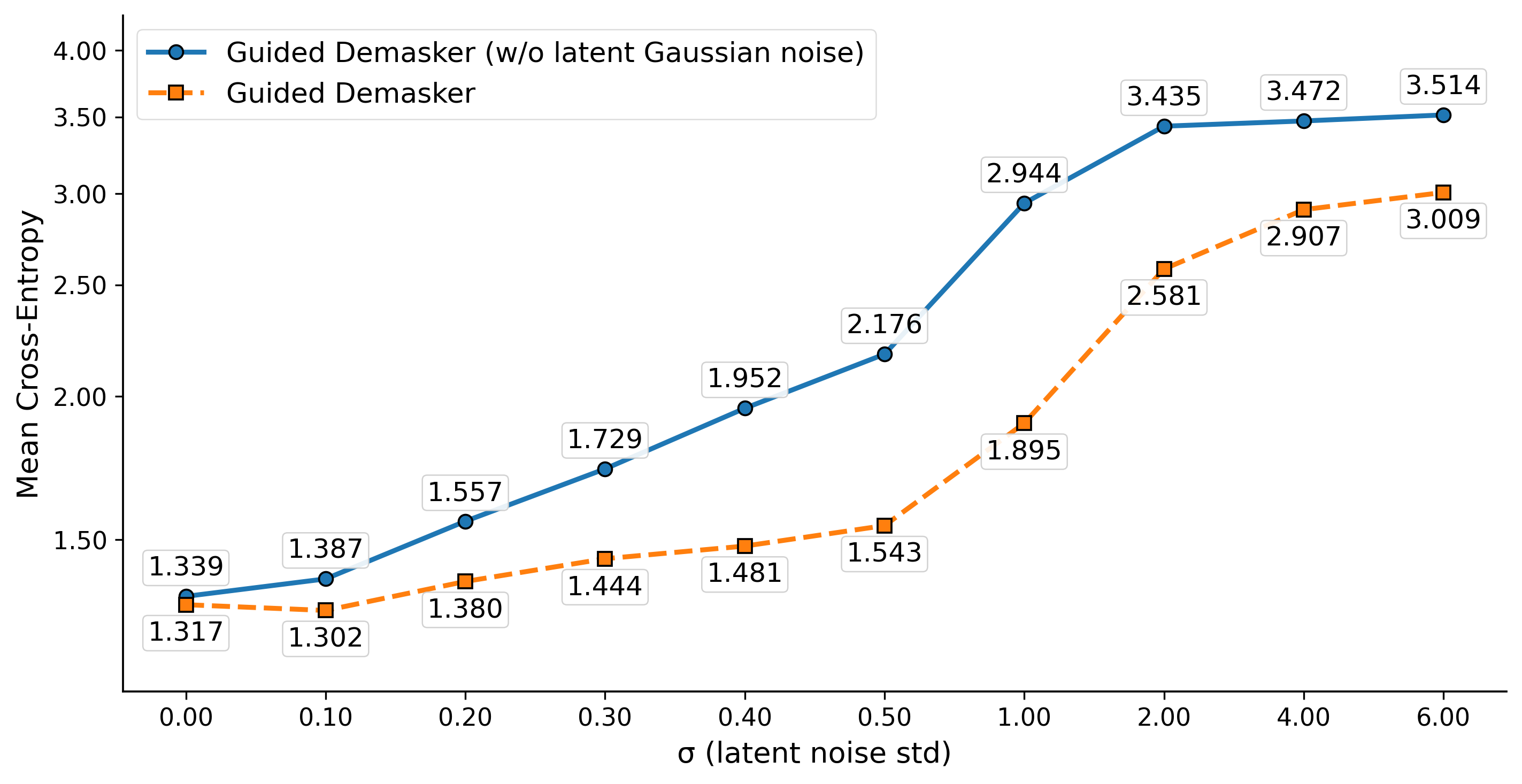}
    \caption{Stability under latent perturbations. We add i.i.d.\ Gaussian noise of standard deviation $\sigma$ to the conditioning register bank at evaluation time and report masked-token Top-1 accuracy (top) and mean cross-entropy (bottom). The Guided Demasker trained with register-noise augmentation degrades gradually as $\sigma$ increases, while the variant trained without this augmentation deteriorates substantially earlier and more sharply. These results refer to $90\%$ masking. }
    \label{fig:latent_noise_ablation_acc_ce}
\end{figure}

\begin{figure}[!h ]
    \centering
    \includegraphics[width=0.5\columnwidth]{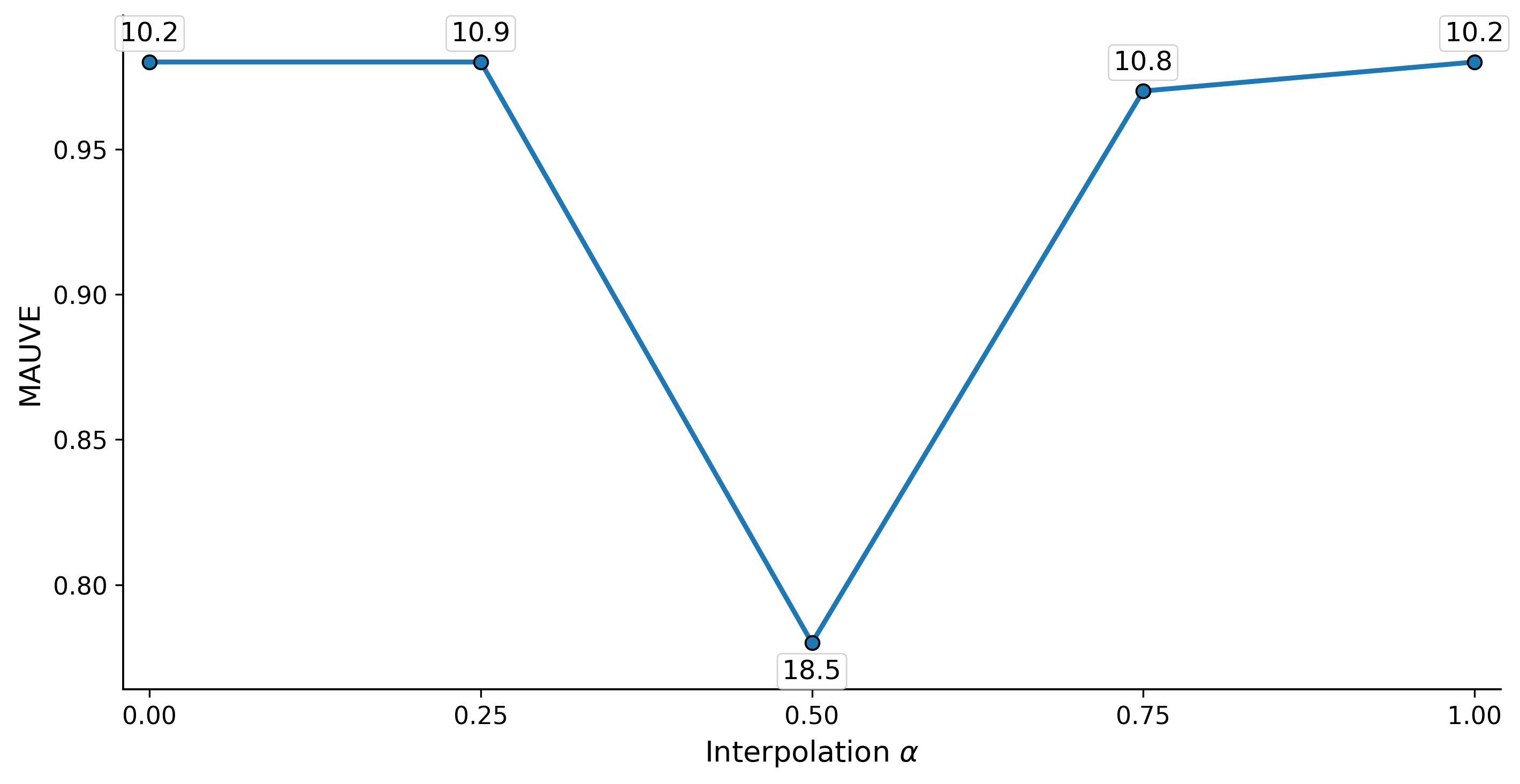}
    \caption{Register-space interpolation aggregated over $1000$ program pairs. We linearly interpolate between two encoded register banks and decode at $\alpha\in\{0,0.25,0.5,0.75,1\}$. We report MAUVE (higher is better); numbers denote Generative  Perplexity (lower is better), averaged over the same decoded sets. Quality remains high across the interpolation path, including at the midpoint between two random programs.}
    \label{fig:interp_alpha}
\end{figure}


\section{Additional Details on the Continuous Diffusion}
\label{app: Continuous Diffusion}

Given the trained encoder $h_\phi(\cdot)$, each discrete sequence $\rvx_0$ is mapped by it into a \emph{register bank}
$\rvz_0 = h_\phi(\rvx_0)\in\mathbb{R}^{K\times d}$, where nested dropout encourages a progressive organization across the register index.
We model the 
distribution of these registers by training a continuous diffusion model directly in this domain. Algorithm~\ref{alg:ContinuousDiffusion1} provides a description of this training procedure. 

More concretely, we define a standard forward noising process $q(\rvz_t\mid \rvz_0)$ over the register bank, producing $\rvz_t$ at diffusion time $t\in[0,1]$, and learn a DiT-style denoiser $g_\psi$.
As the denoiser backbone we use a PixArt-style Transformer adapted to 1D sequences of register tokens: a compact 28-layer variant with roughly $400M$ parameters~\cite{chen2023pixartalpha}.
We adopt the \emph{$\rvx_0$-prediction} parameterization, where the denoiser directly predicts the clean register bank,
\[
\hat \rvz_0 \;=\; g_\psi(\rvz_t,t),
\]
and train with a mean-squared error objective averaged over registers,
\[
\mathcal{L}_{\mathrm{cont}}(\psi)
\;=\;
\mathbb{E}_{\rvz_0,t}\Big[\;\big\|g_\psi(\rvz_t,t)-\rvz_0\big\|_2^2\Big],
\qquad \rvz_t \sim q(\cdot\mid \rvz_0).
\]
We use the standard DDPM~\cite{ho2020denoising} objective with a cosine noise schedule, and sample using DDIM~\cite{song2021denoising}. 

\begin{algorithm}[hb]
  \caption{Continuous-Then-Discrete Training}
  \label{alg:ContinuousDiffusion1}
  \begin{algorithmic}[1]
    \STATE {\bfseries Input:} data $\rvx_0 \sim P(\rvx)$, 
    \STATE Sample $t \sim \mathcal{U}([0,1])$
    \STATE $\rvz_0 = h_\phi(\rvx_0)$
    \STATE $\rvz_t = \operatorname{Forward}(\rvz_0, t)$
    \STATE $\hat{\mathcal{L}}(\phi,\psi) = \|g_\psi(\rvz_t,t) - \rvz_0\|^2_2$
    \STATE Backpropagate on $\nabla_{\psi} \hat{\mathcal{L}} (\phi,\psi)$ and run optimizer
  \end{algorithmic}
\end{algorithm}

Nested dropout induces a progressive ordering over registers: early registers must remain informative even when later ones are absent, while later registers provide refinements. We mirror this structure in the continuous diffusion stage using \emph{block-wise timestep offsets} aligned with the same geometric grouping used by nested dropout. Concretely, we partition the $K$ registers into contiguous blocks whose sizes grow geometrically (e.g., $1,2,4,8,\ldots$). For a sampled base diffusion time $t$, we add a block-dependent offset $\Delta_b$ (increasing with the block index) and train all registers in block $b$ at the effective time
\[
t^{(b)} \;=\; \mathrm{clip}\big(t + \Delta_b\big), \qquad \Delta_1 < \Delta_2 < \cdots < \Delta_B .
\]
Importantly, all blocks are trained across the full range of diffusion times; the offsets do \emph{not} restrict a block to a narrower noise regime. Rather, they impose a consistent \emph{relative ordering} of effective times across blocks for the same base $t$: later blocks are always evaluated at a larger effective time than earlier blocks. This aligns the continuous diffusion objective with the progressive register ordering induced by nested dropout, encouraging the denoiser to recover the information carried by early registers before relying on later registers for refinements.

\emph{ConThenDisc} uses an \emph{unconditional} continuous diffusion prior over register banks. We first construct a dataset of clean registers $\rvz_0=h_\phi(\rvx_0)$ from the frozen encoder, and train a DDPM in register space with $\rvx_0$-prediction: we sample $t\in[0,1]$, draw $\rvz_t\sim q(\rvz_t\mid \rvz_0)$, and optimize
\[
\mathcal{L}_{\mathrm{cont}}(\psi)
\;=\;
\mathbb{E}\big[\|g_\psi(\rvz_t,t)-\rvz_0\|_2^2\big].
\]
At generation time, we sample $\hat \rvz_0$ by running the reverse process from Gaussian noise, and decode it into tokens using the guided demasker conditioned on $\hat \rvz_0$.

\emph{ConWithinDisc} trains a \emph{conditional} continuous diffusion model that produces register banks consistent with a partially observed sequence. Training proceeds as follows: Given a clean sequence $\rvx_0$, we sample a masking ratio $r\sim\mathrm{Unif}[0,1]$ and construct a partially masked sequence $\rvx^{(r)}$ by masking an $r$ fraction of tokens. We then reuse the same encoder $h_\phi$ to compute a conditioning embedding from $\rvx^{(r)}$, but apply an attention mask that blocks information flow through masked positions so that the conditioning signal depends only on visible tokens. The diffusion model is conditioned on this embedding through cross-attention layers, while the diffusion state is a noisy version of the \emph{clean} register bank $\rvz_0=h_\phi(\rvx_0)$. Concretely, we sample $t\in[0,1]$, draw $\rvz_t\sim q(\rvz_t\mid \rvz_0)$, and train
\[
\mathcal{L}_{\mathrm{cond}}(\psi)
\;=\;
\mathbb{E}\big[\|g_\psi(\rvz_t,t \mid h_\phi(\rvx^{(r)})) - \rvz_0\|_2^2\big].
\]
This objective teaches the model to denoise registers while respecting the partial evidence provided by visible tokens. At inference, we fix the masking ratio to $r=0.5$ to form the conditioning input, sample a register bank from the conditional diffusion model, and pass it to the guided demasker to complete the sequence.

To compare the unconditional (\emph{ConThenDisc}) and conditional (\emph{ConWithinDisc}) continuous diffusion models, we report precision--recall statistics using the PRDC framework of \cite{ferjad2020icml}. Specifically, we generate 5{,}000 samples and represent each sample by \(K{=}128\) register vectors. We treat each register position as an independent feature space: for every register index \(j\in\{1,\ldots,K\}\), we compute PRDC between generated and real samples using the corresponding vectors \(\{\mathbf{z}_0^{(j)}\}\), and report the final score by averaging each PRDC metric across all 128 registers.
For the unconditional model, we obtain \(\text{precision}=0.956\pm0.010\), \(\text{recall}=0.651\pm0.048\), \(\text{density}=1.105\pm0.099\), and \(\text{coverage}=0.870\pm0.021\). The conditional model substantially improves recall to \(\approx 0.74\) (with similar precision), indicating better coverage of the real-data manifold.

To directly verify that conditioning improves denoising, we measure reconstruction quality across diffusion times by computing PSNR between the denoiser prediction and the ground-truth registers as a function of $t$. Specifically, we compare $\mathrm{PSNR}(g_\psi(\rvz_t,t),\rvz_0)$ for the unconditional model against $\mathrm{PSNR}(g_\psi(\rvz_t,t\mid h_\phi(\rvx^{(r)})),\rvz_0)$ for the conditional model. As shown in Figure~\ref{fig:psnr}, the conditional denoiser attains higher PSNR over a wide range of timesteps, indicating that it effectively leverages the partial-token evidence provided through $h_\phi(\rvx^{(r)})$.

\begin{figure}[!h
]
  \centering
  \includegraphics[width=0.6\linewidth]
  {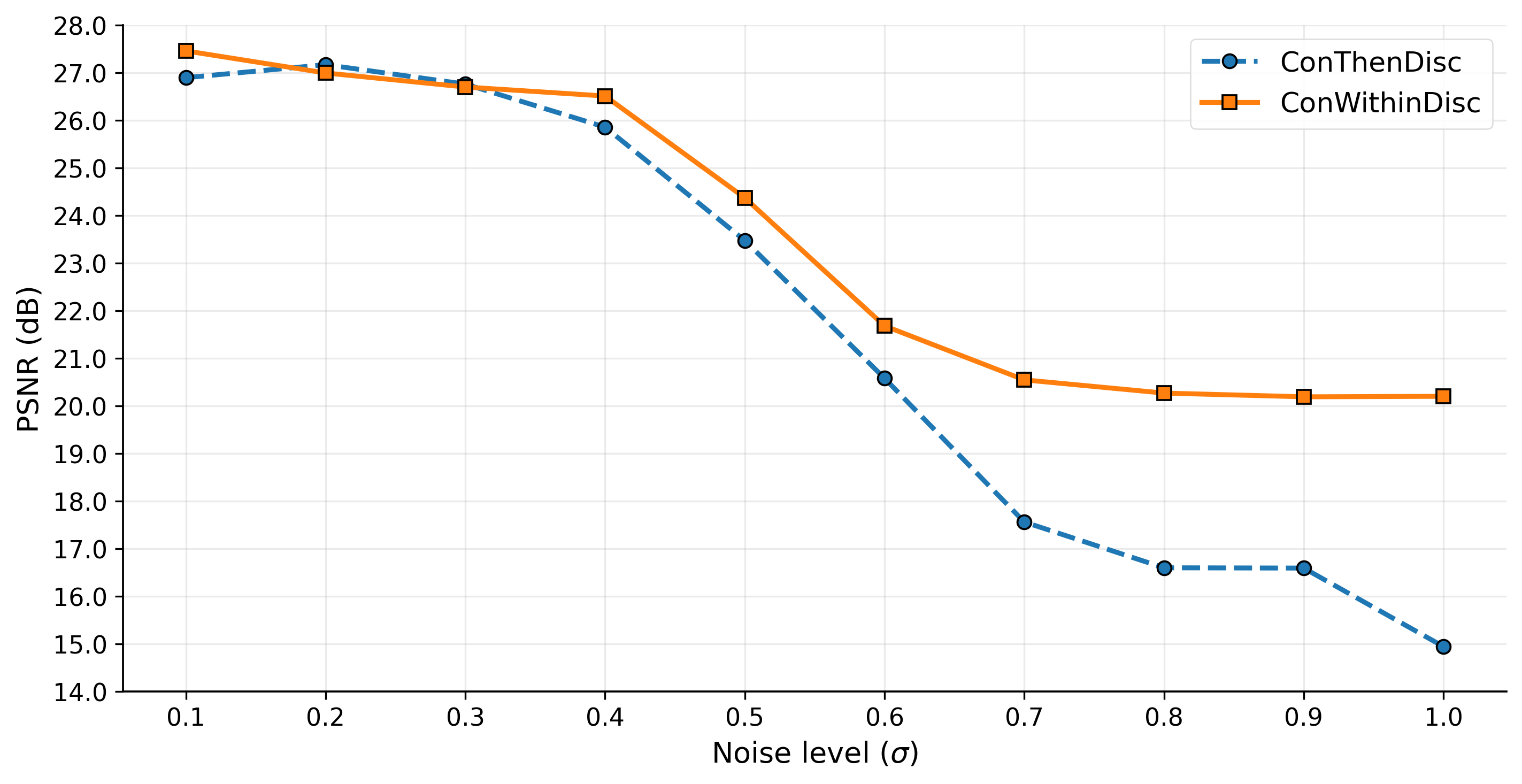}
  \caption{PSNR of denoiser predictions against ground-truth registers across diffusion time. Conditioning (\emph{ConWithinDisc}) yields consistently higher PSNR than the unconditional model (\emph{ConThenDisc}), demonstrating improved reconstruction from noisy registers.}
  \label{fig:psnr}
\end{figure}






\section{Additional Details on the MDM-based AutoEncoder}
\label{app: autoencoder}

Figure~\ref{fig:AutoEncoder} and 
Algorithm~\ref{alg:auto-encoding} present the MDM-based autoencoding procedure. Note that lines 3-9 in this algorithm represent a regular MDM, empowered by the $\rvz_0$ guidance.

\begin{figure}[ht]
    \centering
    \includegraphics[width=0.5\textwidth]{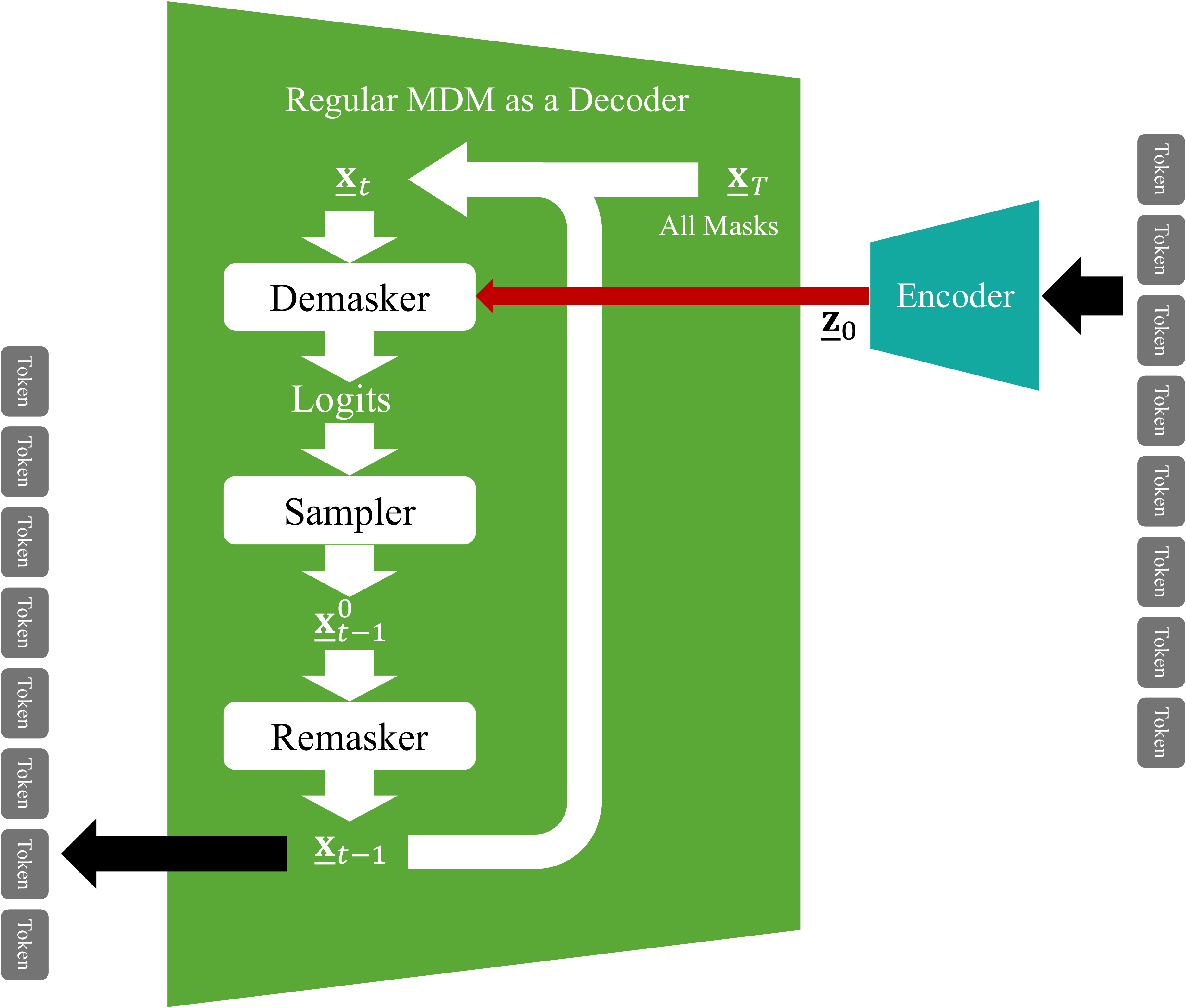}
    \caption{The proposed autoencoder: a token sequence $\rvx_0$ is encoded to a latent representation $\rvz_0$, and its decoding is done via a regular MDM.}
    \label{fig:AutoEncoder}
\end{figure}

\begin{algorithm}[htb]
  \caption{Discrete-Continuum-Discrete Autoencoder}
  \label{alg:auto-encoding}
  \begin{algorithmic}[1]
    \STATE {\bfseries Input:} Clean sequence $\rvx_0 \sim p_{\text{data}}$, encoder $h_\phi$, demasker $f_\theta$, number of steps $T$
    \STATE Encode to continuous latent space: $\rvz_0 \leftarrow h_\phi(\rvx_0)$
    \STATE Initialize from fully masked vector: $t \leftarrow 1$, $\rvx_t \leftarrow \rvm$
    \WHILE{$t > 0$}
      \STATE Predict clean sample: $\hat{\rvx}_0 \sim f_\theta(\rvx_t, t, \rvz_0)$ 
      \STATE Decrement time: $t \leftarrow t - 1/T$
      \STATE Apply forward masking: $\rvx_{t} \leftarrow \operatorname{Forward}(\hat{\rvx}_0, t)$
    \ENDWHILE
    \STATE \textbf{return} Reconstructed sequence $\hat{\rvx}_0$
  \end{algorithmic}
\end{algorithm}

Table~\ref{tab:gen_metrics512} is similar to Table~\ref{tab:gen_metrics256}, referring to sequences of length $512$ tokens. The conclusions drawn from both tables are quite similar: Nearly perfect synthesized text is obtained, even for very low NFE and for larger block sizes.  

\begin{table}[h]
\centering
    \caption{\textbf{Autoencoder:} Performance measured via Generative Perplexity, and recovery error evaluated via Bert-Score and Character Error Rate (CER). The Table explores varying hyper-parameters of the MDM decoder, for a generative length of $512$ tokens.}
    \label{tab:gen_metrics512}
    \scriptsize
\begin{tabular}{lrrrrr}
\hline
\textbf{Block} & \textbf{NFE} & \textbf{Gen-PPL} & \textbf{Bert-Score} & \textbf{CER} \\ \hline
32 & 16  & 35.477 & 0.833 & 1.149 \\
32 & 32  & 15.554 & 0.877 & 0.237 \\
32 & 64  & 9.630  & 0.904 & 0.146 \\
32 & 128 & 8.170  & 0.959 & 0.118 \\
32 & 256 & 7.583  & 0.960 & 0.105 \\ \hline
64 & 8   & 33.738 & 0.764 & 1.100 \\
64 & 16  & 29.314 & 0.821 & 1.196 \\
64 & 32  & 11.834 & 0.935 & 0.183 \\
64 & 64  & 8.989  & 0.948 & 0.132 \\
64 & 128 & 7.989  & 0.955 & 0.113 \\
64 & 256 & 7.534  & 0.959 & 0.103 \\ \hline
128 & 8   & 35.547 & 0.856 & 1.180 \\
128 & 16  & 18.715 & 0.945 & 1.229 \\
128 & 32  & 10.806 & 0.953 & 1.218 \\
128 & 64  & 7.497  & 0.957 & 1.152 \\ \hline
256 & 8   & 22.527 & 0.935 & 1.240 \\
256 & 16  & 11.828 & 0.948 & 0.218 \\
256 & 32  & 9.277  & 0.955 & 0.176 \\
256 & 64  & 8.178  & 0.959 & 0.158 \\
256 & 128 & 7.646  & 0.960 & 0.149 \\ \hline
512 & 8   & 13.146 & 0.877 & 1.337 \\
512 & 16  & 9.820  & 0.904 & 0.189 \\
512 & 32  & 8.447  & 0.959 & 0.163 \\
512 & 64  & 7.832  & 0.960 & 0.152 \\
512 & 128 & 7.517  & 0.957 & 0.145 \\ \hline
\end{tabular}
\end{table}


\section{More on \emph{ConThenDisc} and \emph{ConWithinDisc}}
\label{app: More on ConThenDisc and ConWithinDisc}

Figure~\ref{fig:ablations_512} summarizes three ablations that characterize the interaction between our continuous-register conditioning and the discrete LLaDA demasking stage, all referring to sequence length of $512$ tokens. More specifically, 

(i) \textbf{Varying the number of registers (ConThenDisc baseline).}
We first vary the number of registers used to represent the continuous conditioning under \emph{ConThenDisc}, taking $K{=}8$ as the default setting. Increasing the register budget degrades the performance: $K{=}16$ remains close to the baseline, while larger settings ($K{=}32$ and $K{=}64$) consistently degrade generation quality.
A plausible explanation is that learning additional \emph{fine} registers is harder and does not necessarily benefit the downstream demasking process. Once early registers capture the dominant global signal, later registers are expected to encode residual refinements; however, these higher-index registers are more weakly constrained, can become redundant or noisy, and may fail to provide complementary guidance. In this regime, adding registers can dilute the conditioning or introduce instability rather than improving the final sample quality.

(ii) \textbf{When to refresh the conditioning (ConWithinDisc).}
Next, we ablate the timing of refreshing the continuous embedding during discrete demasking by updating after a fraction $\alpha \in \{0,0.25,0.50,0.75\}$ of the LLaDA steps. We include \texttt{@0} as a no-refresh reference corresponding to \emph{ConThenDisc} (plotted with a small horizontal shift for readability). The results indicate that the midpoint update (\texttt{@0.50}) is the most effective operating point. Updating too early (\texttt{@0.25}) behaves closer to \texttt{@0}: at the beginning of demasking, the discrete trajectory has not yet accumulated sufficient structure for a refreshed embedding to provide meaningfully different guidance. Updating too late (\texttt{@0.75}) offers limited benefit because fewer decisions remain; by that stage many tokens are already committed, and committed tokens remain unchanged during LLaDA demasking, reducing the degrees of freedom through which refreshed conditioning can influence the final generation. Overall, the midpoint refresh provides the best trade-off between having enough structure to exploit and still having sufficient flexibility to steer the remaining undecided tokens.

(iii) \textbf{Block-wise timestep offsets.}
Finally, we evaluate \emph{block-wise timestep offsets} in the continuous diffusion stage, introduced to promote a clearer hierarchical organization across registers and to better align training with the progressive ordering induced by nested dropout. Empirically, these offsets are important for stable training and for strengthening the intended coarse-to-fine relationship among registers, while their effect at inference is moderate (approximately $0.03$ MAUVE in our ablations).

\begin{figure}[!h]\centering
    \includegraphics[width=0.52\linewidth]{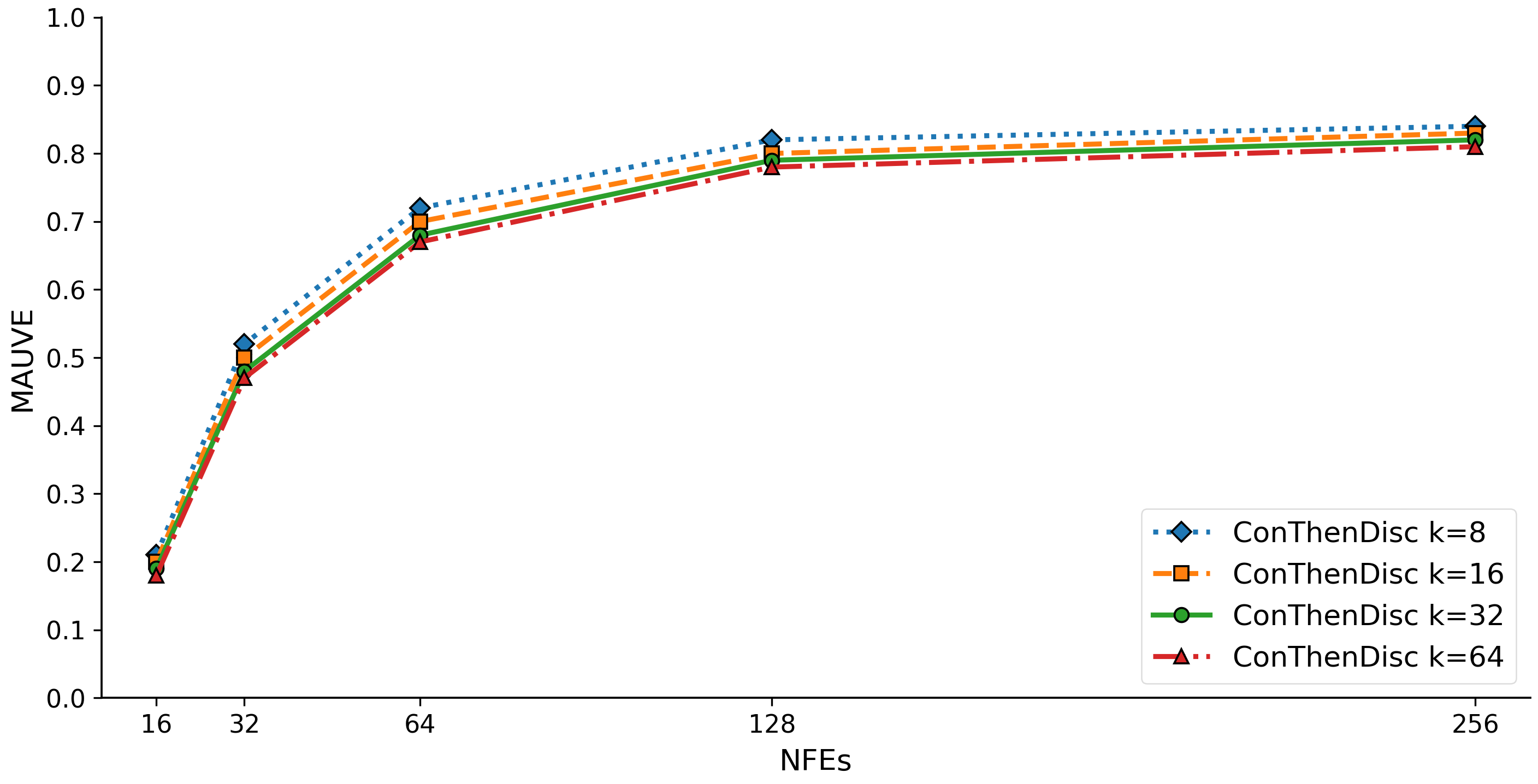}\\[-2pt]
    \includegraphics[width=0.52\linewidth]{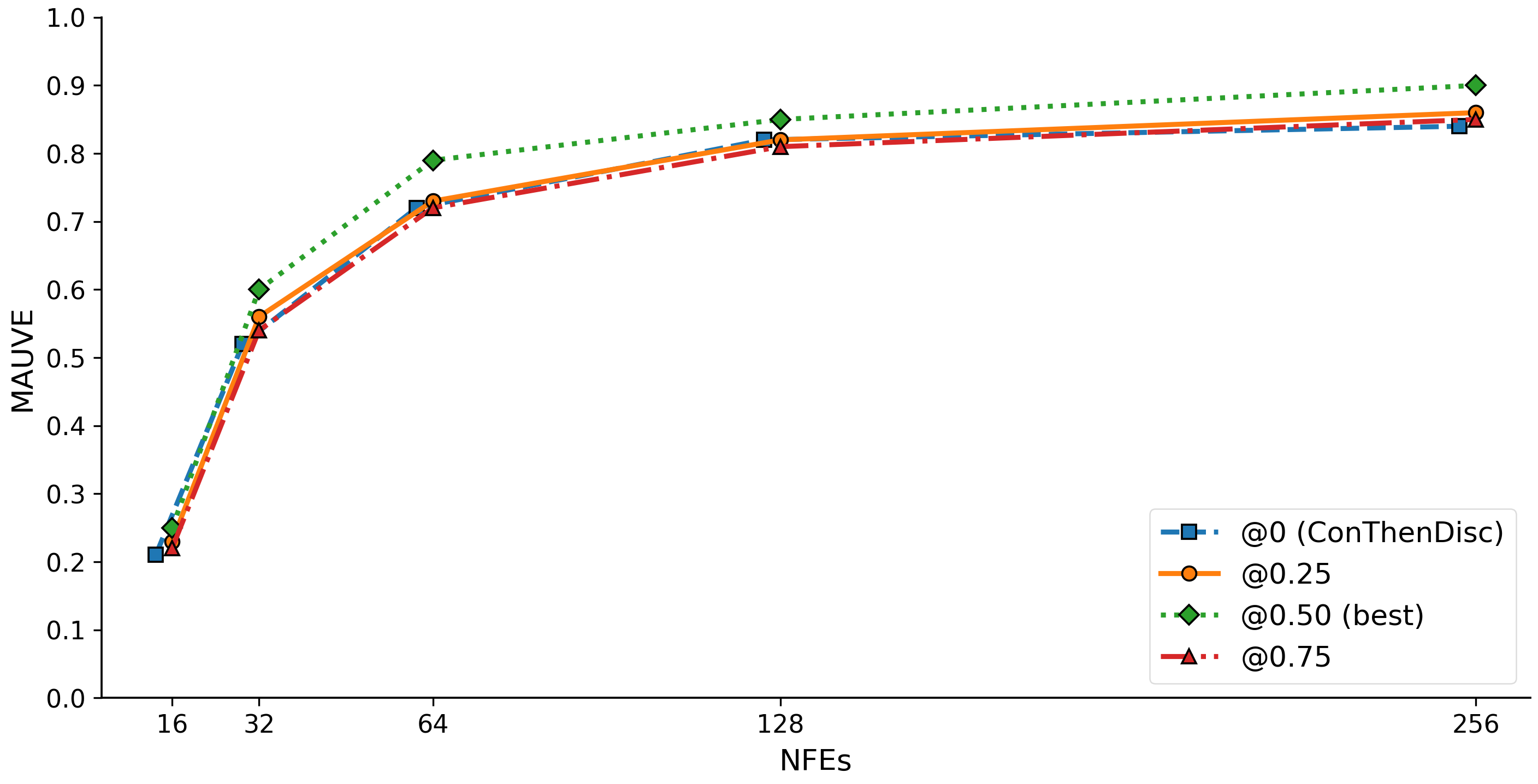}
    \caption{\textbf{Ablations at length 512.} \textbf{Top:} register-count ablation under \emph{ConThenDisc} (varying $K$). Increasing the number of registers beyond the baseline does not improve and can degrade performance, suggesting that learning and exploiting additional fine registers is challenging and not necessarily beneficial for discrete demasking. \textbf{Bottom:} conditioning refresh ablation, updating after a fraction $\alpha$ of LLaDA steps. The midpoint update (\texttt{@0.50}) is most effective; updating too early resembles the no-refresh baseline (\texttt{@0}, i.e., \emph{ConThenDisc}), while updating too late leaves limited room to affect generation as more tokens are already committed and remain unchanged during demasking.}
    \label{fig:ablations_512}
\end{figure}

Figure~\ref{fig:Con256Disc} adds to the content of Figure~\ref{fig:ConXDisc}, covering the case of generated sequences of length $256$ tokens. The conclusions drawn from this graph are quite similar to the ones drawn earlier, namely, \emph{ConThenDisc} improves substantially over the base LLaDA, and \emph{ConWithinDisc} further adds to this improvement. In this case, base LLaDA that uses $256$ NFE produces (i.e. generating one token at a time) parallels \emph{ConWithinDisc} that uses $40$ NFE, offering a $\times6$ speedup and beyond. 

\begin{figure}[!h]
    \centering
    \includegraphics[width=0.7\textwidth]{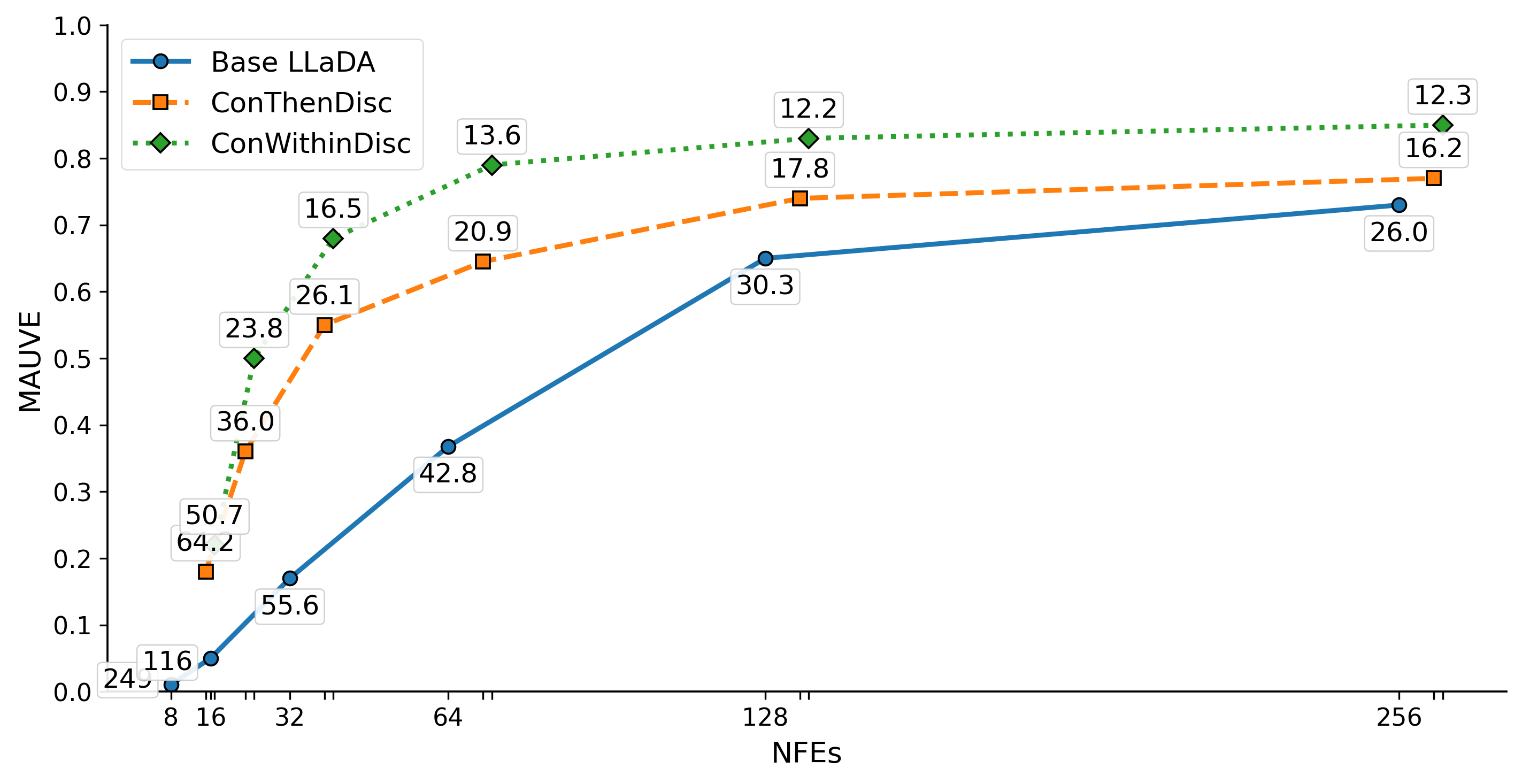}
    \caption{\textbf{Text Generation:} MAUVE and Generative Perplexity of generated text with base LLaDA, \emph{ConThenDisc} and \emph{ConWithinDisc}. 
    The sequence generated is of length $256$ tokens, handled as one block, and the number of registers in the embedding is $K=8$.    
    }
    \label{fig:Con256Disc}
\end{figure}

We now turn to present qualitative results, presenting programs generated by our \emph{ConThenDisc} pipeline.  
Figures~\ref{fig:len256_disc256_12},~\ref{fig:len256_disc128_12},~\ref{fig:len256_disc64_12},~\ref{fig:len256_disc32_12} focus on programs of length $256$ tokens, while Figures~\ref{fig:len512_disc256_12},~\ref{fig:len512_disc128_12},~\ref{fig:len512_disc64_12},~\ref{fig:len512_disc32_12} show $512$-length 
programs. For all experiments, we first sample continuous registers using DDIM for 128 steps. We then decode the resulting continuous registers into code using the LLaDA demasking decoder with a single block (full-length block size), conditioning the decoder on 8 continuous registers. We vary the number of \emph{discrete} demasking steps and compare 256, 128, 64, and 32 discrete denoising steps for each target length (256 and 512 tokens), reporting two samples per setting.


\begin{figure}[t!]\centering
  \begin{subfigure}[t]{0.49\textwidth}\centering
    \includegraphics[width=\linewidth,height=0.42\textheight,keepaspectratio]{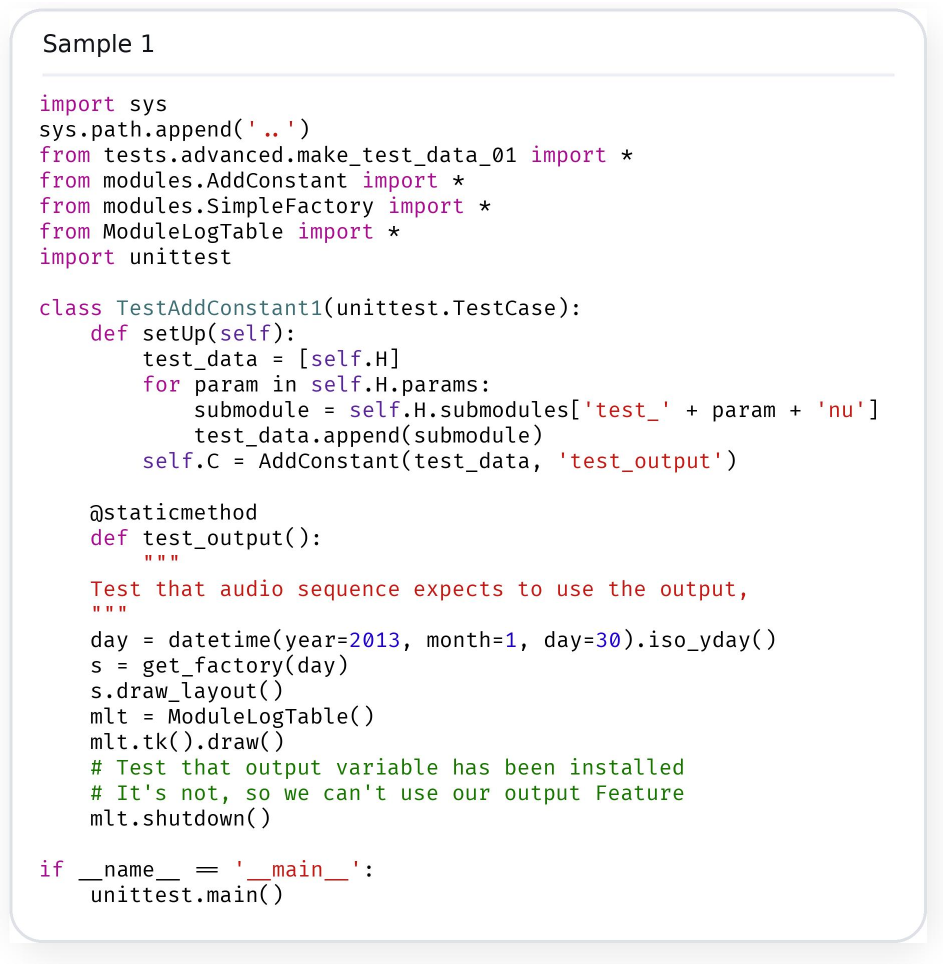}%
    \caption{Sample 1}\label{fig:len256_disc256_s1}
  \end{subfigure}\hfill
  \begin{subfigure}[t]{0.49\textwidth}\centering
    \includegraphics[width=\linewidth,height=0.42\textheight,keepaspectratio]{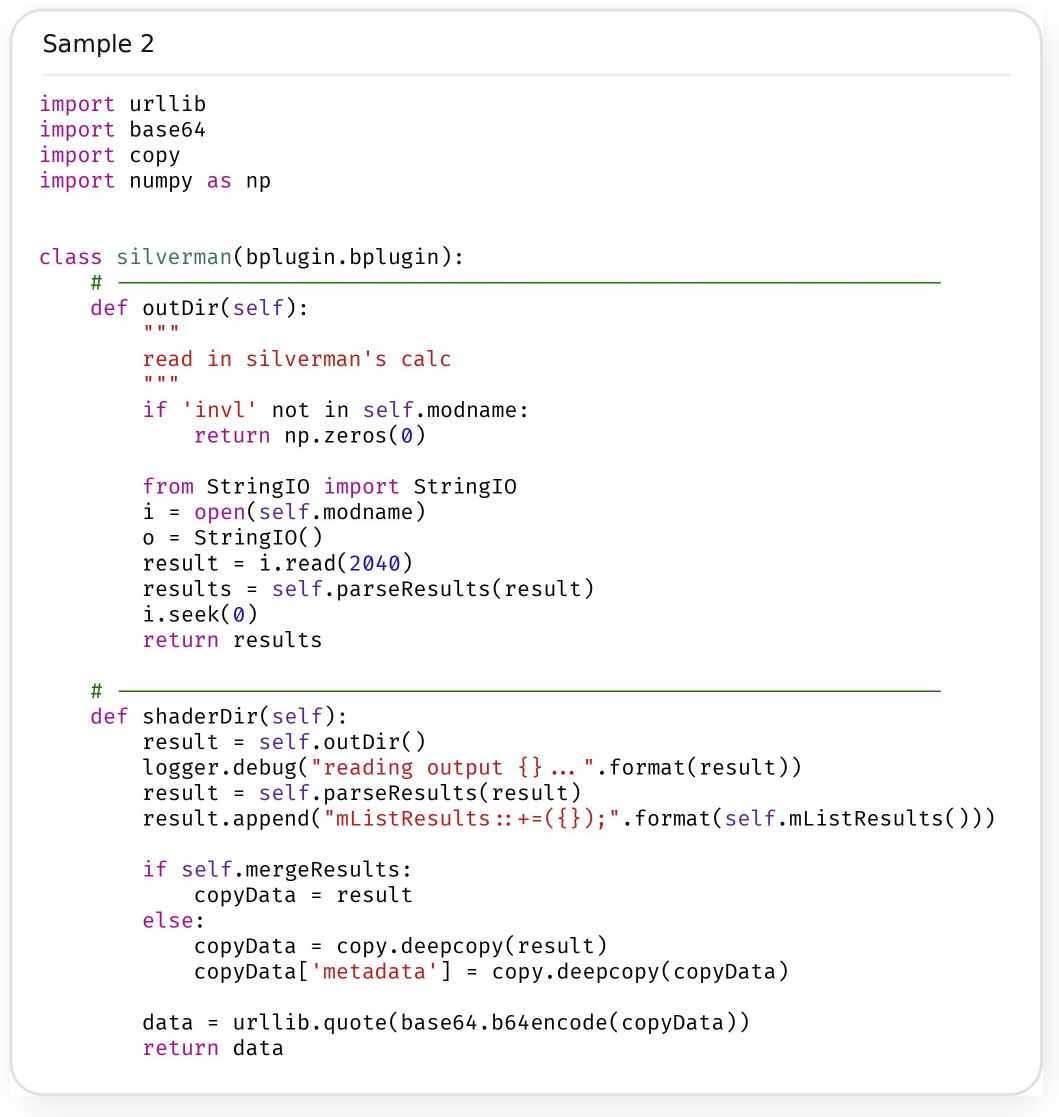}%
    \caption{Sample 2}\label{fig:len256_disc256_s2}
  \end{subfigure}
  \caption{\textbf{Qualitative code generations (length 256), samples 1--2.}
  Continuous registers are sampled with \emph{ConThenDisc} (128 DDIM steps) and decoded with LLaDA (256 discrete denoising steps, single block).}
  \label{fig:len256_disc256_12}
\end{figure}

\begin{figure}[t!]\centering
  \begin{subfigure}[t]{0.49\textwidth}\centering
    \includegraphics[width=\linewidth,height=0.42\textheight,keepaspectratio]{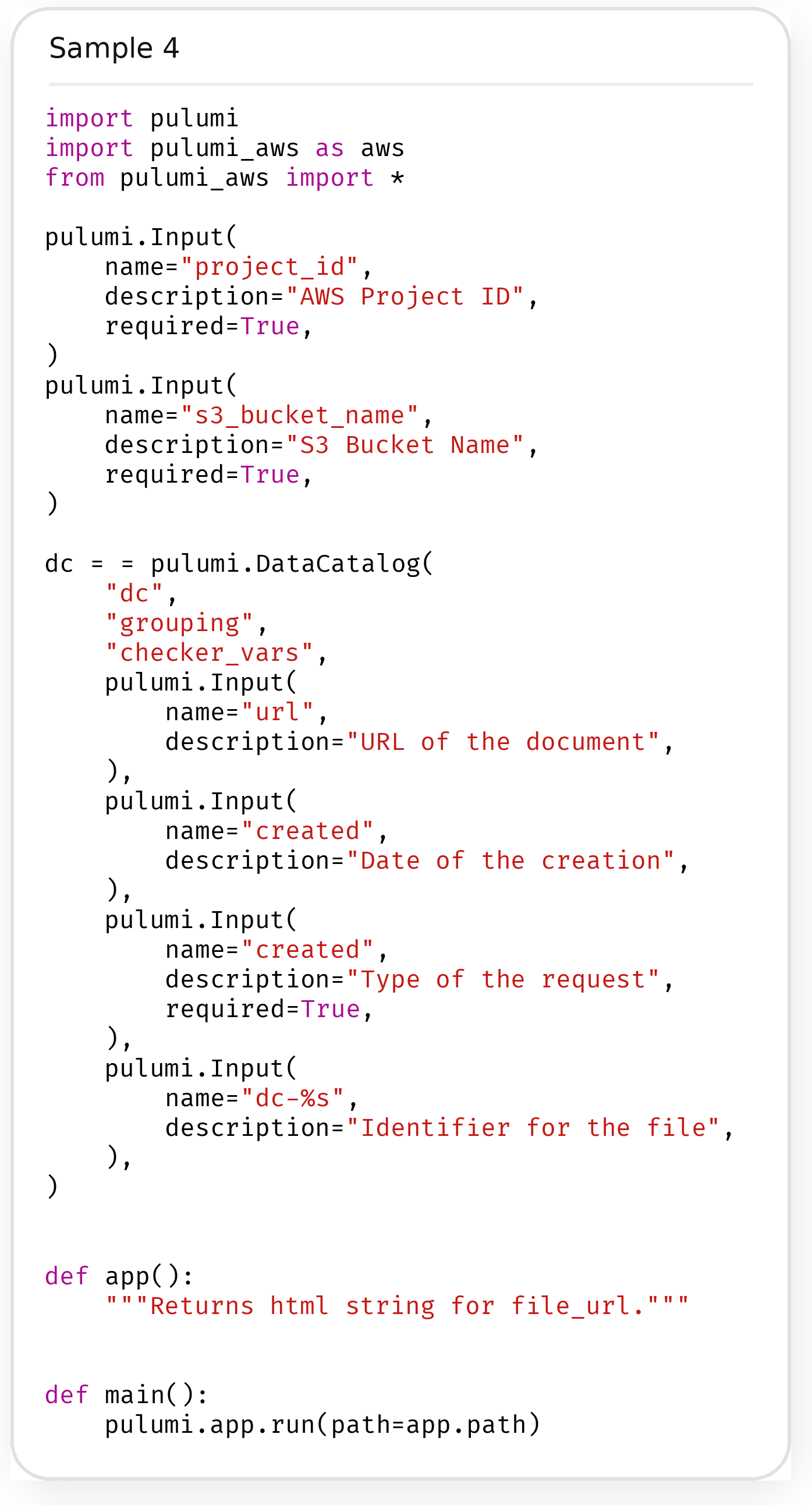}%
    \caption{Sample 1}\label{fig:len256_disc128_s1}
  \end{subfigure}\hfill
  \begin{subfigure}[t]{0.49\textwidth}\centering
    \includegraphics[width=\linewidth,height=0.42\textheight,keepaspectratio]{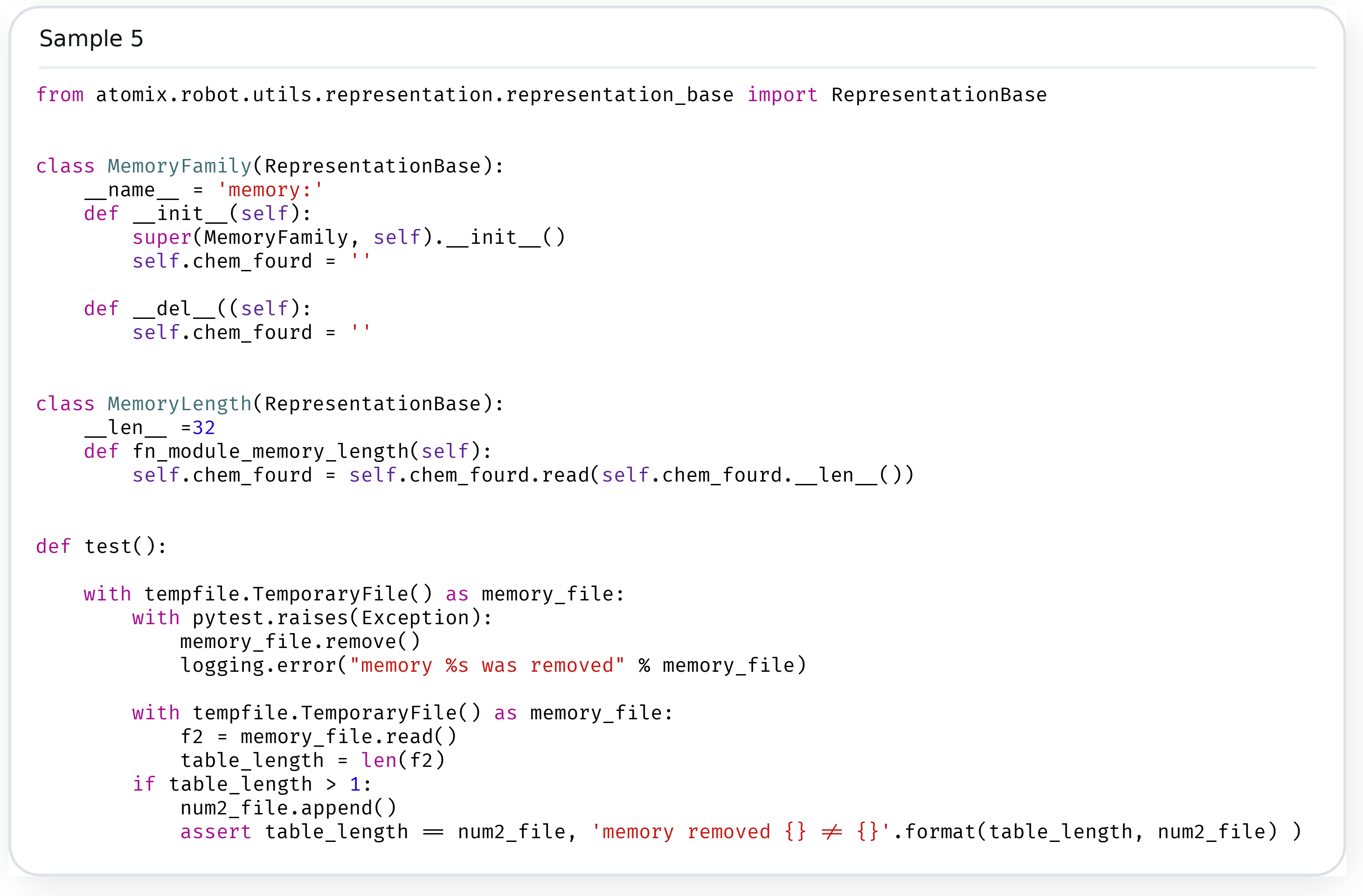}%
    \caption{Sample 2}\label{fig:len256_disc128_s2}
  \end{subfigure}
  \caption{\textbf{Qualitative code generations (length 256), samples 1--2.}
  Continuous registers are sampled with \emph{ConThenDisc} (128 DDIM steps) and decoded with LLaDA (128 discrete denoising steps, single block).}
  \label{fig:len256_disc128_12}
\end{figure}

\begin{figure}[t!]\centering
  \begin{subfigure}[t]{0.49\textwidth}\centering
    \includegraphics[width=\linewidth,height=0.42\textheight,keepaspectratio]{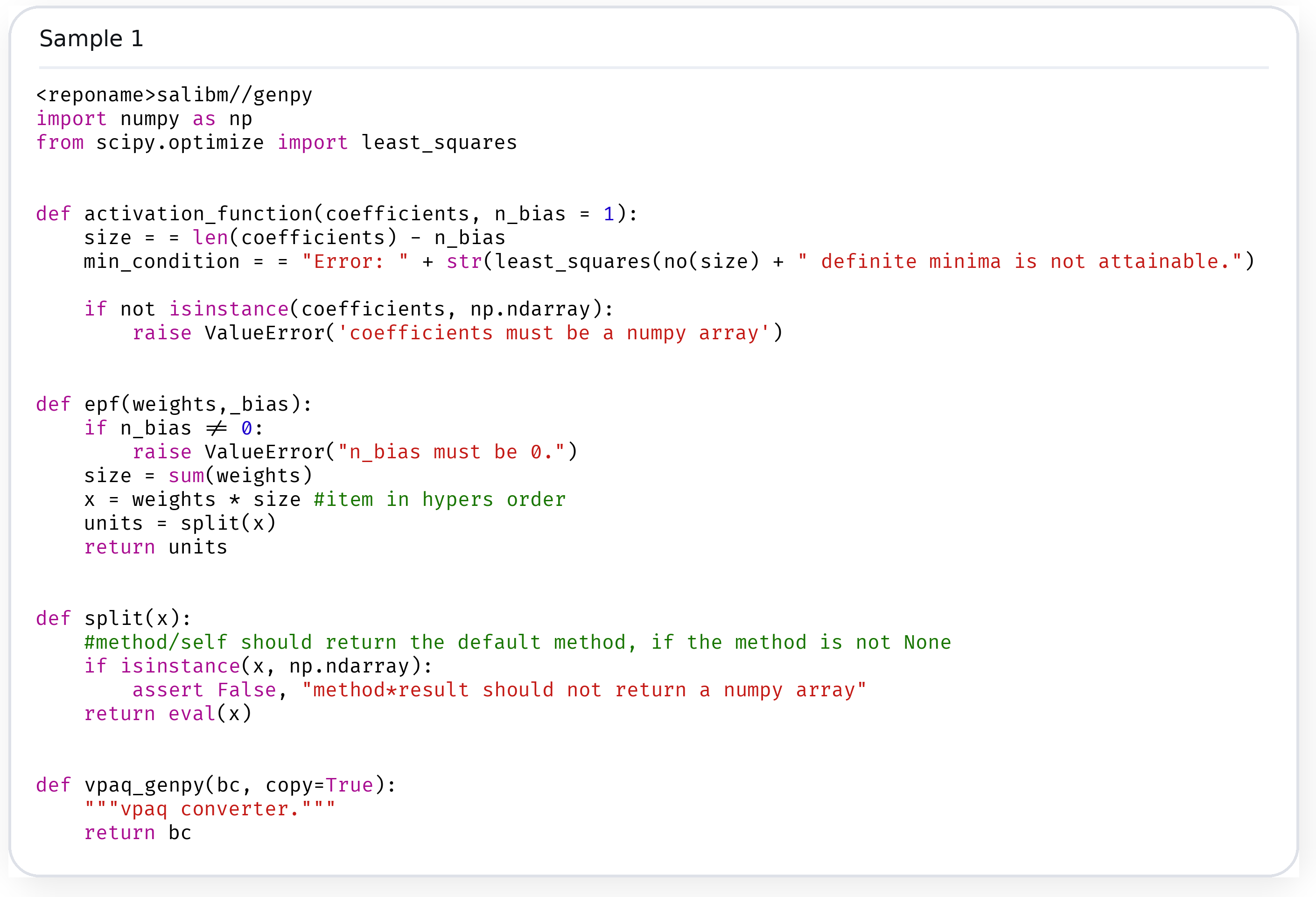}%
    \caption{Sample 1}\label{fig:len256_disc64_s1}
  \end{subfigure}\hfill
  \begin{subfigure}[t]{0.49\textwidth}\centering
    \includegraphics[width=\linewidth,height=0.42\textheight,keepaspectratio]{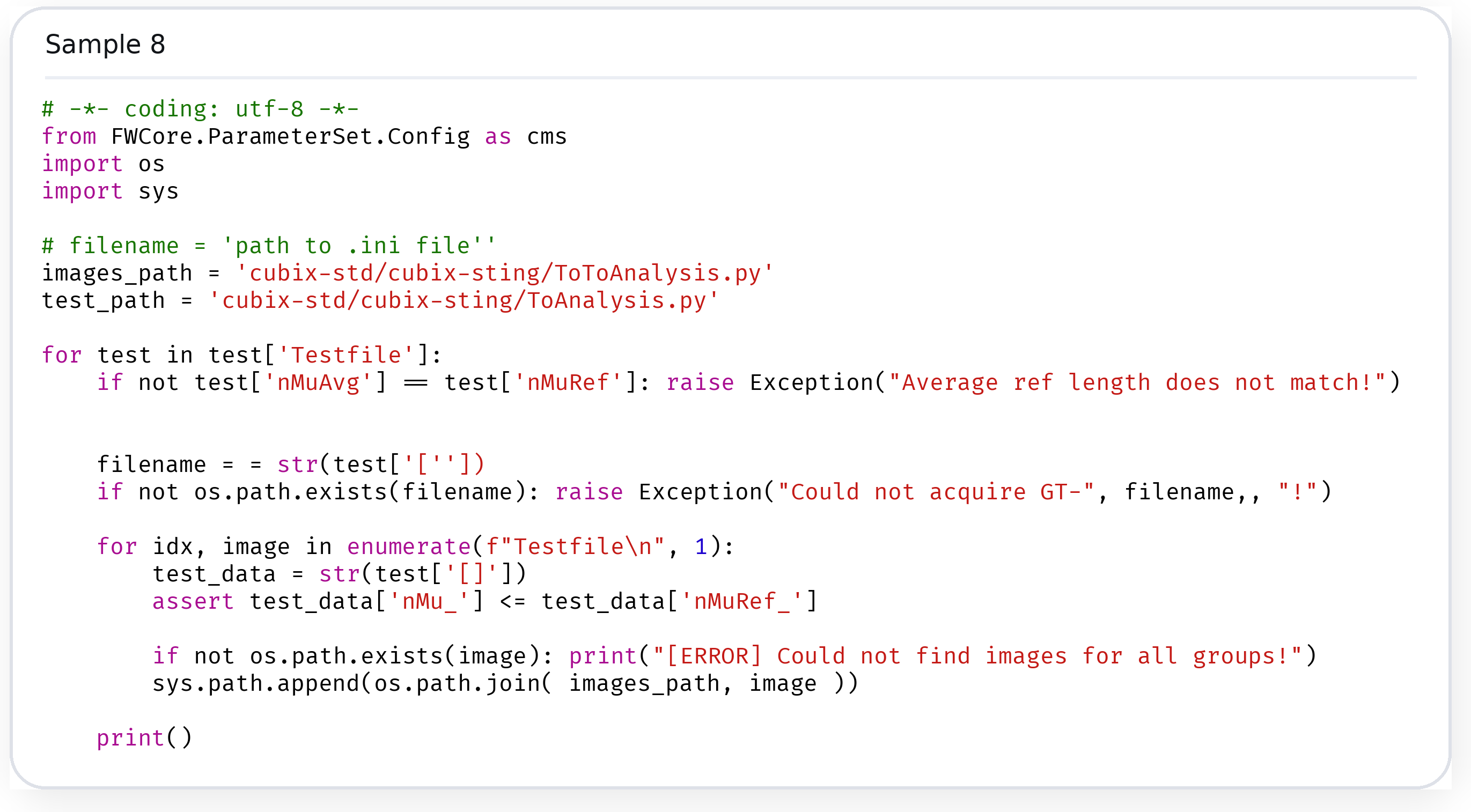}%
    \caption{Sample 2}\label{fig:len256_disc64_s2}
  \end{subfigure}
  \caption{\textbf{Qualitative code generations (length 256), samples 1--2.}
  Continuous registers are sampled with \emph{ConThenDisc} (128 DDIM steps) and decoded with LLaDA (64 discrete denoising steps, single block).}
  \label{fig:len256_disc64_12}
\end{figure}

\begin{figure}[t!]\centering
  \begin{subfigure}[t]{0.49\textwidth}\centering
    \includegraphics[width=\linewidth,height=0.42\textheight,keepaspectratio]{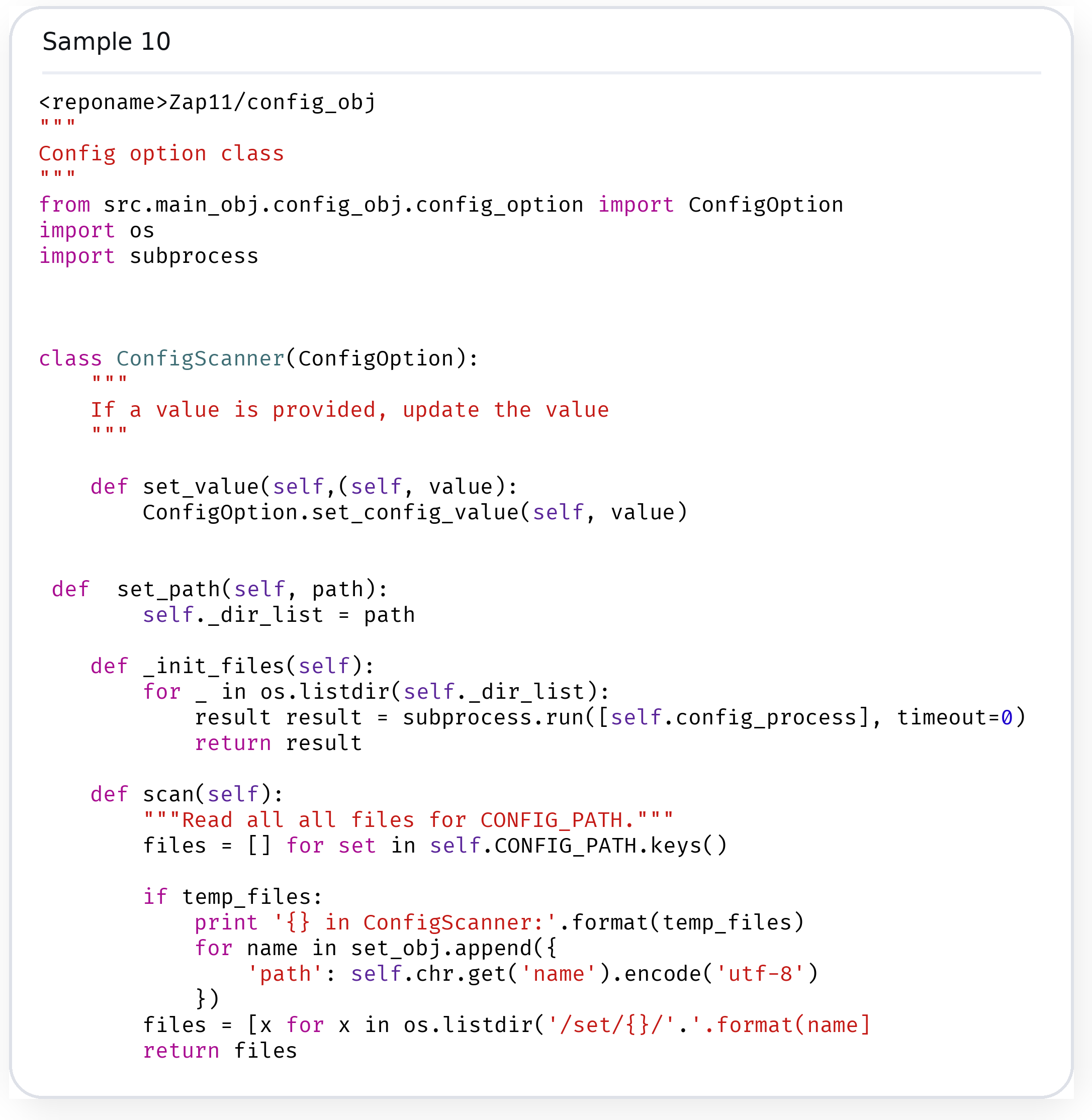}%
    \caption{Sample 1}\label{fig:len256_disc32_s1}
  \end{subfigure}\hfill
  \begin{subfigure}[t]{0.49\textwidth}\centering
    \includegraphics[width=\linewidth,height=0.42\textheight,keepaspectratio]{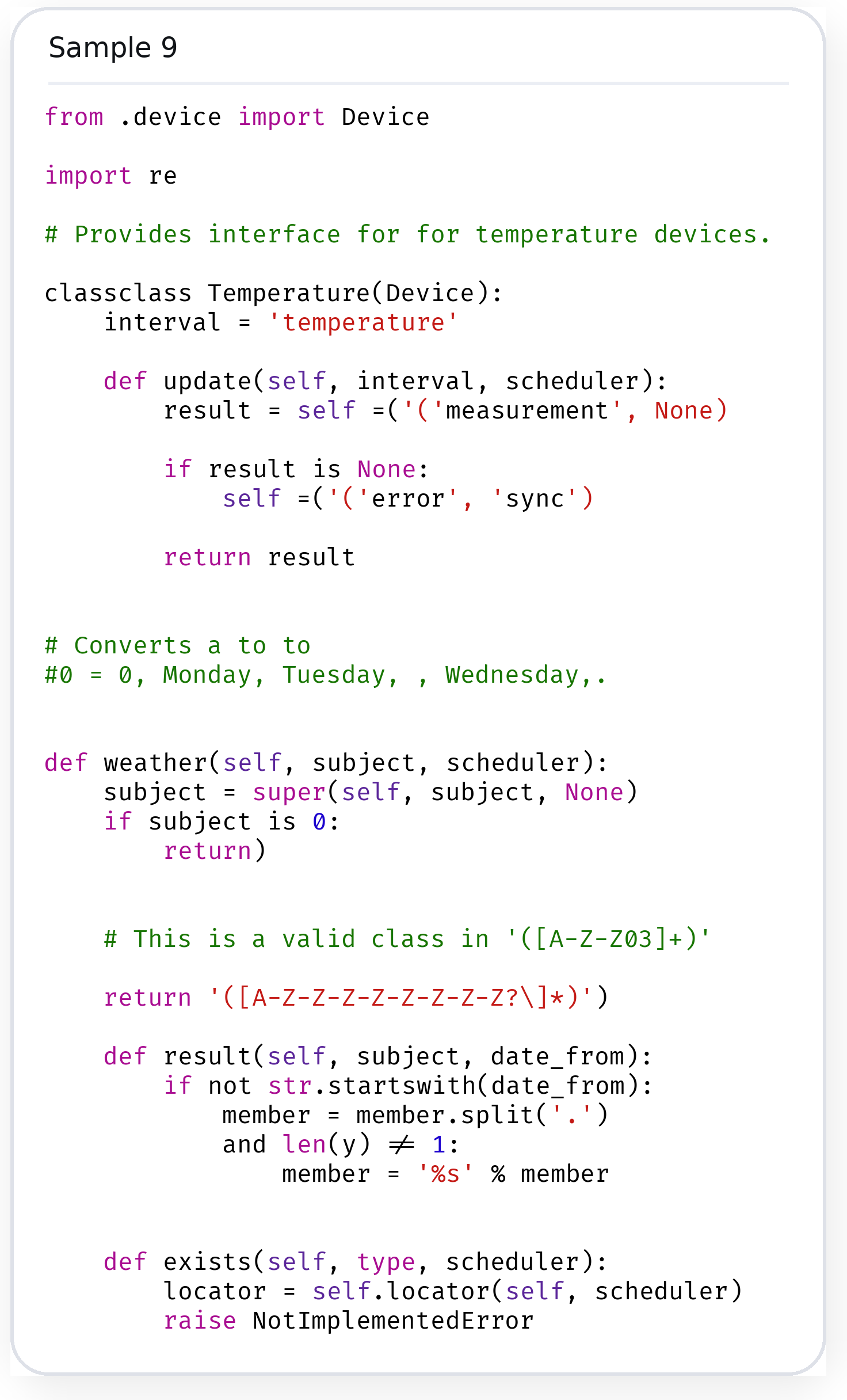}%
    \caption{Sample 2}\label{fig:len256_disc32_s2}
  \end{subfigure}
  \caption{\textbf{Qualitative code generations (length 256), samples 1--2.}
  Continuous registers are sampled with \emph{ConThenDisc} (128 DDIM steps) and decoded with LLaDA (32 discrete denoising steps, single block).}
  \label{fig:len256_disc32_12}
\end{figure}

\begin{figure}[t!]\centering
    \begin{subfigure}[t]{0.49\textwidth}\centering
        \includegraphics[width=\linewidth,height=0.42\textheight,keepaspectratio]{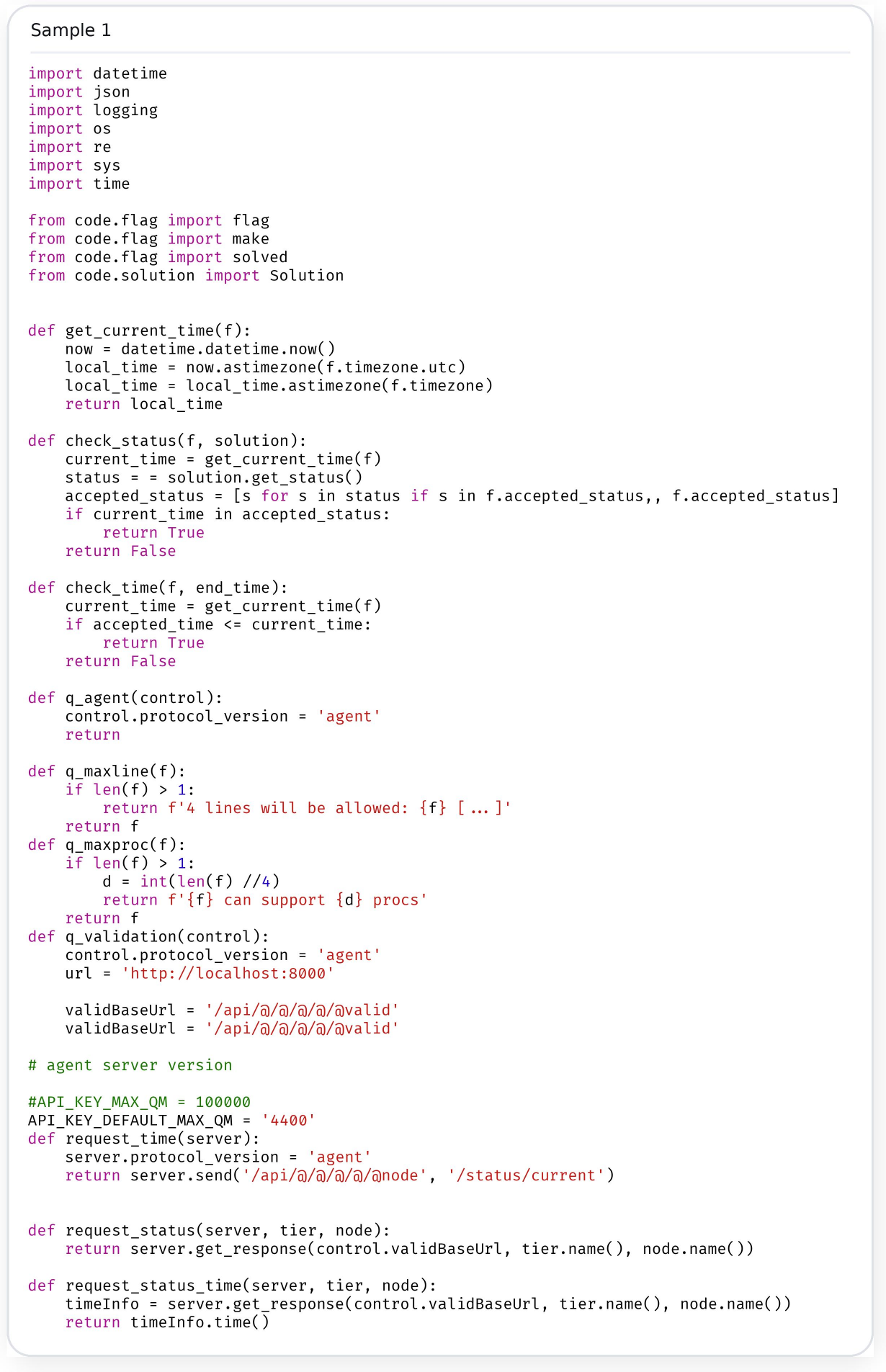}%
        \caption{Sample 1}\label{fig:len512_disc256_s1}
    \end{subfigure}\hfill
    \begin{subfigure}[t]{0.49\textwidth}\centering
        \includegraphics[width=\linewidth,height=0.42\textheight,keepaspectratio]{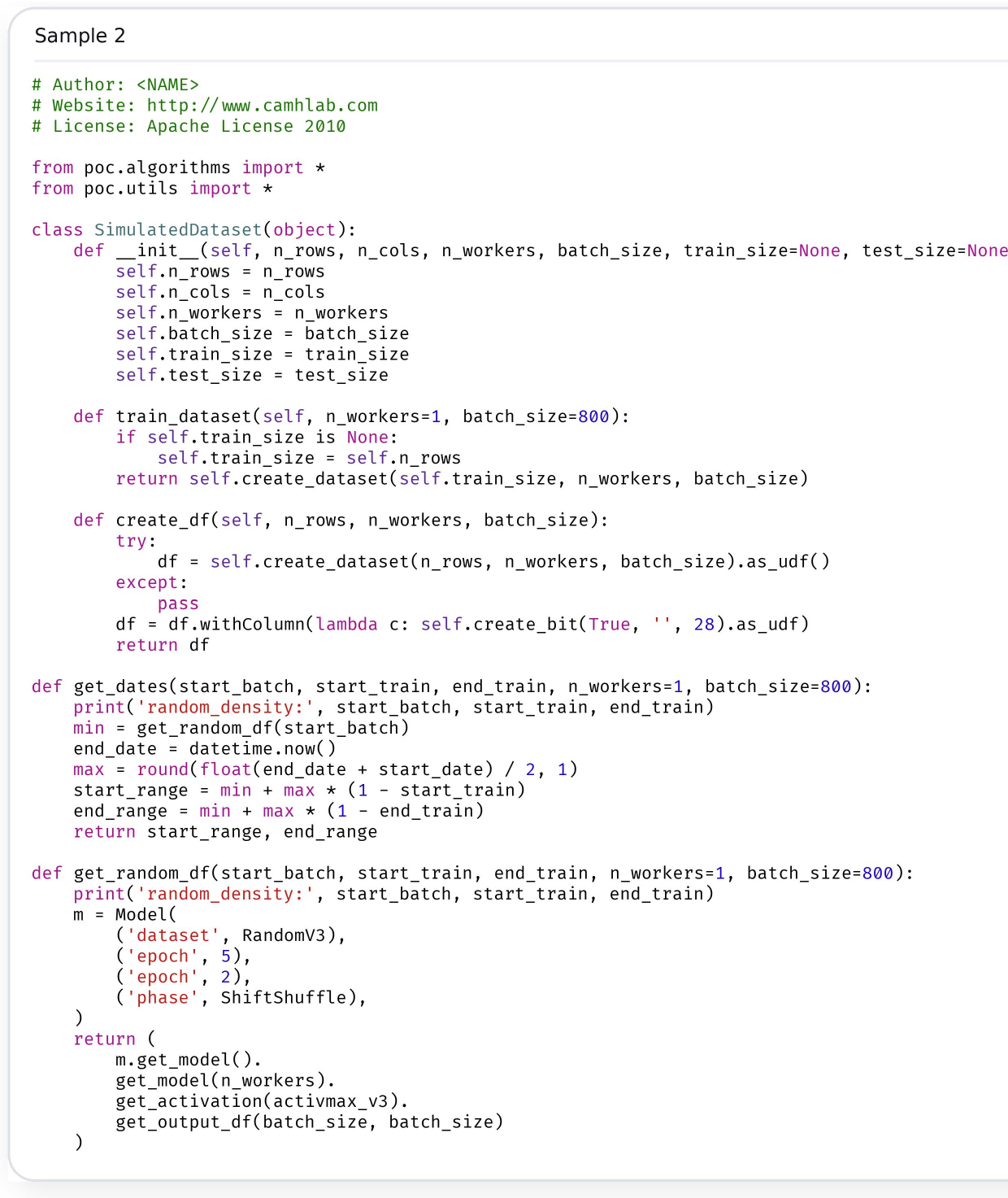}%
        \caption{Sample 2}\label{fig:len512_disc256_s2}
    \end{subfigure}
    \caption{\textbf{Qualitative code generations (length 512), samples 1--2.}
    Continuous registers are sampled with \emph{ConThenDisc} (128 DDIM steps) and decoded with LLaDA (256 discrete denoising steps, single block).}
    \label{fig:len512_disc256_12}
\end{figure}

\begin{figure}[t!]\centering
    \begin{subfigure}[t]{0.49\textwidth}\centering
        \includegraphics[width=\linewidth,height=0.42\textheight,keepaspectratio]{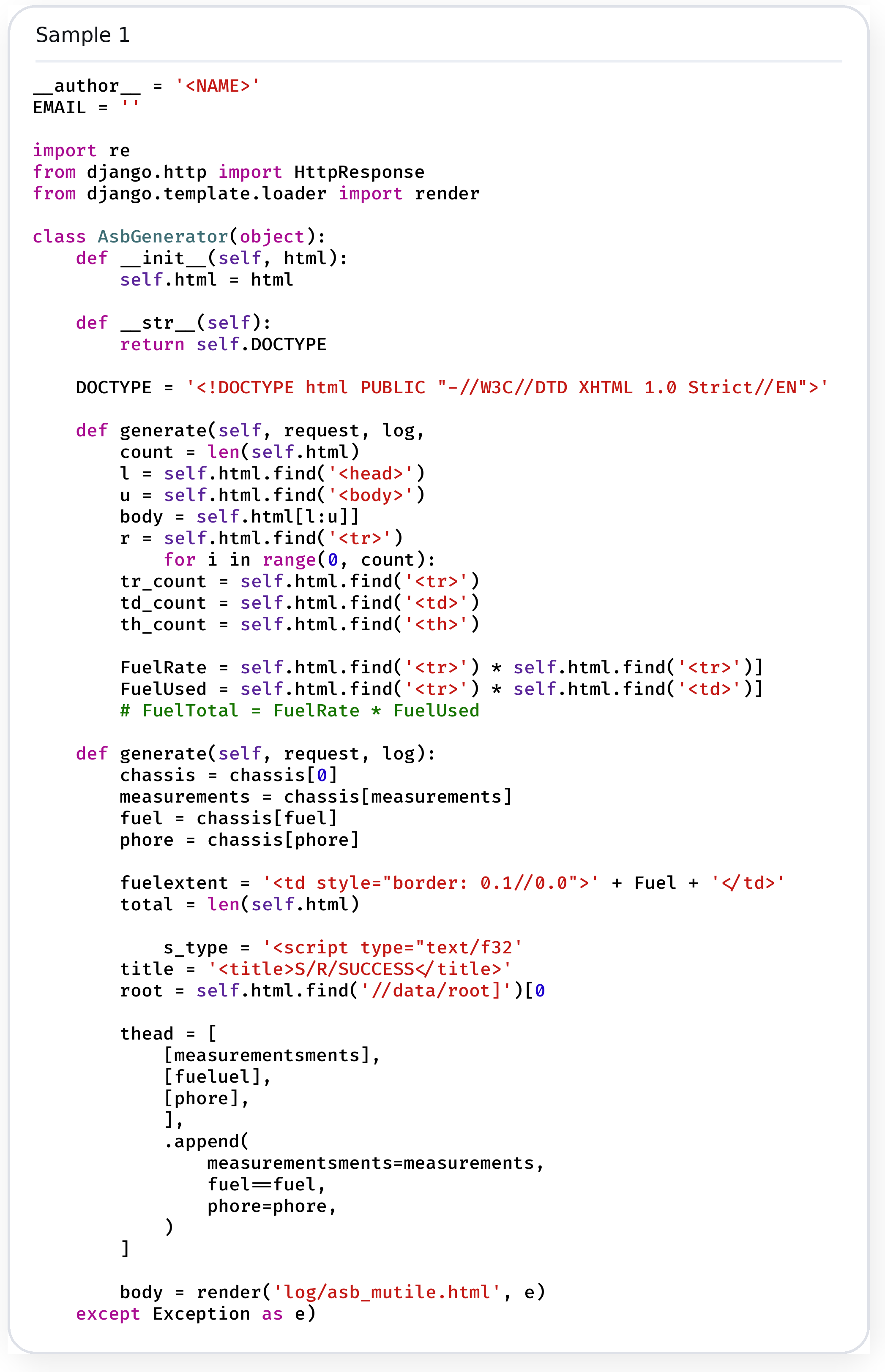}%
        \caption{Sample 1}\label{fig:len512_disc128_s1}
    \end{subfigure}\hfill
    \begin{subfigure}[t]{0.49\textwidth}\centering
        \includegraphics[width=\linewidth,height=0.42\textheight,keepaspectratio]{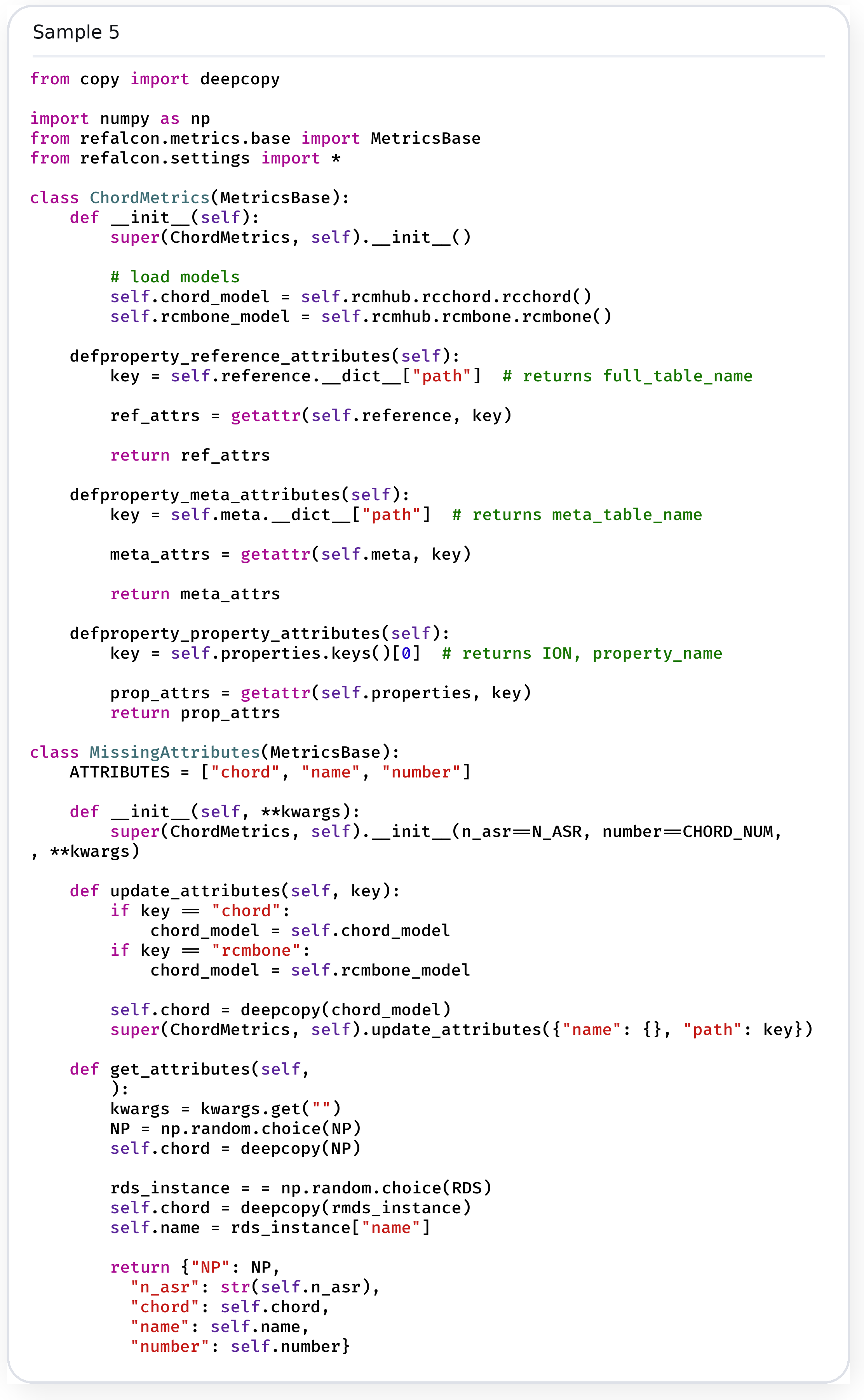}%
        \caption{Sample 2}\label{fig:len512_disc128_s2}
    \end{subfigure}
    \caption{\textbf{Qualitative code generations (length 512), samples 1--2.}
    Continuous registers are sampled with \emph{ConThenDisc} (128 DDIM steps) and decoded with LLaDA (128 discrete denoising steps, single block).}
    \label{fig:len512_disc128_12}
\end{figure}

\begin{figure}[t!]\centering
    \begin{subfigure}[t]{0.49\textwidth}\centering
        \includegraphics[width=\linewidth,height=0.42\textheight,keepaspectratio]{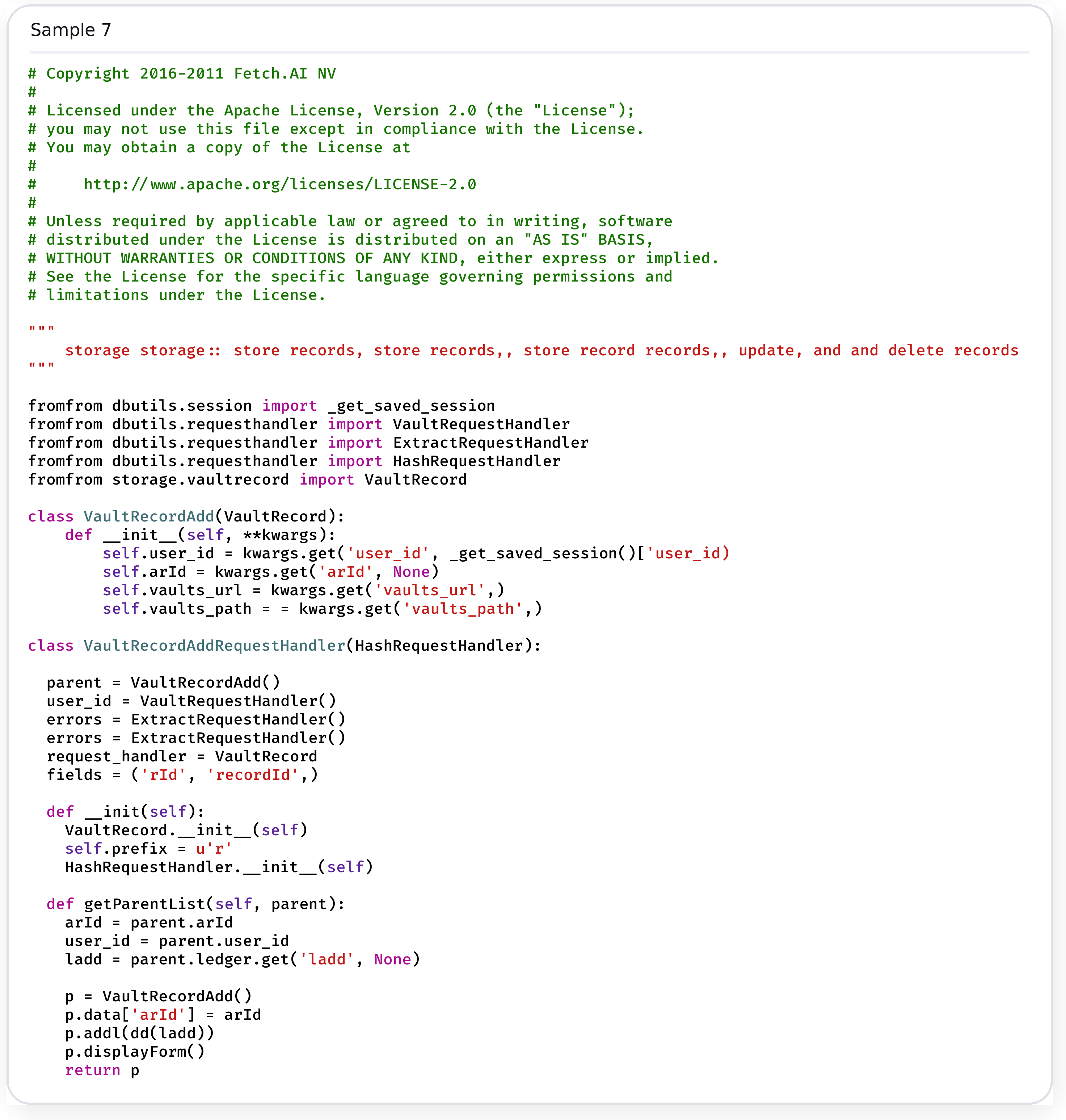}%
        \caption{Sample 1}\label{fig:len512_disc64_s1}
    \end{subfigure}\hfill
    \begin{subfigure}[t]{0.49\textwidth}\centering
        \includegraphics[width=\linewidth,height=0.42\textheight,keepaspectratio]{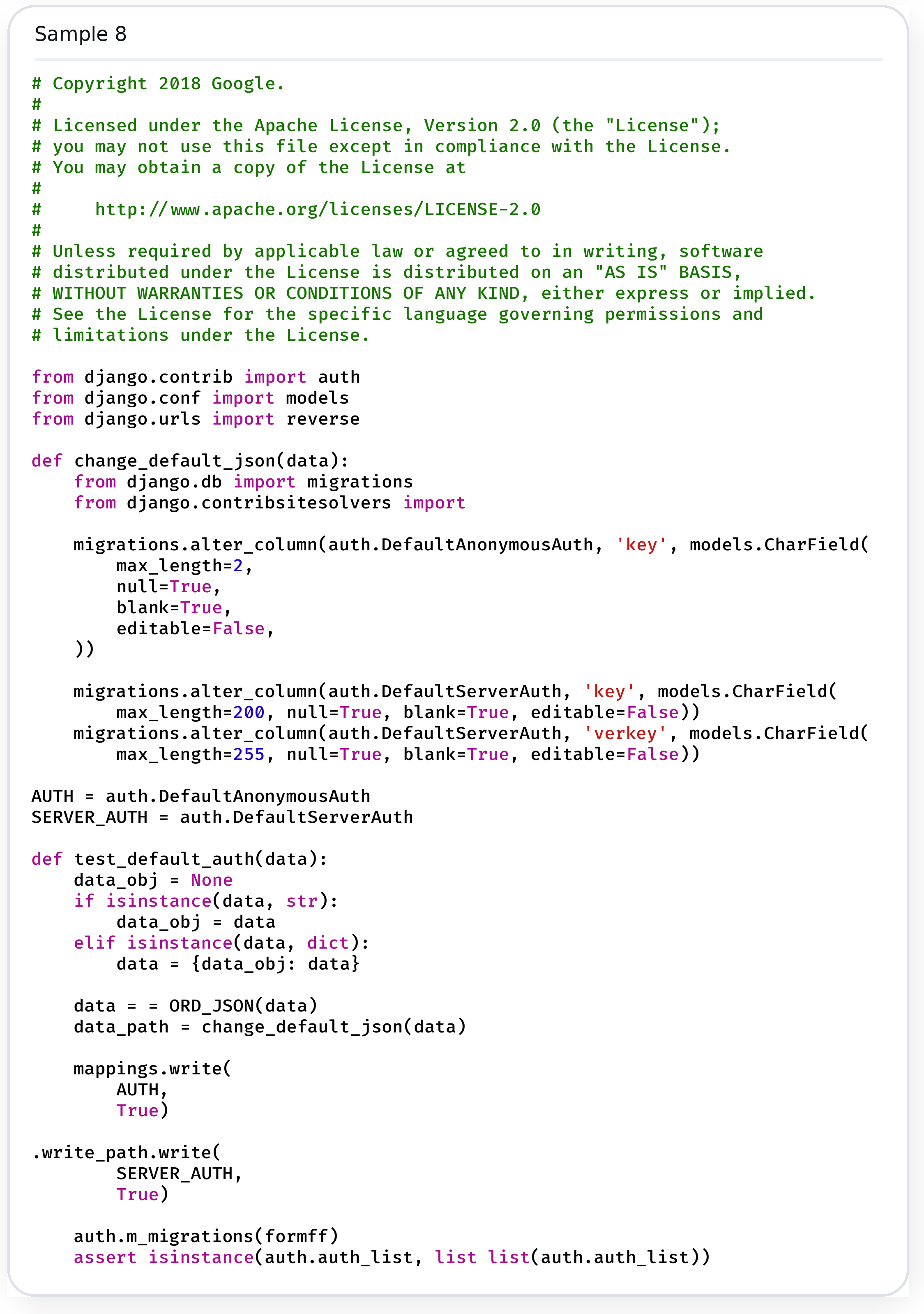}%
        \caption{Sample 2}\label{fig:len512_disc64_s2}
    \end{subfigure}
    \caption{\textbf{Qualitative code generations (length 512), samples 1--2.}
    Continuous registers are sampled with \emph{ConThenDisc} (128 DDIM steps) and decoded with LLaDA (64 discrete denoising steps, single block).}
    \label{fig:len512_disc64_12}
\end{figure}

\begin{figure}[t!]\centering
    \begin{subfigure}[t]{0.49\textwidth}\centering
        \includegraphics[width=\linewidth,height=0.42\textheight,keepaspectratio]{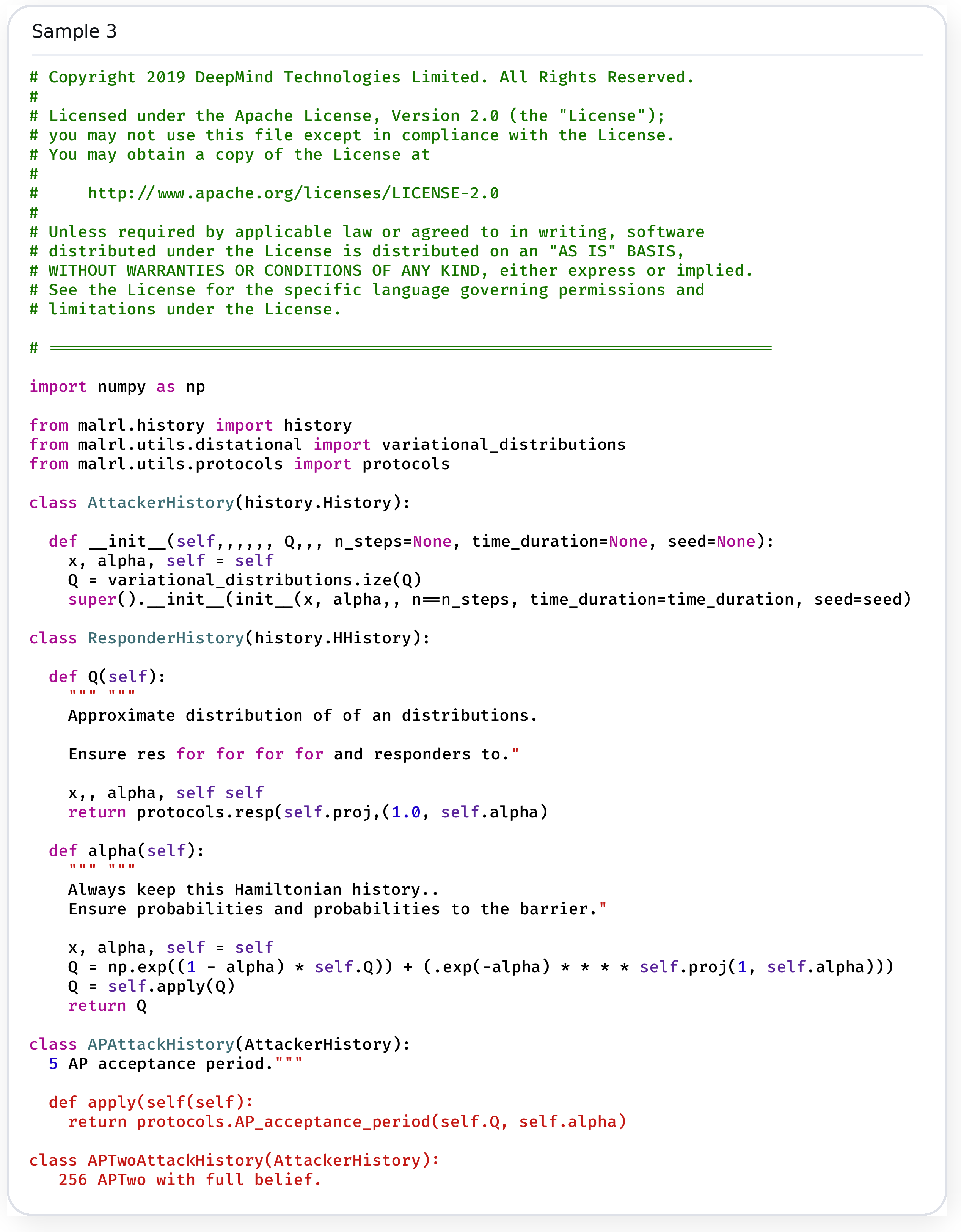}%
        \caption{Sample 1}\label{fig:len512_disc32_s1}
    \end{subfigure}\hfill
    \begin{subfigure}[t]{0.49\textwidth}\centering
        \includegraphics[width=\linewidth,height=0.42\textheight,keepaspectratio]{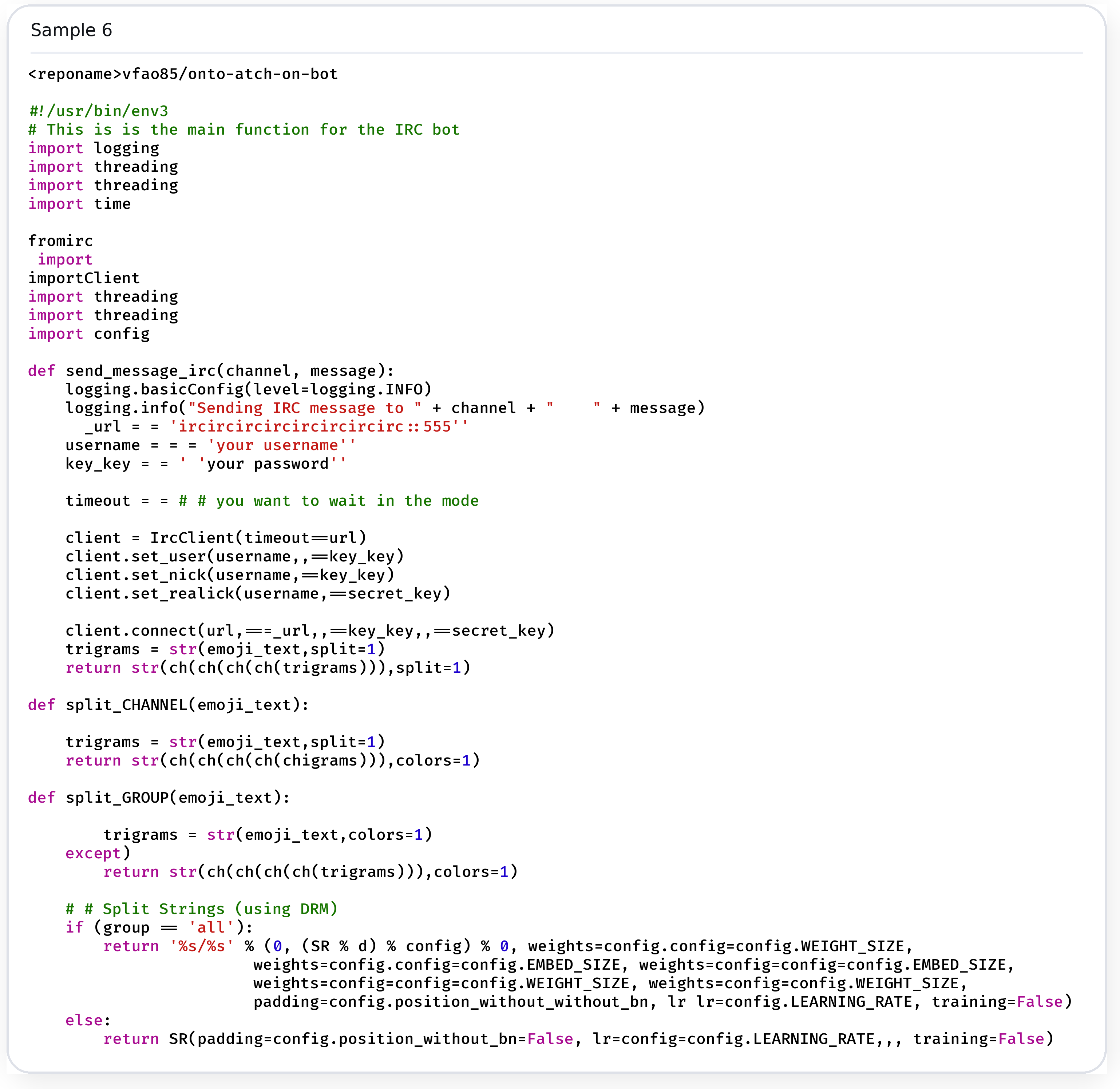}%
        \caption{Sample 2}\label{fig:len512_disc32_s2}
    \end{subfigure}
    \caption{\textbf{Qualitative code generations (length 512), samples 1--2.}
    Continuous registers are sampled with \emph{ConThenDisc} (128 DDIM steps) and decoded with LLaDA (32 discrete denoising steps, single block).}
    \label{fig:len512_disc32_12}
\end{figure}

\section{Extending the Proposed Algorithms to Conditional Text Synthesis}
\label{app: Handling Prompts}

The two presented text synthesis methods, \emph{ConThenDisc} and \emph{ConWithinDisc}, have been described in the context of unconditional generation, to sample from $P(\rvx)$. Here we discuss their possible extensions to conditional synthesis $P(\rvx|\rvc)$, i.e., responding to a given prompt $\rvc$. 

We start with \emph{ConThenDisc}, as described in Algorithm~\ref{alg:ConThenDisc-Inference}. 
Given a prompt $\rvc$, the generated latent $\rvz_0$ that initiates the generation process must take it into account in order to lead to a valid eventual answer. Therefore, the continuous diffusion algorithm $G_\psi(\eps)$ must be conditioned on $\rvc$. Here as well we propose to achieve this conditioning by embedding the prompt instead of working with it directly, implying that the prompt should be fed as a guidance to the diffusion's denoiser. Therefore, the training procedure, as described Algorithm~\ref{alg:ContinuousDiffusion1}, changes: line 5 in this algorithm becomes 
\begin{align}
\label{eq:LossConditionedGeneraiton1}
    \hat{\mathcal{L}}(\phi,\psi,\kappa) = \|g_\psi(\rvz_t,t,h_\kappa (\rvc)) - \rvz_0\|^2_2,
\end{align}
and the training can be done with respect to both $\psi$ (the denoiser parameters) and $\kappa$ (the prompt embedding parameters). Training of $h_\kappa (\cdot)$ can be initialized with $h_\phi (\cdot)$ -- the original embedding we started with. Indeed, we may consider also an option of avoiding its training altogether by simply assuming $h_\kappa (\cdot) = h_\phi (\cdot)$.

Once the above training is done, Algorithm~\ref{alg:ConThenDisc-Inference} (the \emph{ConThenDisc} synthesis algorithm) should change in the following three items: 
\begin{itemize}
\item Line 3 should become $\rvz_0 \leftarrow G_\psi(\eps,h_\kappa(\rvc))$, so that the continuous diffusion uses the prompt to gear the synthesis of the  latent $\rvz_0$, 

\item Line 5 changes to include the prompt as the prefix of $\rvx_t$, i.e. $\rvx_t:=[\rvc, \rvm]$. This way, the MDM part of this algorithm operates as usual with the guidance of $\rvz_0$, but also includes the prefix prompt as fixed set of tokens, masking only portions of the answer. 

\item Line 7 should change as well: The demasker should be retrained with a dataset of prompts and their answers, in order to take the available prompt into account. Referring to Figure~\ref{fig:TrainingMethod}, this training should be done by encoding the answer (or the prompt and the answer together) via $h_\phi(\cdot)$ and feeding it to the demasker for guidance. In addition, the demasker should get a token sequence that contains all the prompt and a partially masked answer, aiming to predict the masked tokens. 
\end{itemize}

Turning to the \emph{ConWithinDisc} algorithm, a similar line of changes is in order. More specifically, the already conditioned denoiser in the training Algorithm~\ref{alg:Continuous-Within-Discrete Training} should admit another guidance, $h_\kappa(\rvc)$. In its inference, as described in Algorithm~\ref{alg:Continuous-Within-Discrete}, the prompt should be included in both line 3 as a prefix to $\rvx_t$, and in line 6 within the inner continuous diffusion via $\rvz_0 \leftarrow G_\psi(\epsilon,h_\phi(\rvx_t),h_\kappa(\rvc))$. Finally, the prompt-aware guided demasker should be used in line 7.

\end{document}